\documentclass{article} %
\usepackage{iclr2024_conference,times}

\usepackage[pagebackref]{hyperref}       %

\usepackage{url}

\usepackage{wrapfig} %
\usepackage{subcaption} %
\usepackage{amsmath}
\usepackage{amssymb}

\usepackage{amsthm}
\usepackage[capitalize,noabbrev]{cleveref}
\theoremstyle{plain}
\newtheorem{theorem}{Theorem}[section]
\newtheorem{proposition}[theorem]{Proposition}
\newtheorem{lemma}[theorem]{Lemma}

\theoremstyle{definition}
\newtheorem{definition}[theorem]{Definition}

\theoremstyle{remark}
\newtheorem{remark}[theorem]{Remark}

\usepackage{nicefrac}
\usepackage{bbm} %
\usepackage{mathtools}
\usepackage{algorithm}
\usepackage{algorithmic}

\usepackage{colortbl}
\definecolor{Lightgray}{RGB}{235,235,235}
\usepackage{xcolor,colortbl}
\usepackage{multirow}
\usepackage{booktabs,nicematrix}

\definecolor{Gray}{gray}{0.85}
\definecolor{LightCyan}{rgb}{0.88,1,1}
\newcolumntype{a}{>{\columncolor{Gray}}c}
\newcolumntype{b}{>{\columncolor{white}}c}

\renewcommand*{\backref}[1]{}
\renewcommand*{\backrefalt}[4]{%
\ifcase #1 %
    No citations.%
\or
    (p. #2.)%
\else
    (pp. #2.)%
\fi}%
\let\hat\widehat
\let\tilde\widetilde

\def\given{{\,|\,}}
\def\biggiven{{\,\big|\,}}

\newcommand{\la}{\langle}
\newcommand{\ra}{\rangle}
\DeclareMathOperator{\Tr}{Tr}
\newcommand*{\argmax}{\mathop{\mathrm{argmax}}}

\usepackage{selectp}
\usepackage[numbers]{natbib}

\newcommand{\algname}{LMC-LSVI\,}
\newcommand{\algnameDeep}{Adam LMCDQN\,}

\title{Provable and Practical: Efficient Exploration in Reinforcement Learning via Langevin Monte Carlo}

\makeatletter
\newcommand{\printfnsymbol}[1]{%
  \textsuperscript{\@fnsymbol{#1}}%
}
\makeatother

\author{Haque Ishfaq\thanks{Equal contribution} \\
Mila, McGill University\\
\texttt{haque.ishfaq@mail.mcgill.ca} \\
\And
Qingfeng Lan\printfnsymbol{1} \\
University of Alberta, Amii \\
\texttt{qlan3@ualberta.ca} \\
\And
Pan Xu \\
Duke University \\
\And
A. Rupam Mahmood \\
University of Alberta \\
CIFAR AI Chair, Amii \\              
\And
Doina Precup \\
Mila, McGill University\\ Google DeepMind\\
\And
Anima Anandkumar \\
California Institute of Technology, Nvidia\\
\And 
Kamyar Azizzadenesheli \\
Nvidia
}

\iclrfinalcopy %
\begin{document}

\maketitle

\begin{abstract}
We present a scalable and effective exploration strategy based on Thompson sampling for reinforcement learning (RL). One of the key shortcomings of  existing Thompson sampling algorithms is the need to perform a Gaussian approximation of the posterior distribution, which is not a good surrogate in most practical settings. We instead directly sample the Q function from its posterior distribution, by using  Langevin Monte Carlo, an efficient type of Markov Chain Monte Carlo (MCMC) method. Our method only needs to perform noisy gradient descent updates to learn the exact posterior distribution of the Q function, which makes our approach easy to deploy in deep RL.  We provide a rigorous theoretical analysis for the proposed method and demonstrate that, in the linear Markov decision process (linear MDP) setting, it has a regret bound of $\tilde{O}(d^{3/2}H^{3/2}\sqrt{T})$, where $d$ is the dimension of the feature mapping, $H$ is the planning horizon, and $T$ is the total number of steps. We apply this approach to deep RL, by using Adam optimizer to perform gradient updates. Our approach achieves better or similar results compared with state-of-the-art deep RL algorithms on several challenging exploration tasks from the Atari57 suite.\footnote{Our code is available at \url{https://github.com/hmishfaq/LMC-LSVI}}
\end{abstract}

\section{Introduction}
Balancing exploration with exploitation is a fundamental problem in reinforcement learning (RL) \citep{sutton2018reinforcement}.
Numerous exploration algorithms have been proposed~\citep{jaksch2010near,osband2017posterior,ostrovski2017count,azizzadenesheli2018efficient,jin2018q}. However, there is a big discrepancy between provably efficient algorithms, which are typically limited to tabular or linear MDPs with a focus on achieving tighter regret bound, and more heuristic-based algorithms for exploration in deep RL, which scale well but have no guarantees.  

A generic and widely used solution to the exploration-exploitation dilemma is the use of optimism in the face of uncertainty (OFU) \citep{auer2002finite}. Most works of this type inject optimism through bonuses added to the rewards or estimated Q functions \citep{jaksch2010near,azar2017minimax, jin2018q,jin2019provably}. These bonuses, which are typically decreasing functions of counts on the number of visits of state-action pairs, allow the agent to build upper confidence bounds (UCBs) on the optimal Q functions and act greedily with respect to them. While UCB-based methods provide strong theoretical guarantees in tabular and linear settings, they often perform poorly in practice \citep{osband2013more,osband2017posterior}. Generalizations to non-tabular and non-linear settings have also been explored~\citep{bellemare2016unifying,tang2017exploration,ostrovski2017count, burda2018exploration}.

Inspired by the well-known Thompson sampling \citep{thompson1933likelihood} for multi-armed bandits, another line of work proposes posterior sampling for RL (PSRL) \citep{osband2013more, agrawal2017optimistic}, which maintains a posterior distribution over the MDP model parameters of the problem at hand. 
At the beginning of each episode, PSRL samples new parameters from this posterior, solves the sampled MDP, and follows its optimal policy until the end of the episode. 
However, generating exact posterior samples is only tractable in simple environments, such as tabular MDPs where Dirichlet priors can be used over transition probability distribution. Another closely related algorithm is randomized least-square value iteration (RLSVI), which induces exploration through noisy value iteration \citep{osband2016generalization,russo2019worst,ishfaq2021randomized}. Concretely, Gaussian noise is added to the reward before applying the Bellman update. This results in a Q function estimate that is equal to an empirical Bellman update with added Gaussian noise, which can be seen as approximating the posterior distribution of the Q function using a Gaussian distribution. However, in practical problems, Gaussian distributions may not be a good approximation of the true posterior of the Q function.
Moreover, choosing an appropriate variance is an onerous task; and unless the features are fixed, the incremental computation of the posterior distribution is not possible.

Algorithms based on Langevin dynamics are widely used for training neural networks in Bayesian settings \citep{welling2011bayesian}. For instance, by adding a small amount of exogenous noise, Langevin Monte Carlo (LMC) provides regularization and allows quantifying the degree of uncertainty on the parameters of the function approximator. Furthermore, the celebrated stochastic gradient descent, resembles a Langevin process  \citep{cheng2020stochastic}. Despite its huge influence in Bayesian deep learning, the application of LMC in sequential decision making problems is relatively unexplored. \citet{mazumdar2020thompson} proposed an LMC-based approximate Thompson sampling algorithm that achieves
optimal instance-dependent regret for the multi-armed bandit (MAB) problem.
Recently, \citet{xu2022langevin} used LMC to approximately sample model parameters from the posterior distribution in contextual bandits and showed that their approach can achieve the same regret bound as the best Thompson sampling algorithms for linear contextual bandits.
Motivated by the success of the LMC approach in bandit problems, in this paper, we study the use of LMC to approximate the posterior distribution of the Q function, and thus provide an exploration approach which is principled, maintains the simplicity and scalability of LMC, and can be easily applied in deep RL algorithms.

\textbf{Main contributions.} 
We propose a practical and efficient online RL algorithm, Langevin Monte Carlo Least-Squares Value Iteration (\algname), which simply performs noisy gradient descent updates to induce exploration. LMC-LSVI is easily implementable and can be used in high-dimensional RL tasks, such as image-based control. Along with providing empirical evaluation in the RiverSwim environment and simulated linear MDPs, we prove that LMC-LSVI achieves a $\Tilde{O}(d^{3/2}H^{3/2}\sqrt{T})$ regret in the linear MDP setting, where $d$ is the dimension of the feature mapping, $H$ is the planning horizon, and $T$ is the total number of steps. This bound provides the best possible dependency on  $d$ and $H$ for any known randomized algorithms and achieves sublinear regret in $T$.

Because preconditioned Langevin algorithms \citep{li2016preconditioned} can avoid pathological curvature problems and saddle points in the optimization landscape, we also propose Adam Langevin Monte Carlo Deep Q-Network (\algnameDeep), a preconditioned variant of \algname based on the Adam optimizer \citep{kingma2014adam}. In experiments on both $N$-chain~\citep{osband2016deep} and challenging Atari environments~\citep{bellemare2013arcade} that require deep exploration, Adam LMCDQN performs similarly or better than state-of-the-art exploration approaches in deep RL.

Unlike many other provably efficient algorithms with function approximations \citep{yang2020reinforcement,cai2020provably,zanette2019frequentist,xu2020finite,wu2020finite,ayoub2020model,zanette2020learning,zhou2021nearly, he2023nearly}, LMC-LSVI can easily be extended to deep RL settings (Adam LMCDQN). We emphasize that such unification of theory and practice is rare \citep{feng2021provably,kitamura2023regularization,liu2023maximize} in the current literature of both theoretical RL and deep RL.

\section{Preliminary}
\textbf{Notation.} For any positive integer $n$, we denote the set $\{1, 2, \ldots, n\}$ by $[n]$. For any set $A$, $\langle \cdot, \cdot \rangle_A$ denotes the inner product over set $A$. $\odot$ and $\oslash$ represent element-wise vector product and division respectively. For function growth, we use $\Tilde{\mathcal{O}}(\cdot)$, ignoring poly-logarithmic factors. 

We consider an episodic discrete-time Markov decision process (MDP) of the form $(\mathcal{S}, \mathcal{A}, H, \mathbb{P}, r)$ where $\mathcal{S}$ is the state space, $\mathcal{A}$ is the action space, $H$ is the episode length, $\mathbb{P} = \{\mathbb{P}_h\}_{h=1}^H$ are the state transition probability distributions, and $r = \{r_h\}_{h=1}^H$ are the reward functions. Moreover, for each $h \in [H]$, $\mathbb{P}_h(\cdot \mid x, a)$ denotes the transition kernel at step $h \in [H]$, which defines a non-stationary environment. $r_h: \mathcal{S} \times \mathcal{A} \rightarrow [0,1]$ is the deterministic reward function at step $h$.\footnote{We study the deterministic reward functions for notational simplicity. Our results can be easily generalized to the case when rewards are stochastic.}
A policy $\pi$ is a collection of $H$ functions $\{\pi_h : \mathcal{S} \rightarrow \mathcal{A}\}_{h\in [H]}$ where $\pi_h(x)$ is the action that the agent takes in state $x$ at the $h$-th step in the episode.
Moreover, for each $h \in [H]$, we define the value function $V_h^\pi: \mathcal{S} \rightarrow \mathbb{R}$ as the expected value of cumulative rewards received under policy $\pi$ when starting from an arbitrary state $x_h = x$ at the $h$-th time step. In particular, we have
\begin{equation*}
    V_h^\pi(x) = \textstyle{\mathbb{E}_\pi \big[\sum_{h'=h}^H r_{h'}(x_{h'},a_{h'}) \biggiven x_h = x\big].}
\end{equation*}
Similarly, we define the action-value function (or the Q function) $Q_h^\pi: \mathcal{S}\times \mathcal{A} \rightarrow \mathbb{R}$ as the expected value of cumulative rewards given the current state and action where the agent follows policy $\pi$ afterwards. Concretely, 
\begin{equation*}
    Q_h^\pi(x,a) = \textstyle{\mathbb{E}_\pi \big[\sum_{h'=h}^H r_{h'}(x_{h'},a_{h'}) \biggiven x_h = x, a_h = a\big]}.
\end{equation*}
We denote $V_h^*(x) = V_h^{\pi^*}(x)$ and $Q_h^*(x, a) = Q_h^{\pi^*}(x, a)$ where $\pi^*$ is the optimal policy. To simplify notation, we denote $[\mathbb{P}_h V_{h+1}](x,a) = \mathbb{E}_{x' \sim \mathbb{P}_h(\cdot \given x, a)} V_{h+1}(x')$. Thus, we write the Bellman equation associated with a policy $\pi$ as
\begin{equation}\label{eq:bellman-eq}
    Q_h^\pi(x,a) = (r_h + \mathbb{P}_h V_{h+1}^\pi)(x, a),\qquad
    V_h^\pi(x) = Q_h^\pi(x, \pi_h(x)),\qquad 
    V_{H+1}^\pi(x)  = 0.
\end{equation}
Similarly, the Bellman optimality equation is 
\begin{equation} \label{eq:bellman-opt-eq} 
    Q_h^*(x,a) = (r_h + \mathbb{P}_h V_{h+1}^*)(x, a),\qquad V_h^*(x) = Q_h^*(x, \pi_h^*(x)),\qquad V_{H+1}^*(x) = 0. 
\end{equation}
The agent interacts with the environment for $K$ episodes with the aim of learning the optimal policy. At the beginning of each episode $k$, an adversary picks the initial state $x_1^k$, and the agent chooses a policy $\pi^k$. We measure the suboptimality of an agent by the total  regret  defined as
\begin{equation*}
    \text{Regret}(K) = \textstyle{\sum_{k=1}^K \big[V_1^*(x_1^k) - V_1^{\pi^k}(x_1^k)\big]}.
\end{equation*}

\textbf{Langevin Monte Carlo (LMC).} LMC is an iterative algorithm \citep{rossky1978brownian, roberts2002langevin, neal2011mcmc}, which adds isotropic Gaussian noise to the gradient descent update at each step:
\begin{equation}
    w_{k+1} = w_k - \eta_k \nabla L(w_k) + \sqrt{2\eta_k \beta^{-1}}\epsilon_k,
\end{equation}
where $L(w)$ is the objective function, $\eta_k$ is the step-size parameter, $\beta$ is the inverse temperature parameter, and $\epsilon_k$ is an isotropic Gaussian random vector in $\mathbb{R}^d$. 
Under certain assumptions, the LMC update will generate a Markov chain whose distribution converges to a target distribution $\propto\exp(-\beta L(w))$ \citep{roberts1996exponential,bakry2014analysis}. In practice, one can also replace the true gradient  $\nabla L(w_k)$ with some stochastic gradient estimators, resulting in the famous stochastic gradient Langevin dynamics (SGLD) \citep{welling2011bayesian} algorithm.

\section{Langevin Monte Carlo for Reinforcement Learning}
In this section, we propose Langevin Monte Carlo Least-Squares Value Iteration (\algname), as shown in~\cref{Algorithm:LMC-TS}. Assume we have collected data trajectories in the first $k-1$ episodes as $\{(x_1^{\tau},a_1^{\tau},r(x_1^{\tau},a_1^{\tau}),\ldots,x_H^{\tau},a_H^{\tau},r(x_H^{\tau},a_H^{\tau}))\}_{\tau=1}^{k-1}$. To estimate the Q function for stage $h$ at the $k$-th episode of the learning process, we define the following loss function:
\begin{equation}\label{Eq:loss_function_definition}
        L_h^k(w_h) = \textstyle{\sum_{\tau = 1}^{k-1} \big[r_h(x_h^\tau, a_h^\tau) + \max_{a\in \mathcal{A}} Q^k_{h+1}(x_{h+1}^\tau, a) - Q(w_h;\phi(x_h^\tau, a_h^\tau))\big]^2 }+ \lambda\|w_h\|^2,
\end{equation}
where $\phi(\cdot,\cdot)$ is a feature vector of the corresponding state-action pair and $Q(w_h;\phi(x_h^\tau, a_h^\tau))$ denotes any possible approximation of the Q function that is parameterized by $w_h$ and takes $\phi(x_h^\tau, a_h^\tau)$ as input.
At stage $h$, we perform noisy gradient descent on $L_h^k(\cdot)$ for $J_k$ times as shown in \cref{Algorithm:LMC-TS}, where $J_k$ is also referred to as the update number for episode $k$.
Note that the \algname algorithm displayed here is a generic one, which works for all types of function approximation of the Q function. Similar to the specification of Langevin Monte Carlo Thompson Sampling (LMCTS) to linear bandits, generalized linear bandits, and neural contextual bandits \citep{xu2022langevin}, we can also derive different variants of \algname for different types of function approximations by replacing the functions $Q(w_h;\phi(x_h^\tau, a_h^\tau))$ and the loss function $L_h^k(w_h)$.

In this paper, we will derive the theoretical analysis of \algname under linear function approximations. In particular, when the function approximation of the Q function is linear, the model approximation of the Q function, denoted by $Q_h^k$ in Line 11 of \cref{Algorithm:LMC-TS} becomes 
\begin{equation}\label{Eq:linear-Q}
    Q^k_{h}(\cdot,\cdot) \leftarrow \min\{\phi(\cdot, \cdot)^\top w_h^{k,J_k} , H -h +1\}^{+}.
\end{equation}
Denoting $V_{h+1}^k(\cdot) = \max_{a\in \mathcal{A}}Q_{h+1}^k(\cdot, a)$, we have $\nabla L^k_h(w_h) = 2( \Lambda_h^k w_h - b^k_h)$, where
\begin{equation}\label{Eq:design_matrix}
    \Lambda_h^k = \sum_{\tau=1}^{k-1}\phi(x_h^\tau,a_h^\tau)\phi(x_h^\tau,a_h^\tau)^\top +  \lambda{I} \text{ and }
    b^k_h = \sum_{\tau=1}^{k-1} \left[ r_h(x_h^\tau, a_h^\tau) +  V^k_{h+1}(x_{h+1}^\tau)\right]\phi(x_h^\tau, a_h^\tau).
\end{equation}

By setting $\nabla L^k_h(w_h) = 0$, we get the minimizer of $L^k_h$ as $\hat{w}_h^k=(\Lambda_h^k)^{-1}b_h^k$.

We can prove that the iterate $w_h^{k,J_k}$ in \cref{Eq:linear-Q} follows the following Gaussian distribution. 
\begin{proposition}\label{Prop:main_paper_w_gaussian}

    The parameter $w_h^{k,J_k}$ used in episode $k$ of Algorithm \ref{Algorithm:LMC-TS} follows a Gaussian distribution $\mathcal{N}(\mu_h^{k,J_k}, \Sigma_h^{k,J_k})$, with mean and covariance matrix:
    \begin{align*}
         \mu_h^{k,J_k} &= A_k^{J_k} \ldots A_1^{J_1} w_h^{1,0} + \sum_{i=1}^k A_k^{J_k}\ldots A_{i+1}^{J_{i+1}} \big( I - A_i^{J_i}\big) \hat{w}_h^i,\\
         \Sigma_h^{k,J_k} &=
         \sum_{i=1}^k \frac{1}{\beta_i} A_k^{J_k}\ldots A_{i+1}^{J_{i+1}} \big(I - A_i^{2J_i}\big)\left(\Lambda_h^i\right)^{-1} \left(I + A_i\right)^{-1} A_{i+1}^{J_{i+1}}\ldots A_k^{J_k}, 
    \end{align*}
where $A_i = I - 2\eta_i \Lambda_h^i$ for $i \in [k]$.
\end{proposition}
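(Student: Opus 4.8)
The plan is to recognize that, in the linear case, each Langevin update is an affine function of the previous iterate plus independent Gaussian noise, so the entire trajectory is a time‑inhomogeneous Gaussian linear recursion; Gaussianity is then automatic and only the mean and covariance require computation. Concretely, substituting $\nabla L_h^k(w_h) = 2(\Lambda_h^k w_h - b_h^k)$ into the LMC update and using $b_h^i = \Lambda_h^i \hat{w}_h^i$ together with $A_i = I - 2\eta_i \Lambda_h^i$, the $j$‑th within‑episode update of episode $i$ becomes
\[
w_h^{i,j+1} = A_i\, w_h^{i,j} + (I - A_i)\hat{w}_h^i + \sqrt{2\eta_i/\beta_i}\,\epsilon_{i,j},
\]
with the $\epsilon_{i,j}$ independent standard Gaussians, $A_i$ (and $\Lambda_h^i$) held fixed throughout episode $i$, and the episodes linked by the boundary condition $w_h^{i,0} = w_h^{i-1,J_{i-1}}$. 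Since $w_h^{1,0}$ is deterministic and Gaussians are closed under affine maps and independent Gaussian additions, $w_h^{k,J_k}$ is Gaussian, and it remains to identify its first two moments.

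For the mean I would take expectations to kill the noise, giving the deterministic recursion $\mu_h^{i,j+1} = A_i \mu_h^{i,j} + (I-A_i)\hat{w}_h^i$. Iterating $J_i$ times within episode $i$ and collapsing the constant term via the geometric identity $\sum_{l=0}^{J_i-1}A_i^l = (I - A_i^{J_i})(I-A_i)^{-1}$ yields the one‑episode map $\mu_h^{i,J_i} = A_i^{J_i}\mu_h^{i,0} + (I - A_i^{J_i})\hat{w}_h^i$. Chaining these maps across episodes by an induction on $k$, substituting $\mu_h^{i,0} = \mu_h^{i-1,J_{i-1}}$, produces exactly the telescoping product claimed for $\mu_h^{k,J_k}$.

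For the covariance, since the mean is already accounted for, I would track only the injected noise. Holding $w_h^{i,0}$ fixed, the accumulated noise at the end of episode $i$ is $\sqrt{2\eta_i/\beta_i}\sum_{l=0}^{J_i-1} A_i^{\,l}\epsilon_{i,\cdot}$; because $\Lambda_h^i$ is symmetric so is $A_i$, and the $\epsilon$'s are i.i.d., so its covariance is $\frac{2\eta_i}{\beta_i}\sum_{l=0}^{J_i-1}A_i^{2l} = \frac{2\eta_i}{\beta_i}(I-A_i^{2J_i})(I-A_i^2)^{-1}$. Using $I - A_i = 2\eta_i\Lambda_h^i$ and $I - A_i^2 = (I-A_i)(I+A_i)$ (all functions of $\Lambda_h^i$, hence commuting) this simplifies to $\frac{1}{\beta_i}(I - A_i^{2J_i})(\Lambda_h^i)^{-1}(I+A_i)^{-1}$, which is precisely the $i=k$ summand. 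The noise of distinct episodes and distinct steps is mutually independent, and episode‑$i$ noise reaches the final iterate only through the linear propagation map $A_k^{J_k}\cdots A_{i+1}^{J_{i+1}}$, whose transpose is the order‑reversed product $A_{i+1}^{J_{i+1}}\cdots A_k^{J_k}$ by symmetry of each $A_j$; propagating each within‑episode covariance accordingly and summing over $i$ gives the stated formula.

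I expect the main obstacle to be bookkeeping rather than any deep idea: because the $A_i$ for different episodes need not commute (the $\Lambda_h^i$ differ across episodes), one must keep the propagation products in the correct left/right order and verify the transpose structure on both sides term by term, and one must invoke the step‑size condition ensuring $I-A_i^2$ (equivalently $I+A_i$) is invertible—which is compatible with $\Lambda_h^i \succeq \lambda I \succ 0$ forcing $I-A_i = 2\eta_i\Lambda_h^i \succ 0$—so that the geometric‑series simplification is legitimate.
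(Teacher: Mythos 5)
Your proposal is correct and follows essentially the same route as the paper's proof: unroll the linear Gaussian recursion within each episode, collapse the drift via the geometric identity $\sum_{l=0}^{J_i-1}A_i^l=(I-A_i^{J_i})(I-A_i)^{-1}$ using $b_h^i=\Lambda_h^i\hat w_h^i$, chain episodes through the warm start, and propagate the independent per-episode noise covariances through the symmetric products $A_k^{J_k}\cdots A_{i+1}^{J_{i+1}}$. The only cosmetic difference is that you derive the mean by taking expectations of the recursion rather than reading it off the fully unrolled affine representation, which changes nothing of substance.
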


Proposition \ref{Prop:main_paper_w_gaussian} shows that in linear setting the parameter $w_h^{k,J_k}$ follows a tractable distribution. This allows us to provide a high probability bound for the parameter $w_h^{k,J_k}$ in Lemma \ref{Lemma:bound-2norm-wkjk}, which is then used in Lemma \ref{lemma:bound-on-l} to show that the estimated $Q_h^k$ function is optimistic with high probability.

We note that the parameter update in Algorithm \ref{Algorithm:LMC-TS} is presented as a full gradient descent step plus an isotropic noise for the purpose of theoretical analysis in Section \ref{Section:theoretical-analysis}. However, in practice, one can use a stochastic gradient \citep{welling2011bayesian,zou2021faster} or a variance-reduced stochastic gradient \citep{dubey2016variance, xu2018global,zou2018subsampled,zou2019sampling} of the loss function $L_h^k(w_h^{k,j-1})$ to improve the sample efficiency of \algname.

\begin{algorithm}[t]
\caption{Langevin Monte Carlo Least-Squares Value Iteration (\algname)}\label{Algorithm:LMC-TS}
\begin{algorithmic}[1]
\STATE Input: step sizes $\{\eta_k > 0\}_{k\geq 1}$, inverse temperature $\{\beta_k\}_{k\geq 1}$, loss function $L_k(w)$ 
\STATE Initialize $w_h^{1,0} =  \textbf{0}$ for $h \in [H]$, $J_0 = 0$
\FOR{episode $k=1,2,\ldots, K$}
\STATE Receive the initial state $s_1^k$

\FOR{step $h=H, H-1,\ldots, 1$}
    \STATE $w_h^{k,0} = w_h^{k-1, J_{k-1}}$\label{line:warm-start}
    \FOR{$j = 1, \ldots, J_k$}
        \STATE $\textcolor{red}{\epsilon_h^{k,j} \sim \mathcal{N}(0, I)}$
        \STATE $w_h^{k,j} = w_h^{k,j-1} - \eta_k \nabla L_h^k(w_h^{k,j-1}) + \sqrt{2\eta_k \beta_k^{-1}}\textcolor{red}{\epsilon_h^{k,j}}$\label{line:w-update}
    \ENDFOR

\STATE  $Q^k_{h}(\cdot,\cdot) \leftarrow \min\{Q(w_h^{k,J_k};\phi(\cdot, \cdot)) , H -h +1\}^{+}$  \label{algline:q-update}
\STATE  $V^k_{h}(\cdot) \leftarrow \max_{a \in \mathcal{A}} Q_{h}^{k}(\cdot,a)$ \label{Alg:min-for-Q}
\ENDFOR

\FOR{step $h=1, 2, \ldots, H$}

\STATE Take  action  $a^k_{h} \leftarrow \argmax_{a \in \mathcal{A}} Q_h^{k}(s_h^k,a)$, observe reward $r^k_{h}(s_h^k,a_h^k)$ and next state $s^k_{h+1}$ 
\ENDFOR
\ENDFOR
\end{algorithmic}
\end{algorithm}
 
\section{Theoretical Analysis}\label{Section:theoretical-analysis}
We now provide a regret analysis of \algname under the linear MDP setting \citep{jin2019provably, yang2020reinforcement, yang2019sample}.
First, we formally define a linear MDP.
\begin{definition}[Linear MDP]\label{definition-linear-MDP}
    A linear MDP is an MDP $(\mathcal{S}, \mathcal{A}, H, \mathbb{P}, r)$ with a feature $\phi: \mathcal{S} \times \mathcal{A} \rightarrow \mathbb{R}^d$, if for any $h \in [H]$, there exist $d$ unknown (signed) measures $\mu_h = (\mu_h^{(1)}, \mu_h^{(1)}, \ldots, \mu_h^{(d)})$ over $\mathcal{S}$ and an unknown vector $\theta_h \in \mathbb{R}^d$, such that for any $(x,a) \in \mathcal{S}\times \mathcal{A}$, we have
    \begin{equation*}
        \mathbb{P}_h(\cdot \mid x, a) = \langle \phi(x,a), \mu_h(\cdot)\rangle \text{ and }
        r_h(x,a) = \langle \phi(x,a), \theta_h \rangle.
    \end{equation*}
Without loss of generality, we assume $\|\phi(x,a)\|_2 \leq 1$ for all $(x,a) \in \mathcal{S}\times\mathcal{A}$, and $\max\{\|\mu_h(\mathcal{S})\|_2, \|\theta_h\|_2\} \leq \sqrt{d}$ for all $h \in [H]$.
\end{definition}

We refer the readers to \citet{wang2020reinforcement}, \citet{lattimore2020learning}, and \citet{van2019comments} for related discussions on such a linear representation.
Next, we introduce our main theorem.

\begin{theorem}\label{theorem:main-theorem-main-paper}
    Let $\lambda = 1$ in \Cref{Eq:loss_function_definition}, $\frac{1}{\sqrt{\beta_k}} = \Tilde{O}(H\sqrt{d})$ in Algorithm \ref{Algorithm:LMC-TS}, and $\delta \in (\frac{1}{2\sqrt{2e\pi}},1)$. For any $k \in [K]$, let the learning rate $\eta_k = 1/(4\lambda_{\max}(\Lambda_h^k))$, the update number $J_k = 2\kappa_k \log(4HKd)$ where $\kappa_k = \lambda_{\max}(\Lambda_h^k)/\lambda_{\min}(\Lambda_h^k)$ is the condition number of $\Lambda_h^k$. Under Definition \ref{definition-linear-MDP}, the regret of Algorithm \ref{Algorithm:LMC-TS} satisfies
    \begin{equation*}
        \text{Regret}(K) = \Tilde{O}(d^{3/2}H^{3/2}\sqrt{T}),
    \end{equation*}
    with probability at least $1-\delta$.
\end{theorem}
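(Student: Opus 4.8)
The plan is to adapt the regret analysis of randomized least-squares value iteration (RLSVI) to the Langevin setting, using the exact Gaussian law of the iterate from \Cref{Prop:main_paper_w_gaussian} in place of the injected reward noise of RLSVI. Define the one-step Bellman residual of the estimated value,
\[
\ell_h^k(x,a) = \big(r_h + \mathbb{P}_h V_{h+1}^k\big)(x,a) - Q_h^k(x,a).
\]
Since $\pi^k$ is greedy with respect to $Q^k$, a value-difference identity gives, per episode,
\[
V_1^*(x_1^k) - V_1^{\pi^k}(x_1^k) = \underbrace{\big[V_1^*(x_1^k) - V_1^k(x_1^k)\big]}_{\text{optimism gap}} - \sum_{h=1}^H \mathbb{E}_{\pi^k}\big[\ell_h^k(x_h,a_h)\big],
\]
and replacing the conditional expectation by the realized trajectory introduces a martingale that I would bound by Azuma--Hoeffding at scale $\tilde{O}(H\sqrt{T})$ (values are $O(H)$, with $T=KH$ steps). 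The analysis then reduces to controlling the on-policy residuals $\ell_h^k(x_h^k,a_h^k)$ and the optimism gap.

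To bound $\ell_h^k$ I would first convert \Cref{Prop:main_paper_w_gaussian} into the quantitative statements of \Cref{Lemma:bound-2norm-wkjk} and \Cref{lemma:bound-on-l}. With $\eta_k = 1/(4\lambda_{\max}(\Lambda_h^k))$ each $A_i = I - 2\eta_i\Lambda_h^i$ has spectral radius at most $1 - 1/(2\kappa_i)$, so $J_k = 2\kappa_k\log(4HKd)$ forces $\|A_i^{J_i}\|_2 \le (4HKd)^{-1}$. Substituting into the mean and covariance formulas, the telescoping products with $i<k$ are annihilated by the leading $A_k^{J_k}$ factor, leaving $\mu_h^{k,J_k} = \hat{w}_h^k + O((HKd)^{-1})$ and $\Sigma_h^{k,J_k} \asymp \beta_k^{-1}(\Lambda_h^k)^{-1}$. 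A Gaussian-norm tail bound on $\|w_h^{k,J_k}-\mu_h^{k,J_k}\|_{\Lambda_h^k}$, combined with the standard self-normalized (ridge) bound on $\|\hat{w}_h^k - w_h^*\|_{\Lambda_h^k}$ — whose extra $\sqrt{d}$ factor comes from a covering-number argument over the class of value functions $V_{h+1}^k$ — then yields, on a high-probability event, $\|w_h^{k,J_k}-w_h^*\|_{\Lambda_h^k} = \tilde{O}(dH)$, hence $|\ell_h^k(x,a)| = \tilde{O}(dH)\,\|\phi(x,a)\|_{(\Lambda_h^k)^{-1}}$. The prescribed scale $\beta_k^{-1/2} = \tilde{O}(H\sqrt{d})$ is chosen precisely so that the injected-noise standard deviation matches this statistical width.

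The optimism gap is where the constraint $\delta > \tfrac{1}{2\sqrt{2e\pi}}$ appears. Since $\phi(x,a)^\top w_h^{k,J_k}$ is Gaussian with mean near $\phi^\top\hat{w}_h^k$ and standard deviation $\asymp\beta_k^{-1/2}\|\phi\|_{(\Lambda_h^k)^{-1}}$, the Gaussian tail bound $\Phi^c(1) \ge \tfrac{1}{2\sqrt{2e\pi}}$ (with $\Phi^c$ the standard-normal survival function) shows that, conditioned on the concentration event, $V_1^k(x_1^k)\ge V_1^*(x_1^k)$ with probability at least an absolute constant $p$. I would then dispatch the probabilistic optimism as in the randomized-LSVI literature (cf.\ \citet{ishfaq2021randomized}), bounding $\sum_k[V_1^*(x_1^k)-V_1^k(x_1^k)]$ by $p^{-1}$ times the same one-step width sum plus a lower-order martingale. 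Both the on-policy residuals and the optimism gap thus reduce to $\tilde{O}(dH)\sum_{k=1}^K\sum_{h=1}^H\|\phi(x_h^k,a_h^k)\|_{(\Lambda_h^k)^{-1}}$, which the elliptical-potential (log-determinant) lemma bounds: $\sum_k\|\phi\|_{(\Lambda_h^k)^{-1}}^2 = \tilde{O}(d)$ for each $h$, so Cauchy--Schwarz and summation over $h$ give $\tilde{O}(H\sqrt{dK}) = \tilde{O}(\sqrt{dHT})$. Multiplying by the width $\tilde{O}(dH)$ produces $\tilde{O}(d^{3/2}H^{3/2}\sqrt{T})$, which dominates the $\tilde{O}(H\sqrt{T})$ martingale terms.

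I expect the main obstacle to be the covariance control in the second step. Unlike RLSVI, where the posterior covariance is $\beta^{-1}(\Lambda)^{-1}$ by construction, here the finite Langevin horizon and the per-episode warm-starting (Line~\ref{line:warm-start}) make $\Sigma_h^{k,J_k}$ a sum of conjugated matrices indexed by all past episodes, and one must verify that the single pair $(\eta_k,J_k)$ keeps the covariance simultaneously large enough for the anti-concentration that drives optimism and small enough, with mean close enough to $\hat{w}_h^k$, to preserve the $\tilde{O}(dH)$ width. Establishing that the geometric contraction $\|A_i^{J_i}\|_2\le(4HKd)^{-1}$ indeed collapses both formulas to their RLSVI-like surrogates, uniformly over $h$ and $k$, is the delicate quantitative heart of the argument.
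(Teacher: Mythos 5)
Your proposal reproduces the paper's argument in all of its essential components: the exact Gaussian law of the LMC iterate (\Cref{Prop:main_paper_w_gaussian}), the choice $\eta_k=1/(4\lambda_{\max}(\Lambda_h^k))$ and $J_k=2\kappa_k\log(4HKd)$ so that $\|A_i^{J_i}\|_2\le (4HKd)^{-1}$ collapses the warm-start terms and leaves $\mu_h^{k,J_k}\approx \hat w_h^k$ and $\phi^\top\Sigma_h^{k,J_k}\phi \asymp \beta_K^{-1}\|\phi\|^2_{(\Lambda_h^k)^{-1}}$ (this is exactly the content of \Cref{Lemma:bound-2norm-wkjk}, \Cref{Lemma:w-hatw} and the two-sided covariance bounds inside \Cref{lemma:bound-on-l}); the self-normalized bound with a covering argument over the truncated value class giving the $\tilde O(H\sqrt d)$ statistical width (\Cref{Lemma:self-normalized-mdp,Lemma:what-r-PV}); the matching $\beta_k^{-1/2}=\tilde O(H\sqrt d)$; Gaussian anti-concentration giving optimism with probability $\tfrac{1}{2\sqrt{2e\pi}}$; and the elliptical-potential/Cauchy--Schwarz summation yielding $\tilde O(d^{3/2}H^{3/2}\sqrt T)$. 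The regret decomposition you use (optimism gap plus on-policy residuals plus an Azuma martingale) is a cosmetic variant of the four-term decomposition from \citet{cai2020provably} that the paper uses; they reduce to the same quantities.

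The one genuine divergence is how you dispatch the optimism gap, and it is worth being precise about it. The paper does \emph{not} use the $p^{-1}$ expected-optimism argument you propose: it simply conditions on the global event $\{l_h^k(x,a)\le 0 \ \forall h,k,x,a\}$, which \Cref{lemma:bound-on-l} guarantees only with the constant probability $\tfrac{1}{2\sqrt{2e\pi}}$, and this conditioning is precisely what produces the restriction $\delta\in(\tfrac{1}{2\sqrt{2e\pi}},1)$ in the statement you are proving; under that event the term $\mathbb{E}_{\pi^*}[l_h^k]$ is nonpositive and nothing further is needed. Your $p^{-1}$ route would prove something stronger (arbitrary $\delta$), but it is not free here: the resampling argument from the RLSVI literature requires the optimism probability \emph{conditioned on the history at episode $k$}, whereas the anti-concentration established in \Cref{lemma:bound-on-l} is for the marginal law of $w_h^{k,J_k}$, whose covariance aggregates noise from all past episodes through the warm start in Line~\ref{line:warm-start} — the conditional covariance given $\mathcal{F}_{k-1}$ is strictly smaller — and the comparison step involves value functions of a fictitious resampled policy, which does not reduce to ``the same one-step width sum'' along the realized trajectory without additional work. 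The paper's own remedy for arbitrarily small $\delta$ is instead the multi-sampling variant of \Cref{Section:multi-sample}, which boosts the per-state optimism probability to $1-\delta/(HK)$ by taking a max over $M=O(\log(HK/\delta))$ independent draws. For the theorem as stated, you should fall back to the paper's simpler conditioning; if you insist on the $p^{-1}$ route you must supply the conditional anti-concentration and the resampling comparison, neither of which is in your outline.
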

We compare the regret bound of our algorithm with the state-of-the-art results in the literature of theoretical reinforcement learning in Table \ref{table:bounds}.
Compared to the lower bound $\Omega(dH\sqrt{T})$ proved in \citet{zhou2021nearly}, our regret bound is worse off by a factor of $\sqrt{dH}$ under the linear MDP setting. However, the gap of $\sqrt{d}$ in worst-case regret between UCB and TS-based methods is a long standing open problem, even in a simpler setting of linear bandit \citep{hamidi2020worst}. When converted to linear bandits by setting $H=1$, our regret bound matches that of LMCTS \citep{xu2022langevin} and the best-known regret upper bound for LinTS from \citet{agrawal2013thompson} and \citet{abeille2017linear}.

\begin{remark}\label{Remark:multi-sample}
    In \Cref{theorem:main-theorem-main-paper}, we require that the failure probability $\delta > \frac{1}{2\sqrt{2e\pi}}$. However, in frequentist regret analysis it is desirable that the regret bound holds for arbitrarily small failure probability. This arises from \cref{lemma:bound-on-l}, where we get an optimistic estimation with a constant probability. However, this result can be improved by using optimistic reward sampling scheme proposed in \citet{ishfaq2021randomized}. Concretely, we can generate $M$ estimates for Q function $\{Q_h^{k, m}\}_{m \in [M]}$ through maintaining $M$ samples of $w$: $\{w_h^{k, J_k, m}\}_{m \in [M]}$. Then, we can make an optimistic estimate of Q function by setting $Q_h^{k}(\cdot, \cdot)= \min\{\max_{m \in [M]}\{Q_h^{k, m}(\cdot, \cdot)\}, H-h+1\}$. We provide the regret analysis for this approach, whose proof essentially follows the same steps as in that of \Cref{theorem:main-theorem-main-paper}, in \Cref{Section:multi-sample}. However, for the simplicity of the algorithm design, we use the currently proposed algorithm.
\end{remark}

\begin{table*}[!htpb]
\caption{Regret upper bound for episodic, non-stationary, linear
MDPs. Here, computational tractability refers to the ability of a computational problem to be solved in a reasonable amount of time using a feasible amount of computational resources.
\label{table:bounds}}
\begin{center}
\begin{small}
\begin{tabular}{l  a  b  a  b  a}
\hline
\rowcolor{LightCyan}&&&Computational\\
\rowcolor{LightCyan}\multirow{-2}{*}{Algorithm}& \multirow{-2}{*}{Regret} & \multirow{-2}{*}{Exploration} & Tractability \\ \hline
LSVI-UCB \citep{jin2019provably} & $\Tilde{\mathcal{O}}(d^{3/2} H^{3/2} \sqrt{T} )$ &  UCB & Yes \\ \hline
OPT-RLSVI \citep{zanette2019frequentist} & $\Tilde{\mathcal{O}}(d^2 H^2 \sqrt{T} )$ &  TS & Yes\\ \hline

ELEANOR \citep{zanette2020learning} & $\Tilde{\mathcal{O}}(dH^{3/2}\sqrt{T})$ & Optimism & No \\ \hline
LSVI-PHE \citep{ishfaq2021randomized} & $\Tilde{\mathcal{O}}(d^{3/2} H^{3/2} \sqrt{T})$ & TS & Yes \\\hline 
\algname (this paper) & $\Tilde{\mathcal{O}}(d^{3/2}H^{3/2}\sqrt{T})$ & LMC & Yes \\\hline 
\end{tabular}
\end{small}
\end{center}
\end{table*}

\section{Deep Q-Network with LMC Exploration}

In this section, we investigate the case where deep Q-networks (DQNs) \citep{mnih2015human} are used, which is used as the backbone of many deep RL algorithms and prevalent in real-world RL applications due to its scalability and implementation ease.

While LMC and SGLD have been shown to converge to the true posterior under idealized settings \citep{chen2015convergence, teh2016consistency,dalalyan2017theoretical}, in practice, most deep neural networks often exhibit pathological curvature and saddle points \citep{dauphin2014identifying}, which render the first-order gradient-based algorithms inefficient, such as SGLD. To mitigate this issue, \citet{li2016preconditioned} proposed RMSprop \citep{tieleman2012lecture} based preconditioned SGLD. Similarly, \citet{kim2022stochastic} proposed Adam based adaptive SGLD algorithm, where an adaptively adjusted bias term is included in the drift function to enhance escape from saddle points and accelerate the convergence in the presence of pathological curvatures. 

Similarly, in sequential decision problems, there have been studies that show that deep RL algorithms suffer from training instability due to the usage of deep neural networks \citep{sinha2020d2rl, ota2021training,sullivan2022cliff}. \citet{henderson2018did} empirically analyzed the effects of different adaptive gradient descent optimizers on the performance of deep RL algorithms and suggest that while being sensitive to the learning rate, RMSProp or Adam \citep{kingma2014adam} provides the best performance overall. Moreover, even though the original DQN algorithm \citep{mnih2015human} used RMSProp optimizer with Huber loss, \citet{ceron2021revisiting} showed that Adam optimizer with mean-squared error (MSE) loss provides overwhelmingly superior performance. 

Motivated by these developments both in the sampling community and the deep RL community, we now endow DQN-style algorithms \citep{mnih2015human} with Langevin Monte Carlo.
In particular, we propose Adam Langevin Monte Carlo Deep Q-Network (\algnameDeep) in \cref{Algorithm:aSGLD}, where we replace LMC in \cref{Algorithm:LMC-TS} with the Adam SGLD (aSGLD) \citep{kim2022stochastic} algorithm in learning the posterior distribution.

In Algorithm \ref{Algorithm:aSGLD}, $\nabla \tilde{L}_h^k(w)$ denotes an estimate of $\nabla L_h^k(w)$ based on one mini-batch of data sampled from the replay buffer. $\alpha_1$ and $\alpha_2$ are smoothing factors for the first and second moments of stochastic gradients, respectively. $a$ is the bias factor and $\lambda_1$ is a small constant added to avoid zero-divisors. Here, $v_h^{k,j}$ can be viewed as an approximator of the true second-moment matrix $\mathbb{E}(\nabla \tilde{L}_h^k(w_h^{k,j-1})\nabla \tilde{L}_h^k(w_h^{k,j-1})^\top)$ and the bias term $m_h^{k,j-1}\oslash\sqrt{v_h^{k,j-1} + \lambda_1 \mathbf{1}}$ can be viewed as the rescaled momentum which is isotropic near stationary points. 
Similar to Adam, the bias term, with an appropriate choice of the bias factor $a$, is expected to guide the sampler to converge to a global optimal region quickly.

\begin{algorithm}[t]
\caption{\label{Algorithm:aSGLD} Adam LMCDQN}
\begin{algorithmic}[1]
\STATE Input: step sizes $\{\eta_k > 0\}_{k\geq 1}$, inverse temperature $\{\beta_k\}_{k\geq 1}$, smoothing factors $\alpha_1$ and $\alpha_2$, bias factor $a$, loss function $L_k(w)$. 
\STATE Initialize $w_h^{1,0}$ from appropriate distribution for $h \in [H]$, $J_0 = 0$, $m_h^{1,0} = 0$ and $v_h^{1,0} = 0$ for $h \in [H]$ and $k \in [K]$.
\FOR{episode $k=1,2,\ldots, K$}
\STATE Receive the initial state $s_1^k$.

\FOR{step $h=H, H-1,\ldots, 1$}
    \STATE $w_h^{k,0} = w_h^{k-1, J_{k-1}}, m_h^{k,0} = m_h^{k-1, J_{k-1}}, v_h^{k,0} = v_h^{k-1, J_{k-1}}$
    \FOR{$j = 1, \ldots, J_k$}
        \STATE $\textcolor{violet}{\epsilon_h^{k,j} \sim \mathcal{N}(0, I)}$
        \STATE $w_h^{k,j} = w_h^{k,j-1} - \eta_k \Big(\nabla \tilde{L}^k_h(w_h^{k,j-1}) + \textcolor{purple}{a} \textcolor{blue}{m_h^{k,j-1}} \oslash \sqrt{\textcolor{red}{v_h^{k,j-1}} + \lambda_1 \mathbf{1}}\Big) + \sqrt{2\eta_k \beta_k^{-1}}\textcolor{violet}{\epsilon_h^{k,j}}$
        \STATE \textcolor{blue}{$m_h^{k,j} = \alpha_1 m_h^{k,j-1} + (1-\alpha_1)\nabla \tilde{L}^k_h(w_h^{k,j-1})$}
        \STATE \textcolor{red}{$v_h^{k,j} = \alpha_2 v_h^{k,j-1} + (1-\alpha_2)\nabla \tilde{L}^k_h(w_h^{k,j-1}) \odot \nabla \tilde{L}^k_h(w_h^{k,j-1})$}
    \ENDFOR

\STATE  $Q^k_{h}(\cdot,\cdot) \leftarrow Q(w_h^{k,J_k};\phi(\cdot, \cdot))$
\STATE  $V^k_{h}(\cdot) \leftarrow \max_{a \in \mathcal{A}} Q_{h}^{k}(\cdot,a)$
\ENDFOR

\FOR{step $h=1, 2, \ldots, H$}

\STATE Take  action  $a^k_{h} \leftarrow \argmax_{a \in \mathcal{A}} Q_h^{k}(s_h^k,a)$, observe reward $r^k_{h}(s_h^k,a_h^k)$ and next state $s^k_{h+1}$. 
\ENDFOR
\ENDFOR
\end{algorithmic}
\end{algorithm} 

\section{Experiments}\label{Section:Experiments}

In this section, we present an empirical evaluation of \algnameDeep. For the empirical evaluation of \algname in the RiverSwim environment \citep{strehl2008analysis, osband2013more} and simulated linear MDPs, we refer the reader to \Cref{Appendix:experiments-LMC-LSVI}. First, we consider a hard exploration problem and demonstrate the ability of deep exploration for our algorithm. We then proceed to experiments with 8 hard Atari games, showing that \algnameDeep is able to outperform several strong baselines.
Note that for implementation simplicity, in the following experiments, we set all the update numbers $J_k$ and the inverse temperature values $\beta_k$ to be the same number for all $k \in [K]$. We also emphasize that even though in \Cref{theorem:main-theorem-main-paper}, we specify a theoretical value for $J_k$, as we show in this section, in practice, a small value for $J_k$ (we use $J_k = 4$ for N-Chain and $J_k = 1$ for Atari) can yield good performance for our algorithm. We also emphasize that in our implementation of \algnameDeep we use fixed values of $\alpha_1 = 0.9$, $\alpha_2 = 0.99$, and $\lambda_1 = 10^{-8}$ instead of tuning them.

\begin{remark}\label{remark:linearMDP-algorithms}
We note that in our experiments in this section, as baselines, we use commonly used algorithms from deep RL literature as opposed to methods presented in Table \ref{table:bounds}. This is because while these methods are provably efficient under linear MDP settings, in most cases, it is not clear how to scale them to deep RL settings. More precisely, these methods assume that a good feature is known in advance and Q values can be approximated as a linear function over this feature. If the provided feature is not good and fixed, the empirical performance of these methods is often poor. 
For example, LSVI-UCB \citep{jin2019provably} computes UCB bonus function of the form $\|\phi(s,a)\|_{\Lambda^{-1}}$, where $\Lambda \in \mathbb{R}^{d\times d}$ is the empirical feature covariance matrix. 
When we update the feature over iterations in deep RL, the computational complexity of LSVI-UCB becomes unbearable as it needs to repeatedly compute the feature covariance matrix to update the bonus function.
In the same vein, while estimating the Q function, OPT-RSLVI \citep{zanette2019frequentist} needs to rely on the feature norm with respect to the inverse covariance matrix. 
Lastly, even though LSVI-PHE \citep{ishfaq2021randomized} is computationally implementable in deep RL settings, it requires sampling independent and identically distributed (i.i.d.) noise for the whole history every time to perturb the reward, which appears to be computationally burdensome in most practical settings. 
\end{remark}

\subsection{Demonstration of Deep Exploration}
We first conduct experiments in $N$-Chain \citep{osband2016deep} to 
show that \algnameDeep is able to perform deep exploration.
The environment consists of a chain of $N$ states, namely $s_1,s_2,\ldots, s_N$.
The agent always starts in state $s_2$, from where it can either move left or right.
The agent receives a small reward $r=0.001$ in state $s_1$ and a larger reward $r=1$ in state $s_N$.
The horizon length is $N + 9$, so the optimal return is $10$. 
Please refer to \cref{sec:nchain} for a depiction of the environment.

In our experiments, we consider $N$ to be $25$, $50$, $75$, or $100$. For each chain length, we train different algorithms for $10^5$ steps across $20$ seeds.
We use DQN \citep{mnih2015human}, Bootstrapped DQN \citep{osband2016deep} and Noisy-Net \citep{fortunato2017noisy} as the baseline algorithms. We use DQN with $\epsilon$-greedy exploration strategy, where $\epsilon$ decays linearly from $1.0$ to $0.01$ for the first $1,000$ training steps and then is fixed as $0.01$.
For evaluation, we set $\epsilon=0$ in DQN.
We measure the performance of each algorithm in each run by the mean return of the last $10$ evaluation episodes.
For all algorithms, we sweep the learning rate and pick the one with the best performance.
For \algnameDeep, we sweep $a$ and $\beta_k$ in small ranges.
For more details, please check Appendix \ref{sec:nchain}.
\begin{wrapfigure}[18]{r}{0.5\textwidth}
  \begin{center}
    \includegraphics[width=0.50\textwidth]{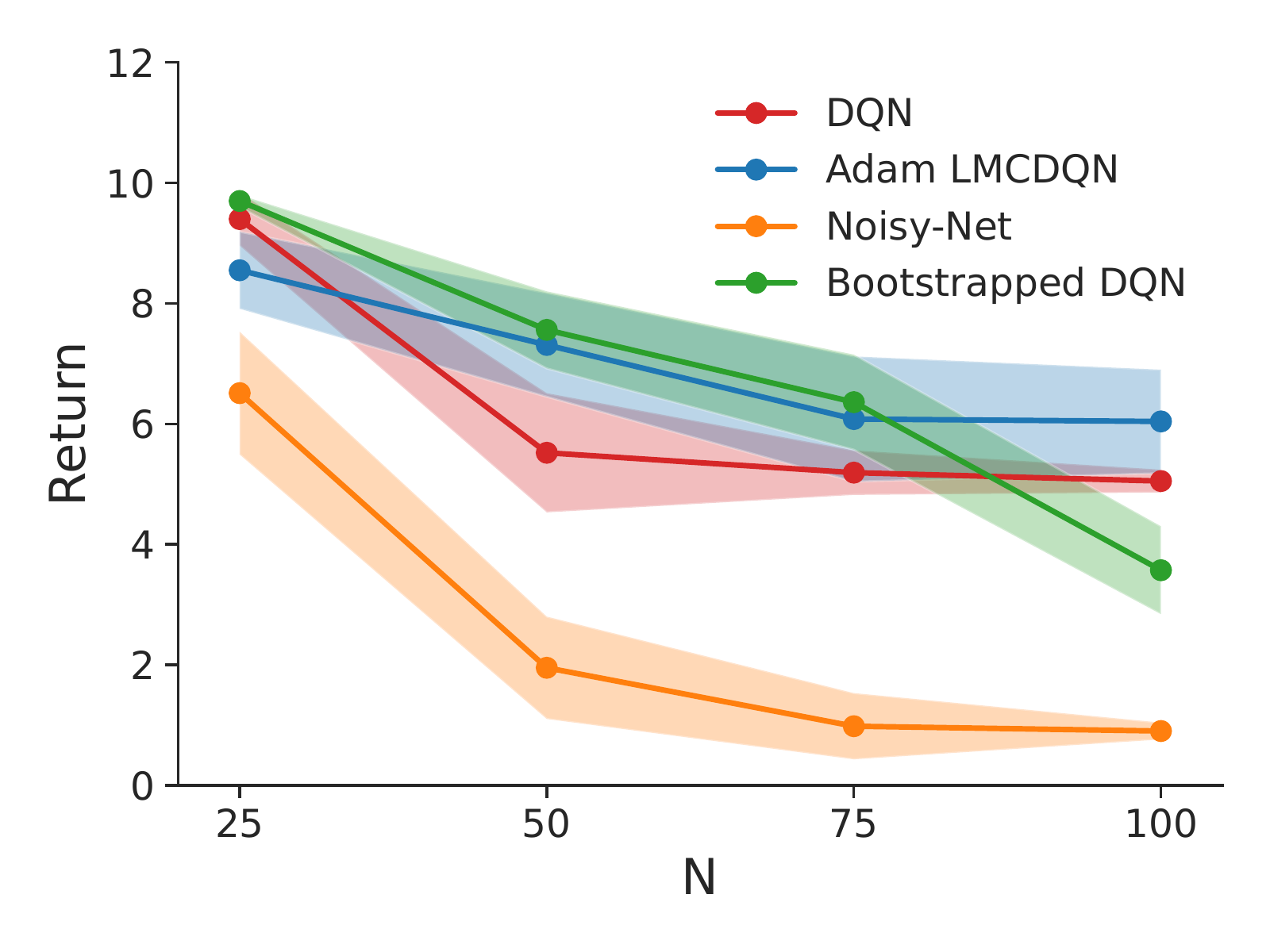}
  \end{center}
  \caption{A comparison of \algnameDeep and other baselines in $N$-chain with different chain lengths $N$. All results are averaged over $20$ runs and the shaded areas represent standard errors. As $N$ increases, the exploration hardness increases. 
  }
  \label{fig:nchain_best}
\end{wrapfigure}

In Figure \ref{fig:nchain_best}, we show the performance of \algnameDeep and the baseline methods under different chain lengths.
The solid lines represent the averaged return over 20 random seeds and the shaded areas represent standard errors.
Note that for \algnameDeep, we set $J_k=4$ for all chain lengths.
As $N$ increases, the hardness of exploration increases, and \algnameDeep is able to maintain high performance while the performance of other baselines especially Bootstrapped DQN and Noisy-Net drop quickly.
Clearly, \algnameDeep achieves significantly more robust performance than other baselines as $N$ increases, showing its deep exploration ability.

\subsection{Evaluation in Atari Games} 

To further evaluate our algorithm, we conduct experiments in Atari games \citep{bellemare2013arcade}.
Specifically, 8 visually complicated hard exploration games \citep{taiga2019bonus} are selected, including Alien, Freeway, Gravitar, H.E.R.O., Pitfall, Qbert, Solaris, and Venture. 
Among these games, Alien, H.E.R.O., and Qbert are dense reward environments, while Freeway, Gravitar, Pitfall, Solaris, and Venture are sparse reward environments, according to \citet{taiga2019bonus}.

\textbf{Main Results.}
We consider 7 baselines: Double DQN \citep{van2016deep}, Prioritized DQN \citep{schaul2015prioritized}, C51 \citep{bellemare2017distributional}, QR-DQN \citep{dabney2018distributional}, IQN \citep{dabney2018implicit}, Bootstrapped DQN \citep{osband2016deep} and Noisy-Net \citep{fortunato2017noisy}.
Since a large $J_k$ greatly increases training time, we set $J_k=1$ in \algnameDeep so that all experiments can be finished in a reasonable time.
We also incorporate the double Q trick \citep{van2010double,van2016deep}, which is shown to slightly boost performance. 
We train \algnameDeep for 50M frames (i.e., 12.5M steps) and summarize results across over 5 random seeds.
Please check Appendix \ref{sec:atari} for more details about the training and hyper-parameters settings.

In Figure \ref{fig:atari}, we present the learning curves of all methods in 8 Atari games. The solid lines correspond to the median performance over 5 random seeds, while the shaded areas represent 90\% confidence intervals.
Overall, the results show that our algorithm \algnameDeep is quite competitive compared to the baseline algorithms.
In particular, \algnameDeep exhibits a strong advantage against all other methods in Gravitar and Venture.

\begin{figure*}
    \centering
    \includegraphics[width=\textwidth]{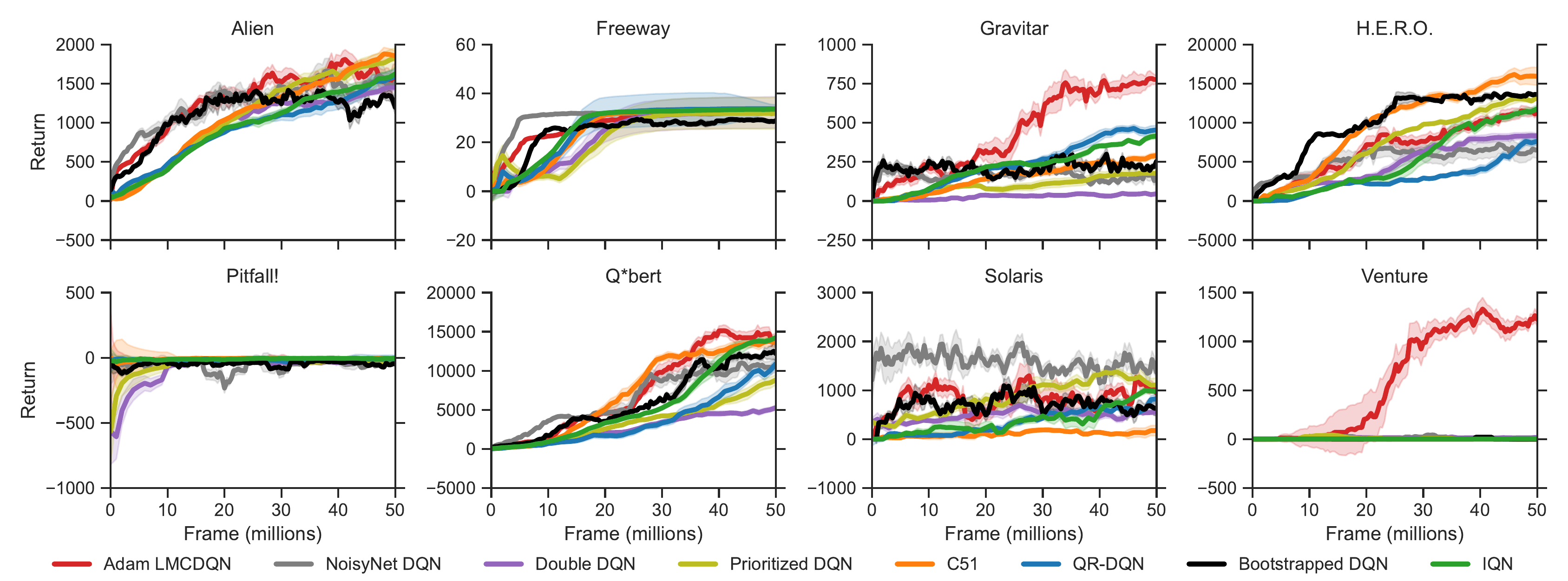}
    \caption{The return curves of various algorithms in eight Atari tasks over 50 million training frames. Solid lines correspond to the median performance over 5 random seeds, and the shaded areas correspond to $90\%$ confidence interval.}
    \label{fig:atari}
\end{figure*}

\textbf{Sensitivity Analysis.}
In Figure \ref{fig:qbert_a}, we draw the learning curves of \algnameDeep with different bias factors $a$ in Qbert.
The performance of our algorithm is greatly affected by the value of the bias factor.
Overall, by setting $a=0.1$, \algnameDeep achieves good performance in Qbert as well as in other Atari games.
On the contrary, \algnameDeep is less sensitive to the inverse temperature $\beta_k$, as shown in Figure \ref{fig:qbert_noise}.

\begin{figure}[h]
\vspace{-0.3cm}
  \begin{subfigure}[b]{0.50\textwidth}
    \includegraphics[width=\linewidth]{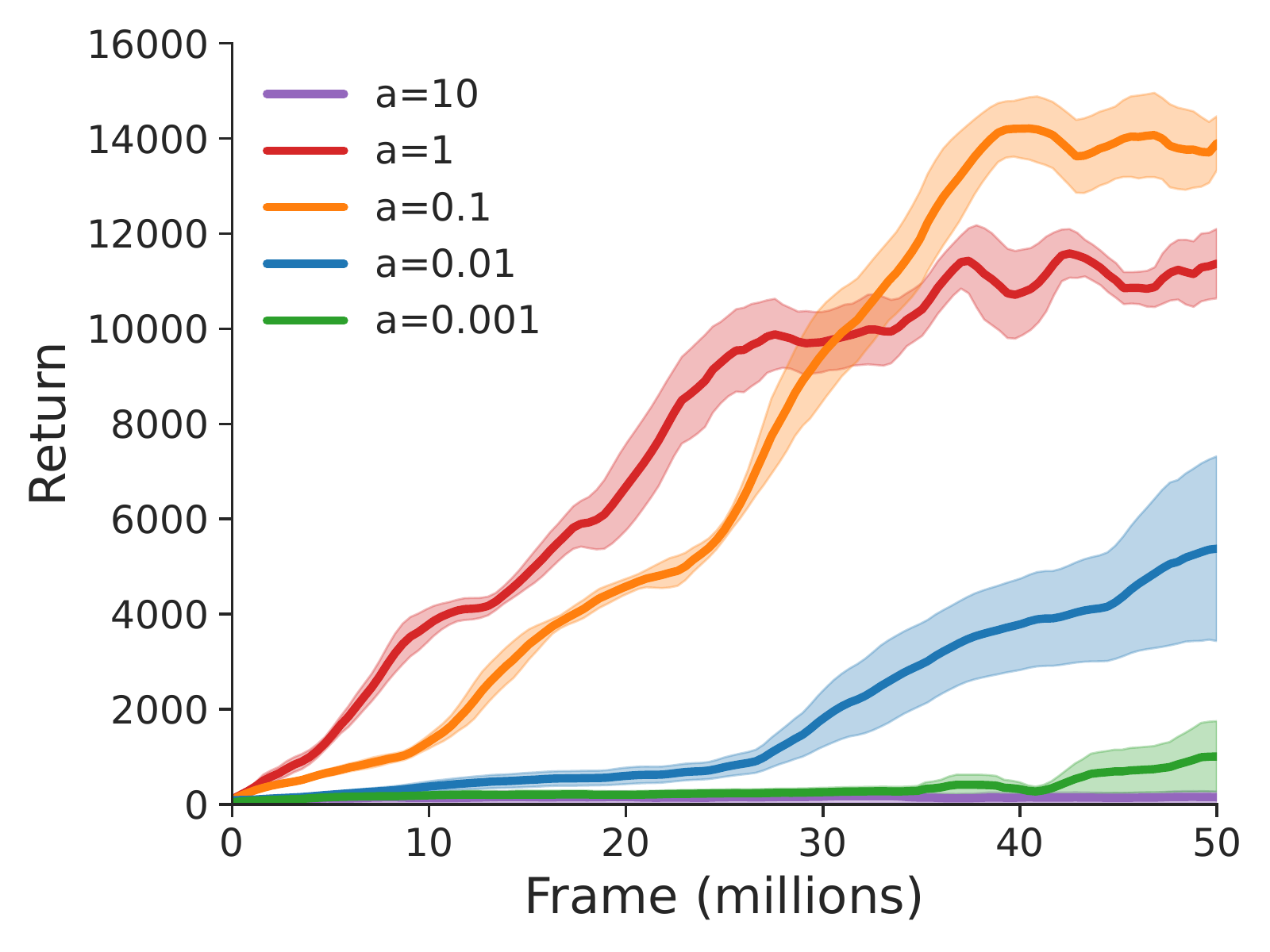}
    \caption{Different bias factor $a$ in \algnameDeep}\label{fig:qbert_a}
  \end{subfigure}%
  \hspace*{\fill}   %
  \begin{subfigure}[b]{0.50\textwidth}
    \includegraphics[width=\linewidth]{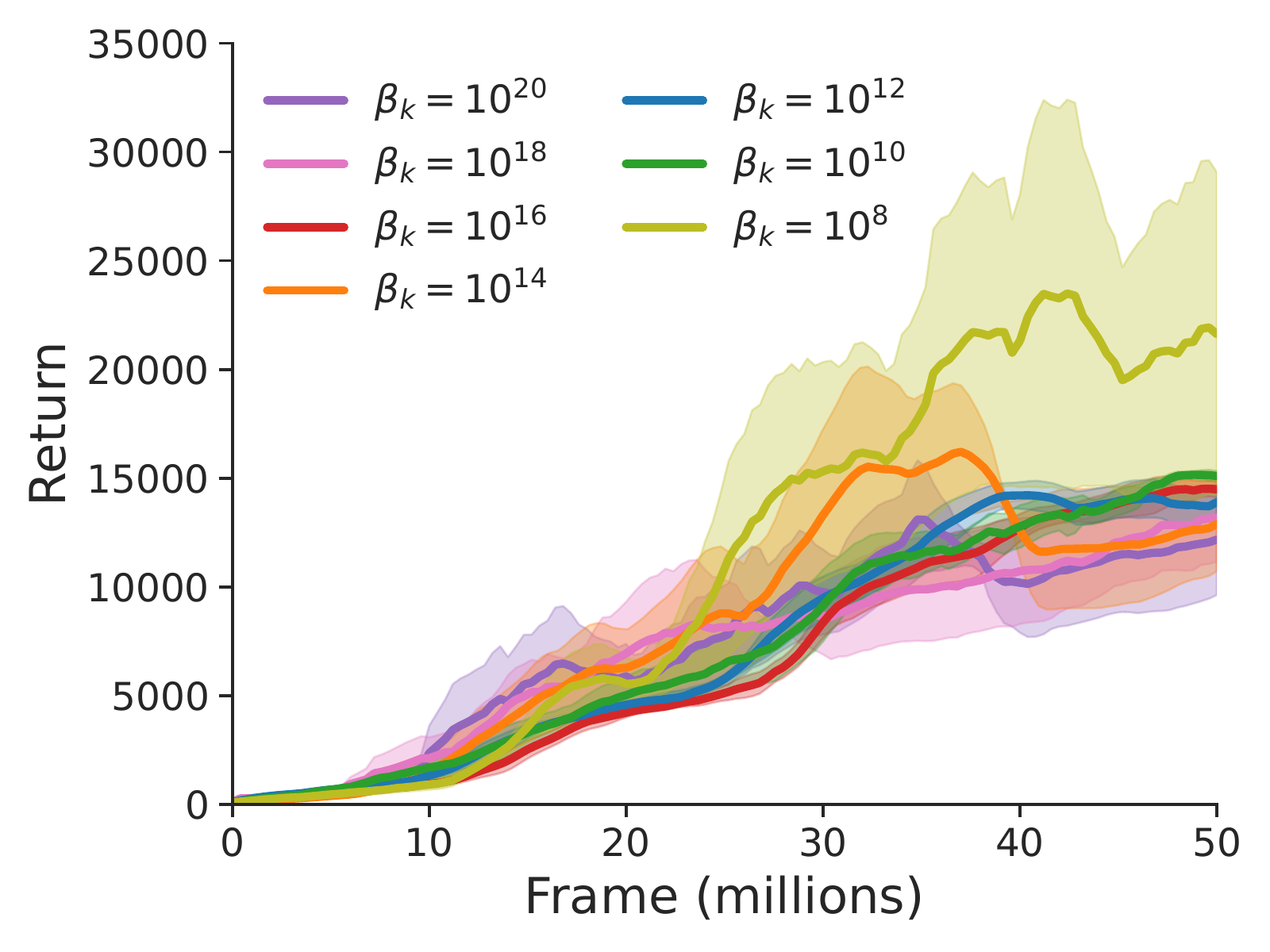}
    \caption{Different temperatures $\beta_k$ in \algnameDeep}\label{fig:qbert_noise}
  \end{subfigure}
\caption{(a) A comparison of \algnameDeep with different bias factor $a$ in Qbert. Solid lines correspond to the average performance over 5 random seeds, and shaded areas correspond to standard errors. The performance of \algnameDeep is greatly affected by the value of the bias factor. (b) A comparison of \algnameDeep with different values of inverse temperature parameter $\beta_k$ in Qbert. \algnameDeep is not very sensitive to inverse temperature $\beta_k$. } \label{fig:comp}
\end{figure}

\textbf{Ablation Study.}
In Appendix \ref{sec:atari_ablation}, we also present results for \algnameDeep without applying double Q functions.
The performance of \algnameDeep is only slightly worse without using double Q functions, proving the effectiveness of our approach.
Moreover, we implement Langevin DQN \citep{dwaracherla2020langevin} with double Q functions and compare it with our algorithm \algnameDeep. Empirically, we observed that Adam LMCDQN usually outperforms Langevin DQN in sparse-reward hard-exploration games, while in dense-reward hard-exploration games, Adam LMCDQN and Langevin DQN achieve similar performance.

\section{Conclusion and Future Work}
We proposed the LMC-LSVI algorithm  for reinforcement learning that uses Langevin Monte Carlo to directly sample a Q function from the posterior distribution with arbitrary precision. LMC-LSVI achieves the best-available regret bound for randomized algorithms in the linear MDP setting. Furthermore, we proposed Adam LMCDQN, a practical variant of LMC-LSVI, that demonstrates competitive empirical performance in challenging exploration tasks. There are several avenues for future research. It would be interesting to explore if one can improve the suboptimal dependence on $H$ for randomized algorithms. Extending the current results to more practical and general settings \citep{zhang2022making,ouhamma2023bilinear,weisz2023online} is also an exciting future direction. On the empirical side, it would be interesting to see whether LMC based approaches can be used in continuous control tasks for efficient exploration.

\subsubsection*{Acknowledgments}
We gratefully acknowledge funding from the Canada CIFAR AI Chairs program, the Reinforcement Learning and Artificial Intelligence (RLAI) laboratory, the Alberta Machine Intelligence Institute (Amii), Mila - Quebec Artificial Intelligence Institute, the Natural Sciences and Engineering Research Council (NSERC) of Canada, the US National Science Foundation (DMS-2323112) and the Whitehead Scholars Program at the Duke University School of Medicine. The authors would like to thank Vikranth Dwaracherla, Sehwan Kim, and Fabian Pedregosa
for their helpful discussions. The authors would also like to thank Ziniu Li for giving great advice about Atari experiments.

\bibliographystyle{unsrtnat}
\bibliography{references}

\clearpage
\tableofcontents
\newpage
\appendix
\section{Related Work}
\textbf{Posterior Sampling in Reinforcement Learning.} Our work is closely related to a line of work that uses posterior sampling, i.e., Thompson sampling in RL \citep{strens2000bayesian}.
\citet{osband2016generalization}, \citet{russo2019worst} and \citet{xiongnear} propose randomized least-squares value iteration (RLSVI) with frequentist regret analysis under tabular MDP setting.
RLSVI carefully injects tuned random noise into the value function in order to induce exploration.
Recently, \citet{zanette2019frequentist} and \citet{ishfaq2021randomized} extended RLSVI to the linear setting. 
While RLSVI enjoys favorable regret bound under tabular and linear settings, it can only be applied when a good feature is known and fixed during training, making it impractical for deep RL \citep{li2021hyperdqn}. 
\citet{osband2016deep,osband2018randomized} addressed this issue by training an ensemble of randomly initialized neural networks and viewing them as approximate posterior samples of Q functions. 
However, training an ensemble of neural networks is computationally prohibitive. 
Another line of work directly injects noise to parameters \citep{fortunato2017noisy,plappert2018parameter}.
Noisy-Net \citep{fortunato2017noisy} learns noisy parameters using gradient descent, whereas \citet{plappert2018parameter} added constant Gaussian noise to the parameters of the neural network. However, Noisy-Net is not ensured to approximate the posterior distribution \citep{fortunato2017noisy}. Proposed by \citet{dwaracherla2020langevin}, Langevin DQN is the closest algorithm to our work. Even though Langevin DQN is also inspired by SGLD \citep{welling2011bayesian}, \citet{dwaracherla2020langevin} did not provide any theoretical study nor regret bound for their algorithm under any setting. In a concurrent work, \citet{kuang2024posterior} studied linear MDP with delayed feedback, where they also used an LMC-based posterior sampling algorithm, which theoretically resembles similarity to part of our theoretical study with no deep RL variant studied.

A recently proposed model-free posterior sampling algorithm is Conditional Posterior Sampling (CPS) algorithm \citep{dann2021provably}. Similar to Feel-Good Thompson Sampling proposed in \citep{zhang2022feel}, CPS considers an opimistic prior term which helps with initial exploration. However, the proposed posterior formulation in CPS does not allow computationally tractable sampling. Another recently proposed algorithm Bayes-UCBVI \citep{tiapkin2022dirichlet} uses the quantile of a Q-value function posterior as UCBs on the optimal Q-value functions which can be thought of as a deterministic version of PSRL \citep{osband2013more}. While Bayes-UCBVI is extendable to deep RL, it does not have any theory for the function approximation case and their analysis only works in the tabular setting. Even for Bayes-UCBVI to work in deep RL, it needs a well-chosen posterior, and sampling from the posterior is not addressed in their algorithm. Moreover, the Atari experiment in \citet{tiapkin2022dirichlet} only shows average comparison against Bootstrapped DQN \citep{osband2016deep} and Double DQN \citep{van2016deep} across all 57 games. However, in-average comparison across 57 games is not a sufficient demonstration of the utility of Bayes-UCBVI, especially when it comes to hard exploration tasks.

\textbf{Comparison to \citet{dwaracherla2020langevin}.} Here we provide a detailed comparison of Langevin DQN \citep{dwaracherla2020langevin} and our work. 
On the algorithmic side, at each time step, Langevin DQN performs only one gradient update, while we perform multiple (i.e., $J_k$) noisy gradient updates, as shown in Algorithm \ref{Algorithm:LMC-TS} and Algorithm \ref{Algorithm:aSGLD}. This is a crucial difference as a large enough value for $J_k$ allows us to learn the exact posterior distribution of the parameters $\{w_h\}_{h\in[H]}$ up to high precision. Moreover, they also proposed to use preconditioned SGLD optimizer which is starkly different from our \algnameDeep. Their optimizer is more akin to a heuristic variant of the original Adam optimizer \citep{kingma2014adam} with a Gaussian noise term added to the gradient term. Moreover, they do not use any temperature parameter in the noise term. On the contrary, \algnameDeep is inspired by Adam SGLD \citep{kim2022stochastic}, which enjoys convergence guarantees in the supervised learning setting. Overall, the design of Adam LMCDQN is heavily inspired from the theoretical study of LMC-LSVI and Adam SGLD \citep{kim2022stochastic}. Lastly, while \citet{dwaracherla2020langevin} provided some empirical study in the tabular deep sea environment \citep{osband2019deep,osband2019behaviour}, they did not perform any experiment in challenging pixel-based environment (e.g., Atari). We conducted a comparison in such environments in Appendix \ref{sec:atari_ablation}. Empirically, we observed that Adam LMCDQN usually outperforms Langevin DQN in sparse-reward hard-exploration games, such as Gravitar, Solaris, and Venture; while in dense-reward hard-exploration games such as Alien, H.E.R.O and Qbert, Adam LMCDQN and Langevin DQN achieve similar performance.

\section{Proof of the Regret Bound of \algname}

\textbf{Additional Notation.} For any set $A$, $\langle \cdot, \cdot \rangle_A$ denotes the inner product over set $A$. For a vector $x \in \mathbb{R}^d$, $\|x\|_2 = \sqrt{x^\top x}$ is the Euclidean norm of $x$.  For a matrix $V \in \mathbb{R}^{m\times n}$, we denote the operator norm and Frobenius norm by $\|V\|_2$ and $\|V\|_F$ respectively. For a positive definite matrix $V \in \mathbb{R}^{d\times d}$ and a vector $x \in \mathbb{R}^d$, we denote
$\|x\|_V = \sqrt{x^\top V x}$. 

\subsection{Supporting Lemmas}
Before deriving the regret bound of \algname, we first outline the necessary technical lemmas that are helpful in our regret analysis. The first result below shows that the parameter obtained from LMC follows a Gaussian distribution. 
\begin{proposition}\label{Prop:w_gaussian}
    The parameter $w_h^{k,J_k}$ used in episode $k$ of Algorithm \ref{Algorithm:LMC-TS} follows a Gaussian distribution $\mathcal{N}(\mu_h^{k,J_k}, \Sigma_h^{k,J_k})$, where the mean vector and the covariance matrix are defined as
    \begin{align}
        & \mu_h^{k,J_k} = A_k^{J_k} \ldots A_1^{J_1} w_h^{1,0} + \sum_{i=1}^k A_k^{J_k}\ldots A_{i+1}^{J_{i+1}} \left( I - A_i^{J_i}\right) \hat{w}_h^i,\label{Eq:mu_h_k}\\
        & \Sigma_h^{k,J_k} = \sum_{i=1}^k \frac{1}{\beta_i} A_k^{J_k}\ldots A_{i+1}^{J_{i+1}} \left(I - A_i^{2J_i}\right)\left(\Lambda_h^i\right)^{-1} \left(I + A_i\right)^{-1} A_{i+1}^{J_{i+1}}\ldots A_k^{J_k}\label{Eq:Sigma_h_k},
    \end{align}
where $A_i = I - 2\eta_i \Lambda_h^i$ for $i \in [k]$.
\end{proposition}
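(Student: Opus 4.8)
The plan is to establish the claim by a two-level induction: an inner unrolling of the Langevin recursion across the $J_k$ gradient steps taken within episode $k$, followed by an outer induction on $k$ that chains consecutive episodes through the warm-start rule $w_h^{k,0}=w_h^{k-1,J_{k-1}}$ in Line~\ref{line:warm-start} of Algorithm \ref{Algorithm:LMC-TS}. The whole argument is linear-algebraic: since each update is an affine map of the previous iterate plus an independent Gaussian, Gaussianity is automatic and the only real work is tracking the mean and covariance.

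First I would rewrite the update in Line~\ref{line:w-update} in affine form. Using $\nabla L_h^k(w)=2(\Lambda_h^k w - b_h^k)$ from \Cref{Eq:design_matrix} together with the identity $b_h^k=\Lambda_h^k\hat{w}_h^k$, and writing $A_k=I-2\eta_k\Lambda_h^k$, the step becomes
\begin{equation*}
w_h^{k,j}=A_k w_h^{k,j-1}+(I-A_k)\hat{w}_h^k+\sqrt{2\eta_k\beta_k^{-1}}\,\epsilon_h^{k,j},
\end{equation*}
because $2\eta_k b_h^k=2\eta_k\Lambda_h^k\hat{w}_h^k=(I-A_k)\hat{w}_h^k$. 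Within a fixed episode $k$, the matrices $A_k$, $\Lambda_h^k$, and $I\pm A_k$ are all polynomials in the symmetric, positive-definite matrix $\Lambda_h^k$ (positive-definiteness from $\lambda>0$), so they mutually commute and are symmetric; I will use this freely. Across episodes the $A_i$ need not commute, but this is never required: the outer step only conjugates by the symmetric matrix $A_k^{J_k}$.

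Next I would unroll the affine recursion from $j=0$ to $j=J_k$, giving
\begin{equation*}
w_h^{k,J_k}=A_k^{J_k}w_h^{k,0}+(I-A_k^{J_k})\hat{w}_h^k+\sqrt{2\eta_k\beta_k^{-1}}\sum_{j=1}^{J_k}A_k^{J_k-j}\epsilon_h^{k,j},
\end{equation*}
where the drift telescopes via $\sum_{j=1}^{J_k}A_k^{J_k-j}(I-A_k)=(I-A_k)\sum_{l=0}^{J_k-1}A_k^l=I-A_k^{J_k}$. Since the $\epsilon_h^{k,j}$ are i.i.d. standard Gaussian and independent of $w_h^{k,0}$, the noise term is centered Gaussian with covariance $2\eta_k\beta_k^{-1}\sum_{l=0}^{J_k-1}A_k^{2l}$. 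Evaluating the geometric sum as $(I-A_k^{2J_k})(I-A_k^2)^{-1}$ and factoring $I-A_k^2=(I-A_k)(I+A_k)=2\eta_k\Lambda_h^k(I+A_k)$ collapses this to $\tfrac{1}{\beta_k}(I-A_k^{2J_k})(\Lambda_h^k)^{-1}(I+A_k)^{-1}$, which is exactly the $i=k$ term of \Cref{Eq:Sigma_h_k}; likewise the deterministic part matches the $i=k$ term of \Cref{Eq:mu_h_k}. This also settles the base case $k=1$ with $w_h^{1,0}$ as stated.

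Finally I would run the outer induction on $k$. Assuming $w_h^{k-1,J_{k-1}}\sim\mathcal{N}(\mu_h^{k-1,J_{k-1}},\Sigma_h^{k-1,J_{k-1}})$ and using that $w_h^{k,0}=w_h^{k-1,J_{k-1}}$ is independent of the fresh noise $\{\epsilon_h^{k,j}\}_j$, the display above exhibits $w_h^{k,J_k}$ as an affine image of one Gaussian plus an independent Gaussian, hence Gaussian, with $\mu_h^{k,J_k}=A_k^{J_k}\mu_h^{k-1,J_{k-1}}+(I-A_k^{J_k})\hat{w}_h^k$ and $\Sigma_h^{k,J_k}=A_k^{J_k}\Sigma_h^{k-1,J_{k-1}}A_k^{J_k}+\tfrac{1}{\beta_k}(I-A_k^{2J_k})(\Lambda_h^k)^{-1}(I+A_k)^{-1}$. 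Substituting the hypothesis and distributing $A_k^{J_k}$ telescopes the mean into \Cref{Eq:mu_h_k} and the covariance into \Cref{Eq:Sigma_h_k}. I expect the covariance bookkeeping to be the main obstacle: getting the closed form of the geometric matrix sum right, and verifying that distributing $A_k^{J_k}$ through the inductive covariance reassembles the left factor $A_k^{J_k}\ldots A_{i+1}^{J_{i+1}}$ and the right factor $A_{i+1}^{J_{i+1}}\ldots A_k^{J_k}$ symmetrically around each middle block. This hinges on the symmetry of each $A_j$ (so that the right factor is the transpose of the left) and on each middle block $(I-A_i^{2J_i})(\Lambda_h^i)^{-1}(I+A_i)^{-1}$ being symmetric as a product of commuting functions of $\Lambda_h^i$, which I would flag explicitly at the start.
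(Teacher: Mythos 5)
Your proposal is correct and follows essentially the same route as the paper's proof: rewrite the noisy gradient step as the affine map $w_h^{k,j}=A_kw_h^{k,j-1}+(I-A_k)\hat{w}_h^k+\sqrt{2\eta_k\beta_k^{-1}}\epsilon_h^{k,j}$ using $b_h^k=\Lambda_h^k\hat{w}_h^k$, telescope the geometric sums within an episode, chain episodes through the warm start, and read off the covariance via $(I-A_k^2)^{-1}=(2\eta_k\Lambda_h^k)^{-1}(I+A_k)^{-1}$. The only difference is presentational — you assemble the final expressions by induction on $k$ whereas the paper unrolls the recursion across all episodes in one pass — and both yield the same bookkeeping.
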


\begin{definition}[Model prediction error]
For any $(k,h) \in [K]\times[H]$, we define the model prediction error associated with the reward $r_h$,
\begin{equation*}
    l^{k}_{h}(x,a) = r_h (x,a) + \mathbb{P}_{h}V^{k}_{h+1}(x,a) - Q^{k}_h(x,a).
\end{equation*}
\end{definition}

\begin{lemma}\label{Lemma:bound-2norm-wkjk}
    Let $\lambda=1$ in Algorithm \ref{Algorithm:LMC-TS}. For any $(k,h)\in[K]\times[H]$, we have
    \begin{equation*}
        \left\|w_h^{k,J_k}\right\|_2 \leq \frac{16}{3}Hd\sqrt{K} + \sqrt{\frac{2K}{3\beta_K\delta}}d^{3/2} := B_\delta,
    \end{equation*}
    with probability at least $1-\delta$.
\end{lemma}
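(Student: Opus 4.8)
The plan is to invoke \Cref{Prop:w_gaussian}, which identifies $w_h^{k,J_k}\sim\mathcal N(\mu_h^{k,J_k},\Sigma_h^{k,J_k})$, and then split via the triangle inequality $\|w_h^{k,J_k}\|_2 \le \|\mu_h^{k,J_k}\|_2 + \|w_h^{k,J_k}-\mu_h^{k,J_k}\|_2$, bounding the mean deterministically and the centered Gaussian fluctuation with a tail inequality. The first step is to control the matrices $A_i = I - 2\eta_i\Lambda_h^i$ under the schedule $\eta_i = 1/(4\lambda_{\max}(\Lambda_h^i))$ and $J_i = 2\kappa_i\log(4HKd)$ of \Cref{theorem:main-theorem-main-paper}, where $\kappa_i = \lambda_{\max}(\Lambda_h^i)/\lambda_{\min}(\Lambda_h^i)$. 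Since $\lambda=1$ forces $\Lambda_h^i\succeq I$, each $A_i$ is symmetric positive semidefinite with all eigenvalues in $[\tfrac12,\,1-\tfrac{1}{2\kappa_i}]$; hence every $A$-product has operator norm at most $1$, while a single block is tiny, $\|A_i^{J_i}\|_2 \le \exp(-J_i/(2\kappa_i)) \le \tfrac{1}{4HKd}$. These two facts are the workhorses for both terms.

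For the mean, note that $w_h^{1,0}=\mathbf 0$ annihilates the first summand of \eqref{Eq:mu_h_k}, leaving $\mu_h^{k,J_k} = \sum_{i=1}^k A_k^{J_k}\cdots A_{i+1}^{J_{i+1}}(I - A_i^{J_i})\hat w_h^i$. The dominant $i=k$ term satisfies $\|(I-A_k^{J_k})\hat w_h^k\|_2 \le \|\hat w_h^k\|_2$, and a standard least-squares estimate controls $\|\hat w_h^i\|_2$: using $|r_h + V_{h+1}^k|\le H$, $\|v\|_{(\Lambda_h^i)^{-1}}\le 1$ for unit $v$, and $\sum_\tau\|\phi(x_h^\tau,a_h^\tau)\|^2_{(\Lambda_h^i)^{-1}} \le \Tr\big((\Lambda_h^i)^{-1}\Lambda_h^i\big) = d$ together with Cauchy--Schwarz, one gets $\|\hat w_h^i\|_2 = O(H\sqrt{di})$. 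Every term with $i<k$ carries the leftmost factor $A_k^{J_k}$ and is therefore suppressed by $\|A_k^{J_k}\|_2 \le \tfrac{1}{4HKd}$, so summing them over $i$ contributes only a lower-order $O(\sqrt K)$. Collecting the pieces gives $\|\mu_h^{k,J_k}\|_2 = O(Hd\sqrt K)$, matching the first term $\tfrac{16}{3}Hd\sqrt K$ after tracking constants.

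For the fluctuation, $w_h^{k,J_k}-\mu_h^{k,J_k}\sim\mathcal N(0,\Sigma_h^{k,J_k})$, so a Markov bound applied to the squared norm yields $\|w_h^{k,J_k}-\mu_h^{k,J_k}\|_2 \le \sqrt{\Tr(\Sigma_h^{k,J_k})/\delta}$ with probability at least $1-\delta$. It then remains to bound $\Tr(\Sigma_h^{k,J_k})$ from \eqref{Eq:Sigma_h_k}. Because $A_i$ is a function of $\Lambda_h^i$, the factors $(I-A_i^{2J_i})$, $(I+A_i)^{-1}$ and $(\Lambda_h^i)^{-1}$ commute and are symmetric PSD, with $\|I-A_i^{2J_i}\|_2\le 1$, $\|(I+A_i)^{-1}\|_2\le \tfrac23$, and $\Tr\big((\Lambda_h^i)^{-1}\big)\le d$; since the outer $A$-products are norm-$\le1$ and $\beta_i\ge\beta_K$, a careful accounting of the dimension dependence of each summand controls $\Tr(\Sigma_h^{k,J_k})$ in terms of $K/\beta_K$ and a power of $d$, producing the second term $\sqrt{\tfrac{2K}{3\beta_K\delta}}\,d^{3/2}$. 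Combining the deterministic mean bound with this high-probability fluctuation bound gives $\|w_h^{k,J_k}\|_2 \le B_\delta$ with probability at least $1-\delta$, and since the statement is for fixed $(k,h)$ no union bound is needed.

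The main obstacle is the mean bound: the naive estimate $\|\mu_h^{k,J_k}\|_2 \le \sum_{i=1}^k\|\hat w_h^i\|_2 = O(K^{3/2})$ is far too weak, and the entire point is that the geometric contraction $\|A_i^{J_i}\|_2\le \tfrac{1}{4HKd}$—guaranteed precisely by coupling $\eta_k$ and $J_k$ to the conditioning $\kappa_k$ of $\Lambda_h^k$—collapses all but the current-episode term, turning the $K^{3/2}$ into $\sqrt K$. Managing the non-commuting design matrices across episodes while only ever using that each $A_i$ is symmetric PSD (so the products telescope in operator norm), and pinning down the eigenvalue window $[\tfrac12,\,1-\tfrac{1}{2\kappa_i}]$ that makes the contraction argument valid, is the technical heart of the proof.
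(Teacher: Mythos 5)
Your proposal follows the paper's proof essentially step for step: it decomposes $w_h^{k,J_k}$ via \Cref{Prop:w_gaussian} into its mean plus a centered Gaussian, bounds the mean deterministically using $w_h^{1,0}=\mathbf{0}$, the contraction $\|A_i^{J_i}\|_2\le 1/(4HKd)$ forced by the choice of $\eta_i$ and $J_i$, and the least-squares bound $\|\hat w_h^i\|_2\le 2H\sqrt{Kd}$, and then bounds the fluctuation by Markov's inequality applied to $\Tr(\Sigma_h^{k,J_k})$. The only differences are in the bookkeeping and are in your favor: you bound the mean with operator norms directly where the paper detours through Frobenius norms (costing an extra $\sqrt d$), and you control $\Tr(\Sigma_h^{k,J_k})$ by factoring out the norm-$\le 1$ outer products and the $2/3$-bounded middle factors rather than the paper's repeated trace-product inequality (which inflates the trace to $O(Kd^3/\beta_K)$ versus your $O(Kd/\beta_K)$) --- both refinements still land under the stated $B_\delta$.
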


\begin{lemma}\label{Lemma:self-normalized-mdp}
    Let $\lambda=1$ in Algorithm \ref{Algorithm:LMC-TS}. For any fixed $0 < \delta < 1$, with probability $1-\delta$, we have for all $(k,h)\in[K]\times[H]$,
    \begin{align*}
        &\left\|\sum_{\tau=1}^{k-1}\phi(x_h^\tau,a_h^\tau)\left[V_{h+1}^k(x_{h+1}^\tau)- \mathbb{P}_h V_{h+1}^k(x_h^\tau,a_h^\tau)\right]\right\|_{(\Lambda_h^k)^{-1}} \\
        &\leq 3H\sqrt{d}\left[\frac{1}{2}\log(K+1)+ \log\left(\frac{2\sqrt{2}KB_{\delta/2}}{H}\right) +\log\frac{2}{\delta}\right]^{1/2},
    \end{align*}
    where $B_\delta = \frac{16}{3}Hd\sqrt{K} + \sqrt{\frac{2K}{3\beta_K\delta}}d^{3/2}$.
\end{lemma}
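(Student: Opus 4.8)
The plan is to establish a uniform (over the random, data-dependent value function $V_{h+1}^k$) self-normalized concentration bound. The central difficulty is that $V_{h+1}^k$ is not a fixed function: through Lines \ref{line:w-update}--\ref{algline:q-update} it depends on all the transitions collected in episodes $1,\ldots,k-1$, including the very samples $x_{h+1}^\tau$ appearing in the sum. Consequently the sequence $\{\phi(x_h^\tau,a_h^\tau)[V_{h+1}^k(x_{h+1}^\tau)-\mathbb{P}_h V_{h+1}^k(x_h^\tau,a_h^\tau)]\}_\tau$ is \emph{not} a martingale difference sequence, so the Abbasi-Yadkori--P\'al--Szepesv\'ari self-normalized bound cannot be applied to it directly. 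I would decouple this dependence with a covering argument over the class of value functions.

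First I would pin down the function class. By \Cref{Eq:linear-Q} and Line \ref{Alg:min-for-Q}, $V_{h+1}^k(\cdot)=\max_a \min\{\phi(\cdot,a)^\top w_{h+1}^{k,J_k},\,H-h\}^+$, so $V_{h+1}^k$ lies in the parametric family $\mathcal{V}=\{x\mapsto \max_a \min\{\phi(x,a)^\top w, H\}^+ : \|w\|_2\le B\}$. By \Cref{Lemma:bound-2norm-wkjk} the relevant parameter satisfies $\|w_{h+1}^{k,J_k}\|_2\le B_{\delta/2}$ simultaneously for all $(k,h)$ with probability at least $1-\delta/2$, so I may take $B=B_{\delta/2}$. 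Since the ReLU, truncation, and maximum are each $1$-Lipschitz and $\|\phi\|_2\le 1$, the map $w\mapsto V_w$ is $1$-Lipschitz in the sup norm, $\|V_w-V_{w'}\|_\infty\le\|w-w'\|_2$. Hence an $\epsilon$-net $\mathcal{N}_\epsilon$ of the ball $\{\|w\|_2\le B\}$ induces an $\epsilon$-cover of $\mathcal{V}$ with $\log|\mathcal{N}_\epsilon|\le d\log(1+2B/\epsilon)$.

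Next, for each \emph{fixed} $V\in\mathcal{N}_\epsilon$, the terms $\eta_\tau=V(x_{h+1}^\tau)-\mathbb{P}_h V(x_h^\tau,a_h^\tau)$ form a martingale difference sequence bounded by $H$ with respect to the natural filtration, and $\phi(x_h^\tau,a_h^\tau)$ is predictable. I would apply the self-normalized concentration inequality to bound $\|\sum_{\tau=1}^{k-1}\phi(x_h^\tau,a_h^\tau)\eta_\tau\|_{(\Lambda_h^k)^{-1}}^2$ by $O(H^2)[\tfrac{1}{2}\log\det(\Lambda_h^k)-\tfrac{1}{2}\log\det(\lambda I)+\log(|\mathcal{N}_\epsilon|/\delta)]$, uniformly over $\mathcal{N}_\epsilon$ and all $k$, via a union bound over the net (with failure budget $\delta/2$). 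For the actual $V_{h+1}^k$ I would pick its nearest net element $\bar V$, split the sum into the net term plus a residual $\sum_\tau \phi_\tau[(V_{h+1}^k-\bar V)(x_{h+1}^\tau)-\mathbb{P}_h(V_{h+1}^k-\bar V)]$, and bound the residual crudely using $\|V_{h+1}^k-\bar V\|_\infty\le\epsilon$ together with $\Lambda_h^k\succeq I$, giving a contribution of order $K\epsilon$.

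Finally I would collect the pieces. Using $\|\phi\|_2\le 1$ and $\lambda=1$ gives $\log\det(\Lambda_h^k)\le d\log(1+K)$, and choosing $\epsilon=H/(\sqrt{2}K)$ makes the covering term equal $d\log(1+2\sqrt{2}KB_{\delta/2}/H)$ while rendering the residual $K\epsilon$ negligible. Combining the determinant term, the covering term, and $\log(2/\delta)$, taking square roots, and absorbing constants into the leading factor $3H\sqrt{d}$ (while generously upper-bounding the $\delta$-term by multiplying it by $d\ge 1$) yields the stated bound; a concluding union bound with the event of \Cref{Lemma:bound-2norm-wkjk} accounts for the total failure probability $\delta$. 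I expect the main obstacle to be precisely this decoupling step: making the covering argument airtight requires both the sup-norm Lipschitz control of $\mathcal{V}$ and a choice of $\epsilon$ that balances covering-number inflation against the residual error, so that the dimension dependence stays at $\sqrt{d}$ rather than degrading.
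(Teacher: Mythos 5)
Your proposal is correct and follows essentially the same route as the paper: condition on the high-probability event of \Cref{Lemma:bound-2norm-wkjk} (with budget $\delta/2$) to place $V_{h+1}^k$ in a parametric class of bounded-norm linear-then-truncated value functions, then apply a covering-plus-self-normalized-martingale argument uniformly over that class with $\varepsilon \asymp H/K$. The only cosmetic difference is that the paper invokes Lemma D.4 of \citet{jin2019provably} (\Cref{Lemma:self-normalized-process}), which already packages the net union bound and the $O(k\varepsilon)$ residual step that you carry out explicitly.
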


\begin{lemma}\label{Lemma:what-r-PV}
    Let $\lambda = 1$ in Algorithm \ref{Algorithm:LMC-TS}. Define the following event
    \begin{align}
        &\mathcal{E}(K,H,\delta) \notag\\
        &= \biggl\{  \left|\phi(x,a)^\top \hat{w}_h^k - r_h(x,a) - \mathbb{P}_h V_{h+1}^k(x,a)\right|\leq 5H\sqrt{d}C_\delta \|\phi(x,a)\|_{(\Lambda_h^k)^{-1}},  \notag \\
        & \quad \forall (h,k) \in [H]\times [K] \text{ and } \forall (x,a) \in \mathcal{S}\times\mathcal{A} \biggr\},
    \end{align}
    where we denote $C_\delta = \left[\frac{1}{2}\log(K+1)+ \log\left(\frac{2\sqrt{2}KB_{\delta/2}}{H}\right) +\log\frac{2}{\delta}\right]^{1/2}$ and $B_\delta$ is defined in \Cref{Lemma:self-normalized-mdp}.
    Then we have $\mathbb{P}(\mathcal{E}(K,H,\delta)) \geq 1-\delta$.
\end{lemma}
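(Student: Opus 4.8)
The plan is to follow the standard linear-MDP error decomposition (as in the analysis of LSVI-UCB), controlling the gap between the ridge estimator $\hat{w}_h^k = (\Lambda_h^k)^{-1}b_h^k$ and the population Bellman target. The key observation is that under \Cref{definition-linear-MDP}, for the (data-dependent) value estimate $V_{h+1}^k$ the one-step Bellman target is itself linear in the features: writing $\bar{w}_h^k = \theta_h + \int_{\mathcal{S}} V_{h+1}^k(x')\,\mu_h(dx')$, we have $r_h(x,a) + \mathbb{P}_h V_{h+1}^k(x,a) = \phi(x,a)^\top \bar{w}_h^k$ for every $(x,a)$. Hence the quantity inside the absolute value equals $\phi(x,a)^\top(\hat{w}_h^k - \bar{w}_h^k)$, and it suffices to bound this by a multiple of $\|\phi(x,a)\|_{(\Lambda_h^k)^{-1}}$.

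First I would expand $\hat{w}_h^k$. Substituting the definition of $b_h^k$ from \Cref{Eq:design_matrix}, using the identity $\sum_{\tau=1}^{k-1}\phi(x_h^\tau,a_h^\tau)\phi(x_h^\tau,a_h^\tau)^\top = \Lambda_h^k - \lambda I$, and inserting $r_h + \mathbb{P}_h V_{h+1}^k = \phi^\top \bar{w}_h^k$ evaluated at each visited pair, a direct computation gives the decomposition
\begin{equation*}
\hat{w}_h^k - \bar{w}_h^k = -\lambda (\Lambda_h^k)^{-1}\bar{w}_h^k + (\Lambda_h^k)^{-1}\sum_{\tau=1}^{k-1}\phi(x_h^\tau,a_h^\tau)\big[V_{h+1}^k(x_{h+1}^\tau) - \mathbb{P}_h V_{h+1}^k(x_h^\tau,a_h^\tau)\big].
\end{equation*}
Taking the inner product with $\phi(x,a)$ and applying the Cauchy--Schwarz inequality in the inner product induced by $(\Lambda_h^k)^{-1}$ reduces the problem to bounding two $\|\cdot\|_{(\Lambda_h^k)^{-1}}$ factors, one for the regularization bias and one for the stochastic noise.

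Next I would bound each piece. For the regularization bias, since $\Lambda_h^k \succeq \lambda I$ we have $\|\bar{w}_h^k\|_{(\Lambda_h^k)^{-1}} \le \|\bar{w}_h^k\|_2/\sqrt{\lambda}$, and the normalization $\|\theta_h\|_2 \le \sqrt{d}$, $\|\mu_h(\mathcal{S})\|_2 \le \sqrt{d}$ together with $0 \le V_{h+1}^k \le H$ give $\|\bar{w}_h^k\|_2 \le (H+1)\sqrt{d} \le 2H\sqrt{d}$; with $\lambda = 1$ this contributes at most $2H\sqrt{d}\,\|\phi(x,a)\|_{(\Lambda_h^k)^{-1}}$. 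For the stochastic term, the $(\Lambda_h^k)^{-1}$-norm of the summation is exactly the quantity controlled by \Cref{Lemma:self-normalized-mdp}, which bounds it by $3H\sqrt{d}\,C_\delta$ on an event of probability at least $1-\delta$. Combining and using $C_\delta \ge 1$ to absorb $2 + 3C_\delta \le 5C_\delta$ yields the stated bound $5H\sqrt{d}\,C_\delta\,\|\phi(x,a)\|_{(\Lambda_h^k)^{-1}}$, uniformly over $(x,a)$ and $(h,k)$, on the event of \Cref{Lemma:self-normalized-mdp}, which is exactly $\mathcal{E}(K,H,\delta)$.

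The main obstacle is not the algebra above but the fact that $\bar{w}_h^k$, and hence the increments $V_{h+1}^k(x_{h+1}^\tau) - \mathbb{P}_h V_{h+1}^k(x_h^\tau,a_h^\tau)$, depend on the whole history through the data-dependent estimate $V_{h+1}^k$; the summands are therefore not a simple martingale difference sequence adapted to the natural filtration. This dependence is precisely what forces the uniform-concentration (covering-net over the class of admissible value functions) argument underlying \Cref{Lemma:self-normalized-mdp}, and the logarithmic covering cost is what surfaces inside $C_\delta$ through the $\log(2\sqrt{2}KB_{\delta/2}/H)$ term. Since that lemma is already established, the remaining work is only the deterministic decomposition and the two norm bounds; care is needed merely to note that the bound holds simultaneously for all $(x,a)$ (immediate, since both factors are pointwise in $\phi(x,a)$) and all $(h,k)$ (inherited from the uniformity in \Cref{Lemma:self-normalized-mdp}).
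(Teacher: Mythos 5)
Your proposal is correct and follows essentially the same route as the paper: both decompose $\phi(x,a)^\top\hat{w}_h^k - r_h(x,a) - \mathbb{P}_h V_{h+1}^k(x,a)$ into a self-normalized stochastic term controlled by \Cref{Lemma:self-normalized-mdp} plus a regularization bias bounded via the norm assumptions in \Cref{definition-linear-MDP}, then absorb the constants using $C_\delta \ge 1$. The only cosmetic difference is that you merge the reward bias $-\lambda(\Lambda_h^k)^{-1}\theta_h$ and the value bias $\lambda(\Lambda_h^k)^{-1}\la\mu_h,V_{h+1}^k\ra_{\mathcal{S}}$ into a single term through $\bar{w}_h^k$, whereas the paper bounds them separately as its Terms (ii) and (iii).
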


\begin{lemma}[Error bound]\label{lemma:error-bound-on-l}
    Let $\lambda =1$ in Algorithm \ref{Algorithm:LMC-TS}. For any $\delta \in (0,1)$ conditioned on the event $\mathcal{E}(K,H,\delta)$, for all $(h,k)\in [H]\times[K]$ and $(x,a)\in \mathcal{S}\times\mathcal{A}$, with probability at least $1-\delta^2$, we have
    \begin{equation}\label{Eq:minus-l-upperbound}
        -l_h^k(x,a) \leq \left(5H\sqrt{d}C_\delta + 5\sqrt{\frac{2d\log{(1/\delta)}}{3\beta_K}} + 4/3\right)\|\phi(x,a)\|_{(\Lambda_h^k)^{-1}},
    \end{equation}
    where $C_\delta$ is defined in \Cref{Lemma:what-r-PV}.
\end{lemma}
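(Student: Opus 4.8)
The plan is to bound $-l_h^k$ by splitting it into a deterministic part governed by the event $\mathcal{E}(K,H,\delta)$, a bias term measuring the gap between the Langevin mean $\mu_h^{k,J_k}$ and the ridge minimizer $\hat w_h^k$, and a Gaussian fluctuation. Writing $\phi=\phi(x,a)$, I would first exploit the clipping in \cref{Eq:linear-Q}: since $r_h\ge 0$ and $V_{h+1}^k\ge 0$ while $Q_h^k=\min\{\phi^\top w_h^{k,J_k},H-h+1\}^+\le(\phi^\top w_h^{k,J_k})^+$, the claim is immediate (left side $\le 0$) when $\phi^\top w_h^{k,J_k}\le 0$, and otherwise $Q_h^k\le\phi^\top w_h^{k,J_k}$. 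Thus it suffices to upper bound $\phi^\top w_h^{k,J_k}-r_h-\mathbb{P}_hV_{h+1}^k$, which I would decompose as
\begin{align*}
\phi^\top w_h^{k,J_k}-r_h-\mathbb{P}_hV_{h+1}^k=\underbrace{\big[\phi^\top\hat w_h^k-r_h-\mathbb{P}_hV_{h+1}^k\big]}_{\mathrm{(I)}}+\underbrace{\phi^\top(\mu_h^{k,J_k}-\hat w_h^k)}_{\mathrm{(II)}}+\underbrace{\phi^\top(w_h^{k,J_k}-\mu_h^{k,J_k})}_{\mathrm{(III)}}.
\end{align*}
Term $\mathrm{(I)}$ is precisely what $\mathcal{E}(K,H,\delta)$ controls, contributing $5H\sqrt d\,C_\delta\|\phi\|_{(\Lambda_h^k)^{-1}}$.

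For the bias term $\mathrm{(II)}$, I would use \cref{Prop:w_gaussian}. With $w_h^{1,0}=\mathbf{0}$ and $\hat w_h^1=\mathbf{0}$ (no data in episode $1$), an Abel summation rewrites $\mu_h^{k,J_k}-\hat w_h^k$ as a weighted sum of the increments $\hat w_h^{i+1}-\hat w_h^i$ with matrix weights $\prod_{j=i+1}^kA_j^{J_j}$. The prescribed $\eta_k=1/(4\lambda_{\max}(\Lambda_h^k))$ makes the eigenvalues of $A_i=I-2\eta_i\Lambda_h^i$ lie in $[\tfrac12,1-\tfrac{1}{2\kappa_i}]$, so $J_i=2\kappa_i\log(4HKd)$ forces $\|A_i^{J_i}\|_2\le(1-\tfrac{1}{2\kappa_i})^{J_i}\le\tfrac{1}{4HKd}$. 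The weights therefore decay geometrically, and combined with the standard ridge bound $\|\hat w_h^i\|_2\le H\sqrt{dK}$ this gives $\|\mu_h^{k,J_k}-\hat w_h^k\|_2\lesssim 1/\sqrt{Kd}$. Converting via $|\phi^\top v|\le\|\phi\|_{(\Lambda_h^k)^{-1}}\|v\|_{\Lambda_h^k}\le\sqrt{K}\,\|\phi\|_{(\Lambda_h^k)^{-1}}\|v\|_2$ (using $\lambda_{\max}(\Lambda_h^k)\le K$) then yields $\mathrm{(II)}\le\tfrac43\|\phi\|_{(\Lambda_h^k)^{-1}}$.

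For the Gaussian term $\mathrm{(III)}$, \cref{Prop:w_gaussian} gives $w_h^{k,J_k}-\mu_h^{k,J_k}\sim\mathcal N(0,\Sigma_h^{k,J_k})$. I would first show $\Sigma_h^{k,J_k}\preceq\tfrac{2}{3\beta_K}(\Lambda_h^k)^{-1}$: in the $i=k$ summand all factors are polynomials in $\Lambda_h^k$ and hence commute, $I-A_k^{2J_k}\preceq I$ and $(I+A_k)^{-1}\preceq\tfrac23 I$ (eigenvalues of $I+A_k$ lie in $[\tfrac32,2]$), while the summands $i<k$ carry extra $A^{J}$ factors and are negligible. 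Writing $w_h^{k,J_k}-\mu_h^{k,J_k}=(\Sigma_h^{k,J_k})^{1/2}g$ with $g\sim\mathcal N(0,I_d)$ and applying $|\phi^\top(w-\mu)|\le\|\phi\|_{(\Lambda_h^k)^{-1}}\|w-\mu\|_{\Lambda_h^k}$ with $\|w-\mu\|_{\Lambda_h^k}^2$ stochastically dominated by $\tfrac{2}{3\beta_K}\chi^2_d$, a Gaussian-norm concentration bound gives $\|w-\mu\|_{\Lambda_h^k}\le 5\sqrt{2d\log(1/\delta)/(3\beta_K)}$ with probability at least $1-\delta^2$. Crucially, because this bound is on the $\Lambda_h^k$-norm of the whole parameter vector, the resulting control of $\mathrm{(III)}$ holds simultaneously for all $(x,a)$; the factor $\sqrt d$ is exactly the price of this uniformity (a per-$(x,a)$ scalar Gaussian tail would avoid the $\sqrt d$ but could not be made uniform over the continuum of state-action pairs).

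Summing $\mathrm{(I)}$, $\mathrm{(II)}$ and $\mathrm{(III)}$ gives the stated bound. The main obstacle I anticipate is the Gaussian step: cleanly dominating the full covariance $\Sigma_h^{k,J_k}$ by $\tfrac{c}{\beta_K}(\Lambda_h^k)^{-1}$ requires bounding every cross term $i<k$ in \cref{Eq:Sigma_h_k} and tracking the $(I+A_i)^{-1}$ and $(I-A_i^{2J_i})$ factors, after which one must pick a concentration inequality for the quadratic form $\|w-\mu\|_{\Lambda_h^k}^2$ that produces the stated $\sqrt{d\log(1/\delta)}$ dependence at confidence $1-\delta^2$. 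The telescoping estimate for $\mathrm{(II)}$ is the second delicate point, since one must ensure the geometric decay of the matrix weights beats both the $\mathrm{poly}(K)$ growth of $\|\hat w_h^i\|_2$ and the conversion factor $\sqrt K$.
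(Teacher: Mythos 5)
Your proposal is correct and follows essentially the same route as the paper: the paper proves this lemma by splitting $-l_h^k$ into $\phi^\top(w_h^{k,J_k}-\hat w_h^k)$ plus the event-$\mathcal{E}$ term, and its auxiliary \Cref{Lemma:w-hatw} then performs exactly your (II)/(III) split, the same telescoping-with-geometric-matrix-weights bound on $\mu_h^{k,J_k}-\hat w_h^k$, and the same Gaussian concentration after controlling $\phi^\top\Sigma_h^{k,J_k}\phi$ by $\tfrac{2}{3\beta_K}\|\phi\|^2_{(\Lambda_h^k)^{-1}}$ (stated per direction rather than as your Loewner domination, but equivalent in effect). The only cosmetic difference is that you inline the auxiliary lemma and you treat the $\{\cdot\}^+$ clipping explicitly, which the paper's write-up glosses over.
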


\begin{lemma}[Optimism]\label{lemma:bound-on-l}
 Let $\lambda =1$ in Algorithm \ref{Algorithm:LMC-TS}. Conditioned on the event  $\mathcal{E}(K,H,\delta)$, for all $(h,k)\in [H]\times[K]$ and $(x,a)\in \mathcal{S}\times\mathcal{A}$, with probability at least $\frac{1}{2\sqrt{2e\pi}}$, we have
    \begin{equation}\label{Eq:l-less-than-zero}
        l_h^k(x,a) \leq 0.
    \end{equation}
\end{lemma}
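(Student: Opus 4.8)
The plan is to reduce the optimism claim to a Gaussian anti-concentration statement about the scalar $\phi(x,a)^\top w_h^{k,J_k}$ and then exploit the explicit Gaussian law provided by Proposition \ref{Prop:w_gaussian}. First I would unwind the definitions: since $l_h^k(x,a) = r_h(x,a) + \mathbb{P}_h V_{h+1}^k(x,a) - Q_h^k(x,a)$ with $Q_h^k(x,a) = \min\{\phi(x,a)^\top w_h^{k,J_k}, H-h+1\}^+$, proving $l_h^k(x,a)\le 0$ is the same as proving $Q_h^k(x,a) \ge r_h(x,a) + \mathbb{P}_h V_{h+1}^k(x,a)$. Because the truncated value functions are bounded, the backup satisfies $r_h(x,a) + \mathbb{P}_h V_{h+1}^k(x,a) \in [0, H-h+1]$, so both the clipping at $H-h+1$ and the positive-part operator are inactive whenever $\phi(x,a)^\top w_h^{k,J_k} \ge r_h(x,a) + \mathbb{P}_h V_{h+1}^k(x,a)$. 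Hence it suffices to establish
\[
    \mathbb{P}\big(\phi(x,a)^\top w_h^{k,J_k} \ge r_h(x,a) + \mathbb{P}_h V_{h+1}^k(x,a)\big) \ge \frac{1}{2\sqrt{2e\pi}}.
\]

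By Proposition \ref{Prop:w_gaussian}, $\phi^\top w_h^{k,J_k}$ is a one-dimensional Gaussian with mean $\phi^\top \mu_h^{k,J_k}$ and variance $\phi^\top \Sigma_h^{k,J_k}\phi$, so after standardizing, the target probability equals $\mathbb{P}(\xi \ge t)$ for $\xi \sim \mathcal{N}(0,1)$ and $t = (r_h + \mathbb{P}_h V_{h+1}^k - \phi^\top \mu_h^{k,J_k})/\sqrt{\phi^\top \Sigma_h^{k,J_k}\phi}$. I would then bound $t$ with three ingredients. First, a bias bound $|\phi^\top(\mu_h^{k,J_k} - \hat w_h^k)| \le c_0 \|\phi\|_{(\Lambda_h^k)^{-1}}$ showing the LMC mean is close to the ridge minimizer $\hat w_h^k$; this follows from the choice $J_k = 2\kappa_k\log(4HKd)$, which together with $\eta_k = 1/(4\lambda_{\max}(\Lambda_h^k))$ forces the eigenvalues of $A_k = I - 2\eta_k\Lambda_h^k$ into $[1/2,1)$ and hence makes the contraction factors $\|A_i^{J_i}\|$ negligibly small, collapsing the mean in Proposition \ref{Prop:w_gaussian} essentially onto $\hat w_h^k$. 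Second, the event $\mathcal{E}(K,H,\delta)$ of Lemma \ref{Lemma:what-r-PV}, which gives $r_h + \mathbb{P}_h V_{h+1}^k - \phi^\top \hat w_h^k \le 5H\sqrt{d}C_\delta\|\phi\|_{(\Lambda_h^k)^{-1}}$. Third, a variance lower bound $\phi^\top \Sigma_h^{k,J_k}\phi \ge \frac{c}{\beta_k}\|\phi\|_{(\Lambda_h^k)^{-1}}^2$, obtained by discarding all but the $i=k$ summand of $\Sigma_h^{k,J_k}$: each summand has the form $B M B^\top$ with $M = (I - A_i^{2J_i})(\Lambda_h^i)^{-1}(I+A_i)^{-1}$ a commuting PSD function of $\Lambda_h^i$, so every summand is PSD, and for $i=k$ the eigenvalue control on $A_k$ makes $(I - A_k^{2J_k})(I+A_k)^{-1}$ bounded below by a constant multiple of the identity. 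Combining the three gives $t \le (5H\sqrt d\, C_\delta + c_0)\sqrt{\beta_k/c}$ up to logarithmic factors, and the prescribed calibration $1/\sqrt{\beta_k} = \tilde{O}(H\sqrt d)$ is chosen precisely so that $t \le 1$.

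Finally I would invoke the standard Gaussian anti-concentration inequality $\mathbb{P}(\xi \ge t) \ge \frac{t}{\sqrt{2\pi}(1+t^2)}e^{-t^2/2}$, whose value at $t=1$ is exactly $\frac{1}{2\sqrt{2e\pi}}$; since its right-hand side is decreasing in $t$ and I have shown $t \le 1$, this yields the claimed constant probability. The main obstacle I anticipate is the simultaneous control of the bias and the variance: the mean and covariance in Proposition \ref{Prop:w_gaussian} are built from products of the non-commuting matrices $A_i = I - 2\eta_i\Lambda_h^i$, coupled across episodes through the warm start $w_h^{k,0} = w_h^{k-1,J_{k-1}}$. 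Showing that $\phi^\top\mu_h^{k,J_k}$ stays within $O(\|\phi\|_{(\Lambda_h^k)^{-1}})$ of $\phi^\top\hat w_h^k$ while $\phi^\top\Sigma_h^{k,J_k}\phi$ remains bounded below by $\Omega(\beta_k^{-1}\|\phi\|_{(\Lambda_h^k)^{-1}}^2)$ — and verifying that these two estimates are jointly compatible with a single calibrated $\beta_k$ that lands the standardized gap at $t \le 1$ — is the delicate part of the argument.
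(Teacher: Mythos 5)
Your proposal follows essentially the same route as the paper's proof: reduce optimism to the event $\phi(x,a)^\top w_h^{k,J_k} \ge r_h(x,a) + \mathbb{P}_h V_{h+1}^k(x,a)$ (the clipping being harmless since the backup lies in $[0,H-h+1]$), standardize using the Gaussian law from Proposition~\ref{Prop:w_gaussian}, bound the standardized gap by combining the event $\mathcal{E}(K,H,\delta)$ with the mean-to-$\hat w_h^k$ bias bound of~\eqref{Eq:second_term_bound}, lower-bound the variance by $\Omega(\beta_K^{-1})\|\phi(x,a)\|^2_{(\Lambda_h^k)^{-1}}$, calibrate $\beta_K$ so the gap is at most $1$, and finish with the Gaussian anti-concentration bound whose value at $1$ is $\tfrac{1}{2\sqrt{2e\pi}}$. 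The one step where you genuinely diverge is the variance lower bound: you discard all summands of $\Sigma_h^{k,J_k}$ except $i=k$, observing that each summand is of the form $B_i M_i B_i^\top$ with $M_i=(I-A_i^{2J_i})(\Lambda_h^i)^{-1}(I+A_i)^{-1}$ a product of commuting PSD matrices, hence PSD; the surviving $i=k$ term is bounded below directly since $I-A_k^{2J_k}\succeq(1-\epsilon^2)I$ and $(I+A_k)^{-1}\succeq\tfrac12 I$ commute with $(\Lambda_h^k)^{-1}$. The paper instead telescopes the full sum via the Sherman--Morrison identity and controls the resulting cross terms, which requires an extra condition on $J_j$ (of order $\kappa_j\log(3\sqrt{k})$); your PSD-discarding argument is cleaner and avoids that condition while yielding the same $\tfrac{1}{2\sqrt{\beta_K}}\|\phi(x,a)\|_{(\Lambda_h^k)^{-1}}$ bound up to a benign constant. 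One cosmetic fix: your final step asserts that the lower bound $\tfrac{t}{\sqrt{2\pi}(1+t^2)}e^{-t^2/2}$ is decreasing in $t$, which is false near $t=0$; the correct (and intended) monotonicity is that of the tail $\mathbb{P}(\xi\ge t)$ itself, so for $t\le 1$ one has $\mathbb{P}(\xi\ge t)\ge\mathbb{P}(\xi\ge 1)\ge\tfrac{1}{2\sqrt{2e\pi}}$, exactly as the paper argues via Lemma~\ref{Lemma:Gaussian-abramowitz}.
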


\subsection{Regret Analysis}
We first restate the main theorem as follows.
\begin{theorem}\label{theorem:main-theorem}
    Let $\lambda = 1$ in \eqref{Eq:loss_function_definition}, $\frac{1}{\beta_k} = \Tilde{O}(H\sqrt{d})$ in Algorithm \ref{Algorithm:LMC-TS} and $\delta \in (\frac{1}{2\sqrt{2e\pi}},1)$. For any $k \in [K]$, let the learning rate $\eta_k = 1/(4\lambda_{\max}(\Lambda_h^k))$, the  update number $J_k = 2\kappa_k \log(4HKd)$ where $\kappa_k = \lambda_{\max}(\Lambda_h^k)/\lambda_{\min}(\Lambda_h^k)$ is the condition number of $\Lambda_h^k$. Under Definition \ref{definition-linear-MDP}, the regret of Algorithm \ref{Algorithm:LMC-TS} satisfies
    \begin{equation*}
        \text{Regret}(K) = \Tilde{O}(d^{3/2}H^{3/2}\sqrt{T}),
    \end{equation*}
    with probability at least $1-\delta$.
\end{theorem}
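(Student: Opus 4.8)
The plan is to run the standard optimism-based regret decomposition for randomized value iteration, importing the five supporting lemmas as black boxes and only assembling them. First I would condition on the good event $\mathcal{E}(K,H,\delta)$ of \Cref{Lemma:what-r-PV}, which holds with probability at least $1-\delta$ and on which the ridge estimator $\hat{w}_h^k$ approximates $r_h+\mathbb{P}_h V_{h+1}^k$ up to the self-normalized width $5H\sqrt{d}\,C_\delta\|\phi\|_{(\Lambda_h^k)^{-1}}$. I would then split the per-episode suboptimality as
\begin{equation*}
V_1^*(x_1^k)-V_1^{\pi^k}(x_1^k)=\underbrace{\big(V_1^*(x_1^k)-V_1^k(x_1^k)\big)}_{\text{optimism gap}}+\underbrace{\big(V_1^k(x_1^k)-V_1^{\pi^k}(x_1^k)\big)}_{\text{estimation/policy gap}}.
\end{equation*}
Because $a_h^k=\argmax_a Q_h^k(x_h^k,a)$ is greedy and $\pi^k$ follows it, the Bellman equations \eqref{eq:bellman-eq} and the model prediction error give, for $\Delta_h^k:=V_h^k(x_h^k)-V_h^{\pi^k}(x_h^k)$, the recursion $\Delta_h^k=-l_h^k(x_h^k,a_h^k)+\Delta_{h+1}^k+\xi_h^k$, where $\xi_h^k=\mathbb{P}_h[V_{h+1}^k-V_{h+1}^{\pi^k}](x_h^k,a_h^k)-[V_{h+1}^k-V_{h+1}^{\pi^k}](x_{h+1}^k)$ is a martingale difference. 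Unrolling from $h=1$ to $H$ (with $\Delta_{H+1}^k=0$) expresses the estimation/policy gap as $\sum_{h=1}^H\big(-l_h^k(x_h^k,a_h^k)\big)+\sum_{h=1}^H\xi_h^k$.

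Next I would bound the estimation/policy gap. I apply \Cref{lemma:error-bound-on-l} to control each $-l_h^k(x_h^k,a_h^k)$ by $\big(5H\sqrt{d}\,C_\delta+5\sqrt{2d\log(1/\delta)/(3\beta_K)}+4/3\big)\|\phi(x_h^k,a_h^k)\|_{(\Lambda_h^k)^{-1}}$; summing over $k$ and applying Cauchy--Schwarz together with the elliptical-potential (log-determinant) bound $\sum_{k=1}^K\|\phi(x_h^k,a_h^k)\|_{(\Lambda_h^k)^{-1}}^2\le 2d\log((\lambda+K)/\lambda)$ gives $\sum_{k,h}\|\phi\|_{(\Lambda_h^k)^{-1}}=\tilde{O}(H\sqrt{dK})$. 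The martingale sum $\sum_{k,h}\xi_h^k$ is handled by Azuma--Hoeffding using the clipping $Q_h^k\le H-h+1$, so $|\xi_h^k|\le 2H$ and the sum is $\tilde{O}(H^{3/2}\sqrt{K})$, a lower-order contribution.

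The genuinely delicate part is the optimism gap $\sum_k(V_1^*-V_1^k)$. It cannot be made nonpositive deterministically: \Cref{lemma:bound-on-l} only guarantees $l_h^k\le 0$ for all $(x,a)$ — and hence, by backward induction from $V_{H+1}\equiv 0$, $V_1^k\ge V_1^*$ — with the constant probability $p=\tfrac{1}{2\sqrt{2e\pi}}$. The hard part is to convert this constant-probability optimism into a sublinear bound on the cumulative gap without paying a $1/p$ factor per episode, and this is exactly what forces the constraint $\delta>\tfrac{1}{2\sqrt{2e\pi}}$ (and what \Cref{Remark:multi-sample} removes by taking a max over $M$ samples). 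I would follow the LinTS-style ``stochastic optimism'' route: conditioned on $\mathcal{F}_{k-1}$, \Cref{Prop:w_gaussian} renders $Q_1^k(x_1^k,a^*)$, with $a^*=\pi_1^*(x_1^k)$, a clipped Gaussian of standard deviation of order $\beta_k^{-1/2}\|\phi\|_{(\Lambda_1^k)^{-1}}$; the Gaussian anti-concentration underlying \Cref{lemma:bound-on-l} pins its mean within $O(1)$ standard deviations of $Q_1^*(x_1^k,a^*)$, so the conditional expected shortfall $\mathbb{E}[(V_1^*-V_1^k)_+\mid\mathcal{F}_{k-1}]$ is again of order $\beta_k^{-1/2}\|\phi\|_{(\Lambda_1^k)^{-1}}$. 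Summing these conditional expectations reproduces the same elliptical-potential sum as above, and a final Azuma step passes from the conditional expectations to the realized sum.

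Finally I would assemble the pieces and verify the order. The choices $\eta_k=1/(4\lambda_{\max}(\Lambda_h^k))$ and $J_k=2\kappa_k\log(4HKd)$ contract the Langevin iteration (through $A_i=I-2\eta_i\Lambda_h^i$) enough that the law in \Cref{Prop:w_gaussian} matches the idealized Gaussian used in the lemmas, leaving no residual bias term; these details are already absorbed into \Cref{Lemma:bound-2norm-wkjk}--\Cref{lemma:bound-on-l}. The dominant coefficient in the $-l_h^k$ bound is the temperature term $5\sqrt{2d\log(1/\delta)/(3\beta_K)}$, which under $\beta_k^{-1/2}=\tilde{O}(H\sqrt{d})$ becomes $\tilde{O}(Hd)$; multiplying by the potential sum $\tilde{O}(H\sqrt{dK})$ yields $\tilde{O}(H^2 d^{3/2}\sqrt{K})$, and since $T=HK$ this is precisely $\tilde{O}(d^{3/2}H^{3/2}\sqrt{T})$, dominating the $\tilde{O}(H^{3/2}\sqrt{K})$ martingale terms. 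The main obstacle throughout is the optimism-gap step of the third paragraph; the remainder is routine bookkeeping with Cauchy--Schwarz, the log-determinant potential, and Azuma--Hoeffding.
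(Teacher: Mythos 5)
Your decomposition and your treatment of the estimation/policy gap match the paper's proof almost exactly. The paper invokes the value-difference lemma (Lemma 4.2 of Cai et al.) to split the regret into a greedy-policy term (nonpositive, as you argue), two martingale sums handled by Azuma--Hoeffding with the $|\mathcal{D}_h^k|,|\mathcal{M}_h^k|\le 2H$ clipping, and the term $\sum_{k,h}\big(\mathbb{E}_{\pi^*}[l_h^k]-l_h^k(x_h^k,a_h^k)\big)$; the $-l_h^k(x_h^k,a_h^k)$ part is bounded exactly as you describe via \Cref{lemma:error-bound-on-l}, Cauchy--Schwarz, and the elliptical-potential lemma, and your order bookkeeping ($\tilde{O}(Hd)\cdot\tilde{O}(H\sqrt{dK})=\tilde{O}(d^{3/2}H^{3/2}\sqrt{T})$ with $\beta_K^{-1/2}=\tilde{O}(H\sqrt{d})$) agrees with the paper's.

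The genuine gap is in your third paragraph. The paper does \emph{not} run a LinTS-style expected-shortfall argument for the optimism gap: it simply conditions on the event of \Cref{lemma:bound-on-l} --- that $l_h^k(x,a)\le 0$ holds simultaneously for all $(h,k,x,a)$ --- which carries only the constant probability $\tfrac{1}{2\sqrt{2e\pi}}$, uses it to drop $\mathbb{E}_{\pi^*}[l_h^k]\le 0$, and pays for this directly in the final union bound; that is the entire reason the theorem is stated only for $\delta\in(\tfrac{1}{2\sqrt{2e\pi}},1)$. Your proposed alternative, bounding $\mathbb{E}[(V_1^*-V_1^k)_+\mid\mathcal{F}_{k-1}]$ by $O\big(\beta_k^{-1/2}\|\phi\|_{(\Lambda_1^k)^{-1}}\big)$ via anti-concentration, does not go through as sketched: conditional on the history, the mean of $Q_1^k(x_1^k,a^*)$ is pinned near $r_1+\mathbb{P}_1 V_2^k$ (that is what \Cref{Lemma:what-r-PV} and \Cref{Lemma:w-hatw} control), not near $Q_1^*=r_1+\mathbb{P}_1 V_2^*$; relating the two requires $V_2^k\ge V_2^*$, i.e.\ realized optimism at all later steps, which is itself only a constant-probability event. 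The backward propagation of optimism through the horizon is precisely the obstruction, and your sketch does not address it. There is also an internal tension in your plan: if the expected-shortfall route worked, the constraint $\delta>\tfrac{1}{2\sqrt{2e\pi}}$ that you say is ``forced'' would be unnecessary --- and indeed the paper removes that constraint only by changing the algorithm (taking a maximum over $M$ independent samples in \Cref{Section:multi-sample}), not by a sharper analysis of the single-sample version. To reproduce the paper's theorem you should replace your third paragraph with the direct conditioning on the optimism event and the accompanying union-bound accounting.
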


\begin{proof}[Proof of \Cref{theorem:main-theorem}]
By Lemma 4.2 in \citet{cai2020provably}, it holds that
\begin{equation} \label{Eq:regret_decomp}
\begin{aligned}[b]
\mathrm{Regret}(T) &= \sum_{k=1}^K \left(V_1^{*}(x_1^k) - V_1^{\pi^k}(x_1^k)\right) \\
&= \underbrace{\sum_{k=1}^K \sum_{t=1}^H \mathbb{E}_{\pi^*} \left[ \la Q_h^k(x_h, \cdot), \pi_h^*(\cdot \mid x_h) - \pi_h^k(\cdot \mid x_h) \ra \biggiven x_1 = x_1^k\right]}_{\text{(i)}}  + \underbrace{\sum_{k=1}^K \sum_{t=1}^H \mathcal{D}_h^k}_{\text{(ii)}}   \\
& \qquad +  \underbrace{\sum_{k=1}^K \sum_{t=1}^H \mathcal{M}_h^k}_{\text{(iii)}} + \underbrace{\sum_{k=1}^K \sum_{h=1}^H \left ( \mathbb{E}_{\pi^*}\left[l^{k}_{h}(x_h,a_h) \mid x_1 = x_1^k \right] - l^{k}_{h}(x_h^k,a_h^k) \right)}_{\text{(iv)}},
\end{aligned}
\end{equation}
where $\mathcal{D}_h^k$ and $\mathcal{M}_{h}^k$ are defined as 
\begin{align}
    \mathcal{D}_h^k &\coloneqq \la (Q_h^k - Q_h^{\pi^k})(x_h^k, \cdot), \pi_h^k(\cdot,x_h^k)\ra - (Q_h^k-Q_h^{\pi^k})(x_h^k,a_h^k),\\
    \mathcal{M}_h^k &\coloneqq \mathbb{P}_h( (V_{h+1}^k - V_{h+1}^{\pi^k}))(x_h^k, a_h^k) - (V_{h+1}^k-V_{h+1}^{\pi^k})(x_h^k).
\end{align}

Next, we will bound the above terms respectively.

\textbf{Bounding Term (i):} 
    For the policy $\pi_h^k$ at time step $h$ of episode $k$, we will prove that \begin{equation}\label{Lemma:regret-compute-first-term}
        \sum_{k=1}^K \sum_{h=1}^H \mathbb{E}_{\pi^*}[\la Q_h^k(x_h, \cdot), \pi_h^*(\cdot \mid x_h) - \pi_h^k(\cdot \mid x_h) \ra \mid x_1 = x_1^k] \leq 0.
    \end{equation}
    To this end, note that $\pi_h^k$ acts greedily with respect to action-value function $Q_h^k$. If $\pi_h^k = \pi_h^*$, then the difference $\pi_h^*(\cdot \mid x_h) - \pi_h^k(\cdot \mid x_h)$ is $0$. Otherwise, the difference is negative since $\pi_h^k$ is deterministic with respect to $Q_h^k$. Concretely, $\pi_h^k$ takes a value of $1$ where $\pi_h^*$ would take a value of 0. Moreover, $Q_h^k$ would have the greatest value at the state-action pair where $\pi_h^k$ equals 1. This completes the proof.

\textbf{Bounding Terms (ii) and (iii):}
From \eqref{Eq:linear-Q}, note that we truncate $Q_h^k$ to the range $[0, H-h+1]$. This implies for any $(h,k) \in [K] \times [H]$, we have $|\mathcal{D}_h^k| \leq 2H$. Moreover, $\mathbb{E}[\mathcal{D}_h^k|\mathcal{F}_h^k] = 0$, where  $\mathcal{F}_h^k$ is a corresponding filtration. Thus, $\mathcal{D}_h^k$ is a martingale difference sequence. So, applying Azuma-Hoeffding inequality, we have with probability $1-\delta/3$, 
    \begin{equation*}
        \sum_{k=1}^{K}\sum_{h=1}^{H} \mathcal{D}_h^k \leq \sqrt{2H^2T\log(3/\delta)},
    \end{equation*}
    where $T = KH$.
    Similarly, we can show that $\mathcal{M}_h^k$ is a martingale difference sequence. Applying Azuma-Hoeffding inequality, we have with probability $1-\delta/3$, 
    \begin{equation*}
        \sum_{k=1}^{K}\sum_{h=1}^{H} \mathcal{M}_h^k \leq \sqrt{2H^2T\log(3/\delta)}.
    \end{equation*}
    Therefore, by applying union bound, we have that for any $\delta > 0$, with probability $1-2\delta/3$, it holds that   \begin{equation}\label{Lemma:martingale-inequalities}
        \sum_{k=1}^K \sum_{h=1}^H \mathcal{D}_h^k + \sum_{k=1}^K \sum_{h=1}^H \mathcal{M}_h^k \leq 2\sqrt{2H^2T\log(3/\delta)},
    \end{equation}
    where $T = KH$.
    
\textbf{Bounding Term (iv):} 

Suppose the event $\mathcal{E}(K,H,\delta')$ holds. by union bound, with probability $1-(\delta'^2 + \frac{1}{2\sqrt{2e\pi}})$, we have,
    \begin{align}
            &\sum_{k=1}^K \sum_{h=1}^H \left(\mathbb{E}_{\pi^*}[l_h^k(x_h,a_h)\mid x_1 = x_1^k] - l_h^k(x_h^k,a_h^k)\right)\notag\\ 
            &\leq \sum_{k=1}^K \sum_{h=1}^H -l_h^k(x_h^k,a_h^k)\notag\\
            &\leq \sum_{k=1}^K\sum_{h=1}^H \left(5H\sqrt{d}C_{\delta'} + 5\sqrt{\frac{2d\log{(1/\delta')}}{3\beta_K}} + 4/3\right)\|\phi(x_h^k,a_h^k)\|_{(\Lambda_h^k)^{-1}}\notag\\
            &=  \left(5H\sqrt{d}C_{\delta'} + 5\sqrt{\frac{2d\log{(1/\delta')}}{3\beta_K}} + 4/3\right)\sum_{k=1}^K\sum_{h=1}^H\|\phi(x_h^k,a_h^k)\|_{(\Lambda_h^k)^{-1}}\notag\\
            &\leq \left(5H\sqrt{d}C_{\delta'} + 5\sqrt{\frac{2d\log{(1/\delta')}}{3\beta_K}} + 4/3\right) \sum_{h=1}^H \sqrt{K}\left(\sum_{k=1}^K\|\phi(x_h^k,a_h^k)\|^2_{(\Lambda_h^k)^{-1}}\right)^{1/2}\notag\\
            &\leq \left(5H\sqrt{d}C_{\delta'} + 5\sqrt{\frac{2d\log{(1/\delta')}}{3\beta_K}} + 4/3\right) H\sqrt{2dK\log(1+K)}\notag\\
            &=\left(5H\sqrt{d}C_{\delta'} + 5\sqrt{\frac{2d\log{(1/\delta')}}{3\beta_K}} + 4/3\right)\sqrt{2dHT\log(1+K)}\notag\\
            &= \tilde{O}(d^{3/2}H^{3/2}\sqrt{T}).\notag
    \end{align}
    Here the first, the second, and the third inequalities follow from \Cref{lemma:bound-on-l}, \Cref{lemma:error-bound-on-l} and the Cauchy-Schwarz inequality respectively. The last inequality follows from \Cref{Lemma:sum_of_feature_yasin} The last equality follows from $\frac{1}{\sqrt{\beta_K}} = 10H\sqrt{d}C_{\delta'} + \frac{8}{3}$ which we defined in Lemma \ref{lemma:bound-on-l}. 

     By \Cref{Lemma:what-r-PV}, the event $\mathcal{E}(K,H,\delta')$ occurs with probability $1-\delta'$.  Thus, by union bound, the event $\mathcal{E}(K,H,\delta')$ occurs and it holds that 
    
    \begin{equation*}
        \sum_{k=1}^K \sum_{h=1}^H \left(\mathbb{E}_{\pi^*}[l_h^k(x_h,a_h)\mid x_1 = x_1^k] - l_h^k(x_h^k,a_h^k)\right) \leq \Tilde{O}(d^{3/2}H^{3/2}\sqrt{T})
    \end{equation*}
    with probability at least $(1-(\delta' + \delta'^2 + \frac{1}{2\sqrt{2e\pi}}))$. Since $\delta \in (0,1)$, setting $\delta' = \delta/6$, we have 
    \begin{equation*}
        1-(\delta' + \delta'^2 + \frac{1}{2\sqrt{2e\pi}}) \geq 1 - \frac{\delta}{3} - \frac{1}{2\sqrt{2e\pi}}.
    \end{equation*}
    The martingale inequalities from \Cref{Lemma:martingale-inequalities} occur with probability $1-2\delta/3$.
    By \Cref{Lemma:regret-compute-first-term} and applying union bound, we get that the final regret bound is $\Tilde{O}(d^{3/2}H^{3/2}\sqrt{T})$ with probability at least $1-(\delta+\frac{1}{2\sqrt{2e\pi}})$. In other words, the regret bound holds with probability at least $1-\delta$ where $\delta \in (\frac{1}{2\sqrt{2e\pi}}, 1)$.

\end{proof}

\section{Proof of Supporting Lemmas}
In this section, we provide the proofs of the lemmas that we used in the regret analysis of \algname in the previous section.

\subsection{Proof of \Cref{Prop:w_gaussian}}
\begin{proof}[Proof of \Cref{Prop:w_gaussian}]
First note that for linear MDP, we have
\begin{equation*}
    \nabla L_h^k(w_h^k) = 2(\Lambda_h^k w_h^k - b_h^k).
\end{equation*}
The update rule is:
\begin{equation*}
    w_h^{k,j} = w_h^{k,j-1} - \eta_k \nabla L_h^k(w_h^{k,j-1}) + \sqrt{2\eta_k \beta_k^{-1}}\epsilon_h^{k,j},
\end{equation*}
which leads to
\begin{align*}
    w_h^{k,J_k} &= w_h^{k,J_k - 1} -2\eta_k \left(\Lambda_h^k w_h^{k,J_k-1} - b_h^k\right) + \sqrt{2\eta_k \beta_k^{-1}}\epsilon_h^{k,J_k}\\
    &= \left(I - 2\eta_k\Lambda_h^k\right)w_h^{k,J_k-1} + 2\eta_k b_h^k + \sqrt{2\eta_k \beta_k^{-1}}\epsilon_h^{k,J_k}\\
    &= \left(I - 2\eta_k\Lambda_h^k\right)^{J_k}w_h^{k,0} + \sum_{l=0}^{J_k-1} \left(I - 2\eta_k\Lambda_h^k\right)^l \left( 2\eta_k b_h^k + \sqrt{2\eta_k \beta_k^{-1}}\epsilon_h^{k,J_k-l}\right)\\
    &=\left(I - 2\eta_k\Lambda_h^k\right)^{J_k}w_h^{k,0} + 2\eta_k \sum_{l=0}^{J_k-1} \left(I - 2\eta_k\Lambda_h^k\right)^l b_h^k + \sqrt{2\eta_k \beta_k^{-1}}\sum_{l=0}^{J_k-1} \left(I - 2\eta_k\Lambda_h^k\right)^l \epsilon_h^{k,J_k-l}.
\end{align*}
Note that in Line 6 of Algorithm \ref{Algorithm:LMC-TS}, we warm-start from previous episode and set $w_h^{k,0} = w_h^{k-1, J_{k-1}}$. Denoting $A_i = I -2\eta_i\Lambda_h^i$, we note that $A_i$ is symmetric. Moreover, when the step size is chosen such that $0 < \eta_i < 1/(2\lambda_{\max}(\Lambda_h^i))$, $A_i$ satisfies $I \succ A_i \succ 0$. Therefore, we further have
\begin{align*}
    w_h^{k,J_k} &= A_k^{J_k}w_h^{k-1,J_{k-1}} + 2\eta_k\sum_{l=0}^{J_k-1} A_k^l\Lambda_h^k\hat{w}_h^k + \sqrt{2\eta_k \beta_k^{-1}}\sum_{l=0}^{J_k-1} A_k^l\epsilon_h^{k,J_k-l}\\
    &= A_k^{J_k}w_h^{k-1,J_{k-1}} + \left(I-A_k\right)\big(A_k^0 + A_k^1 + \ldots + A_k^{J_k-1}\big)\hat{w}_h^k + \sqrt{2\eta_k \beta_k^{-1}}\sum_{l=0}^{J_k-1} A_k^l\epsilon_h^{k,J_k-l}\\
    &= A_k^{J_k}w_h^{k-1,J_{k-1}} + \big(I-A_k^{J_k}\big)\hat{w}_h^k + \sqrt{2\eta_k \beta_k^{-1}}\sum_{l=0}^{J_k-1} A_k^l\epsilon_h^{k,J_k-l}\\
    &= A_k^{J_k}\ldots A_1^{J_1}w_h^{1,0} + \sum_{i=1}^k A_k^{J_k}\ldots A_{i+1}^{J_{i+1}}\big(I-A_i^{J_i}\big)\hat{w}_h^i + \sum_{i=1}^k \sqrt{2\eta_i\beta_i^{-1}}A_k^{J_k}\ldots A_{i+1}^{J_{i+1}}\sum_{l=0}^{J_i-1} A_i^l\epsilon_h^{i,J_i-l},
\end{align*}
where in the first equality we used $b_h^k =\Lambda_h^k\hat{w}_h^k$, in the second equality we used the definition of $\Lambda_h^k$, and in the third equality we used the fact that $I+A+\ldots+A^{n-1} = (I-A^n)(I-A)^{-1}$.
We recall a property of multivariate Gaussian distribution: if $\epsilon \sim \mathcal{N}(0, I_{d\times d})$, then we have $A\epsilon + \mu \sim \mathcal{N}(\mu, AA^T)$ for any $A \in \mathbb{R}^{d\times d}$ and $\mu \in \mathbb{R}^d$. This implies $w_h^{k,J_k}$ follows the Gaussian distribution $\mathcal{N}(\mu_h^{k,J_k}, \Sigma_h^{k,J_k})$, where
\begin{equation}
    \mu_h^{k,J_k} = A_k^{J_k}\ldots A_1^{J_1}w_h^{1,0} + \sum_{i=1}^k A_k^{J_k}\ldots A_{i+1}^{J_{i+1}}\left(I-A_i^{J_i}\right)\hat{w}_h^i.
\end{equation}
We now derive the covariance matrix $\Sigma_h^{k,J_k}$. For a fixed $i$, denote $M_i= \sqrt{2\eta_i\beta_i^{-1}}A_k^{J_k}\ldots A_{i+1}^{J_{i+1}}$. Then we have,
\begin{align*}
    M_i\sum_{l=0}^{J_i-1} A_i^l\epsilon_h^{i,J_i-l}=\sum_{l=0}^{J_i-1} M_iA_i^l\epsilon_h^{i,J_i-l} &\sim \mathcal{N}\left(0, \sum_{l=0}^{J_i-1}M_i A_i^l (M_i A_i^l)^\top\right)\sim \mathcal{N}\left(0, M_i\left(\sum_{l=0}^{J_i-1} A_i^{2l}\right)M_i^\top\right).
\end{align*}
Thus we further have
\begin{align*}
    \Sigma_h^{k,J_k} &= \sum_{i=1}^k M_i\left(\sum_{l=0}^{J_i-1} A_i^{2l}\right)M_i^\top\\
    &=\sum_{i=1}^k 2\eta_i\beta_i^{-1} A_k^{J_k}\ldots A_{i+1}^{J_{i+1}}\left(\sum_{l=0}^{J_i-1} A_i^{2l}\right)A_{i+1}^{J_{i+1}}\ldots A_k^{J_k}\\
    &=\sum_{i=1}^k 2\eta_i\beta_i^{-1} A_k^{J_k}\ldots A_{i+1}^{J_{i+1}}\big(I-A_i^{2J_i}\big)\left(I-A_i^2\right)^{-1}A_{i+1}^{J_{i+1}}\ldots A_k^{J_k}\\
    &=\sum_{i=1}^k \frac{1}{\beta_i}A_k^{J_k}\ldots A_{i+1}^{J_{i+1}}\big(I-A_i^{2J_i}\big)\big(\Lambda_h^i\big)^{-1}(I+A_i)^{-1}A_{i+1}^{J_{i+1}}\ldots A_k^{J_k}.
\end{align*}
This completes the proof.
\end{proof}

\subsection{Proof of \Cref{Lemma:bound-2norm-wkjk}}
Before presenting the proof, we first need to prove the following two technical lemmas.
\begin{lemma}\label{lemma:w_hat_bound}
    For any $(k,h) \in [K]\times[H]$, we have
    \begin{equation*}
        \|\hat{w}_h^k\| \leq 2H\sqrt{kd/\lambda}.
    \end{equation*}
\end{lemma}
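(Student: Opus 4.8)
The plan is to control $\hat w_h^k=(\Lambda_h^k)^{-1}b_h^k$ directly through the triangle inequality together with a standard elliptical-potential (trace) estimate, in the spirit of the LSVI-UCB weight-norm bound. Writing $\phi_\tau:=\phi(x_h^\tau,a_h^\tau)$ and $y_\tau:=r_h(x_h^\tau,a_h^\tau)+V_{h+1}^k(x_{h+1}^\tau)$, the closed form in \eqref{Eq:design_matrix} gives $\hat w_h^k=(\Lambda_h^k)^{-1}\sum_{\tau=1}^{k-1}\phi_\tau\, y_\tau$, so by the triangle inequality $\|\hat w_h^k\|_2\le \sum_{\tau=1}^{k-1}|y_\tau|\,\|(\Lambda_h^k)^{-1}\phi_\tau\|_2$. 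The two ingredients I then need are a uniform bound on the targets $y_\tau$ and a summable bound on the feature terms $\|(\Lambda_h^k)^{-1}\phi_\tau\|_2$.

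For the targets, I would use the truncation built into Line~\ref{algline:q-update} of Algorithm~\ref{Algorithm:LMC-TS}: since $Q_{h+1}^k$ is clipped to $[0,H-h]$ we have $0\le V_{h+1}^k\le H$, and together with $r_h\in[0,1]$ this yields $|y_\tau|\le 2H$. For the feature terms, since $\Lambda_h^k\succeq \lambda I$ I would peel off one inverse power, $\|(\Lambda_h^k)^{-1}\phi_\tau\|_2\le \lambda^{-1/2}\big(\phi_\tau^\top(\Lambda_h^k)^{-1}\phi_\tau\big)^{1/2}$, and then apply Cauchy--Schwarz over the index $\tau$ to obtain $\sum_{\tau=1}^{k-1}\big(\phi_\tau^\top(\Lambda_h^k)^{-1}\phi_\tau\big)^{1/2}\le \sqrt{k-1}\,\big(\sum_{\tau=1}^{k-1}\phi_\tau^\top(\Lambda_h^k)^{-1}\phi_\tau\big)^{1/2}$.

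The key remaining step is the trace estimate $\sum_{\tau=1}^{k-1}\phi_\tau^\top(\Lambda_h^k)^{-1}\phi_\tau\le d$. This follows because $\sum_\tau\phi_\tau\phi_\tau^\top=\Lambda_h^k-\lambda I\preceq\Lambda_h^k$, so that $\sum_\tau\phi_\tau^\top(\Lambda_h^k)^{-1}\phi_\tau=\Tr\big((\Lambda_h^k)^{-1}\sum_\tau\phi_\tau\phi_\tau^\top\big)=\Tr\big(I-\lambda(\Lambda_h^k)^{-1}\big)\le d$. Chaining the three estimates gives $\|\hat w_h^k\|_2\le 2H\cdot \lambda^{-1/2}\sqrt{k-1}\cdot\sqrt d\le 2H\sqrt{kd/\lambda}$, as claimed. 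I do not anticipate a serious obstacle: the only mild points of care are ensuring $V_{h+1}^k$ is genuinely bounded (which is exactly what the clipping in the algorithm guarantees) and invoking the positive-semidefinite ordering correctly in the trace bound; the rest is routine linear algebra and a single Cauchy--Schwarz.
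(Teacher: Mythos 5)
Your proposal is correct and follows essentially the same route as the paper's proof: the paper packages your triangle-inequality/Cauchy--Schwarz step as Lemma~\ref{Lemma:D.5_Ishfaq} (applied with $A=(\Lambda_h^k)^{-1}$, $\lambda_{\max}\le 1/\lambda$) and your trace estimate as Lemma~\ref{Lemma:D.1-chijin}, combined with the same bounds $|r_h+V_{h+1}^k|\le 2H$. The only difference is that you re-derive these two auxiliary facts inline rather than citing them, which is harmless.
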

\begin{proof}[Proof of \Cref{lemma:w_hat_bound}]
    We have
    \begin{align*}
        \|\hat{w}_h^k\| &=  \left \|\left(\Lambda^{k}_h\right)^{-1} \sum_{\tau=1}^{k-1} \left[r_{h}(x_h^\tau,a_h^\tau) +   V_{h+1}^k(x_{h+1}^\tau)\right]\cdot \phi(s_h^\tau,a_h^\tau) \right \| \\
       & \leq \frac{1}{\sqrt{\lambda}}\sqrt{k-1} \left(\sum_{\tau=1}^{k-1} \left\|\left[r_{h}(x_h^\tau,a_h^\tau) + V_{h+1}^{k}(x_{h+1}^\tau)\right]\cdot \phi(x_h^\tau,a_h^\tau)\right\|^2_{(\Lambda^{k}_h)^{-1}} \right)^{1/2}\\
       & \leq \frac{2H}{\sqrt{\lambda}}\sqrt{k-1}\left(\sum_{\tau=1}^{k-1} \|\phi(x_h^\tau,a_h^\tau)\|^2_{(\Lambda^{k}_h)^{-1}} \right)^{1/2}\\
       &\leq 2H\sqrt{kd/\lambda},
    \end{align*}
    where the first inequality follows from Lemma \ref{Lemma:D.5_Ishfaq}, the second inequality is due to $0 \leq V_h^k \leq H$ and the reward function being bounded by 1, and the last inequality follows from Lemma \ref{Lemma:D.1-chijin}.
\end{proof}

\begin{lemma}\label{Lemma:w-hatw}
    Let $\lambda=1$ in Algorithm \ref{Algorithm:LMC-TS}. For any $(h,k) \in [H]\times[K]$ and $(x,a) \in \mathcal{S}\times\mathcal{A}$, we have
    \begin{equation*}
        \left|\phi(x,a)^\top w_h^{k,J_k} - \phi(x,a)^\top\hat{w}_h^k\right| \leq \left(5\sqrt{\frac{2d\log{(1/\delta)}}{3\beta_K}} + \frac{4}{3}\right) \|\phi(x,a)\|_{(\Lambda_h^k)^{-1}},
    \end{equation*}
    with probability at least $1- \delta^2$.
\end{lemma}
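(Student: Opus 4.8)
The plan is to exploit the exact Gaussian characterization of the Langevin iterate from \Cref{Prop:w_gaussian}, $w_h^{k,J_k}\sim\mathcal{N}(\mu_h^{k,J_k},\Sigma_h^{k,J_k})$, and to split the target quantity into a deterministic \emph{bias} part and a random \emph{fluctuation} part:
\begin{equation*}
\big|\phi^\top w_h^{k,J_k}-\phi^\top\hat w_h^k\big|\le \underbrace{\big|\phi^\top(\mu_h^{k,J_k}-\hat w_h^k)\big|}_{\text{bias}}+\underbrace{\big|\phi^\top(w_h^{k,J_k}-\mu_h^{k,J_k})\big|}_{\text{fluctuation}}.
\end{equation*}
Both pieces are pulled back to the norm $\|\phi\|_{(\Lambda_h^k)^{-1}}$ through Cauchy--Schwarz, $|\phi^\top v|\le\|\phi\|_{(\Lambda_h^k)^{-1}}\|v\|_{\Lambda_h^k}$. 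Throughout I use that, for the prescribed choices $\eta_i=1/(4\lambda_{\max}(\Lambda_h^i))$ and $J_i=2\kappa_i\log(4HKd)$, the symmetric matrices $A_i=I-2\eta_i\Lambda_h^i$ obey $\tfrac12 I\preceq A_i\prec I$ together with the strong contraction $\|A_i^{J_i}\|_2\le(1-\tfrac{1}{2\kappa_i})^{J_i}\le 1/(4HKd)$; this last estimate is exactly where the update number $J_i$ enters.

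For the bias term, write $L_i=A_k^{J_k}\cdots A_{i+1}^{J_{i+1}}$ with $L_k=I$, so that $L_iA_i^{J_i}=L_{i-1}$ and hence $L_i(I-A_i^{J_i})=L_i-L_{i-1}$. Since $w_h^{1,0}=0$, the mean in \eqref{Eq:mu_h_k} telescopes by summation by parts to $\mu_h^{k,J_k}-\hat w_h^k=-L_0\hat w_h^1+\sum_{i=1}^{k-1}L_i(\hat w_h^i-\hat w_h^{i+1})$. Using $\|L_i\|_2\le(4HKd)^{-(k-i)}$ and the bound $\|\hat w_h^i\|\le 2H\sqrt{Kd}$ from \Cref{lemma:w_hat_bound}, the right-hand side is a geometric sum of Euclidean size $O(1/\sqrt{Kd})$; converting through $\|v\|_{\Lambda_h^k}\le\sqrt{\lambda_{\max}(\Lambda_h^k)}\,\|v\|_2$ with $\lambda_{\max}(\Lambda_h^k)\le K$ absorbs one factor $\sqrt K$ and leaves a deterministic quantity at most $\tfrac43\,\|\phi\|_{(\Lambda_h^k)^{-1}}$.

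For the fluctuation term, the key is the covariance estimate $\Sigma_h^{k,J_k}\preceq\frac{c}{\beta_K}(\Lambda_h^k)^{-1}$ with $c$ a constant close to $2/3$. Within a single episode the matrices $A_i$ and $\Lambda_h^i$ commute, so diagonalizing and bounding each eigenvalue gives $(I-A_i^{2J_i})(\Lambda_h^i)^{-1}(I+A_i)^{-1}\preceq\tfrac23(\Lambda_h^i)^{-1}$. Feeding this into the one-step recursion $\Sigma_h^{k,J_k}=A_k^{J_k}\Sigma_h^{k-1,J_{k-1}}A_k^{J_k}+\tfrac{1}{\beta_k}(I-A_k^{2J_k})(\Lambda_h^k)^{-1}(I+A_k)^{-1}$ yields, by induction on $k$ (with $\beta_i=\beta_K$), the claimed PSD bound, the warm-start cross term $A_k^{J_k}(\Lambda_h^{k-1})^{-1}A_k^{J_k}$ being controlled by $\|A_k^{J_k}\|_2^2\le(4HKd)^{-2}$ and only inflating $c$ negligibly. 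Consequently $(\Lambda_h^k)^{1/2}(w_h^{k,J_k}-\mu_h^{k,J_k})$ is centered Gaussian with covariance $\preceq\frac{c}{\beta_K}I$, and a standard Gaussian-norm tail bound gives $\|w_h^{k,J_k}-\mu_h^{k,J_k}\|_{\Lambda_h^k}\le 5\sqrt{2d\log(1/\delta)/(3\beta_K)}$ with probability at least $1-\delta^2$; Cauchy--Schwarz then produces the first term of the claimed bound, and combining with the bias estimate finishes the proof.

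The main obstacle is precisely the covariance bound: because the per-episode design matrices $\Lambda_h^1,\dots,\Lambda_h^k$ do not commute, $\Sigma_h^{k,J_k}$ cannot be simultaneously diagonalized and the PSD inequality must be propagated through the recursion rather than read off eigenvalues. What makes this tractable—and the reason the theorem fixes $J_i=2\kappa_i\log(4HKd)$—is that the warm-start cross terms all carry the factor $\|A_i^{J_i}\|_2^2\le(4HKd)^{-2}$, which keeps the accumulated constant $c$ uniformly bounded over all $k\le K$; the same contraction is what renders the bias geometrically small in the previous step.
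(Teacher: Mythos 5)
Your decomposition and overall skeleton match the paper's proof exactly: the same triangle-inequality split into a deterministic bias $\phi^\top(\mu_h^{k,J_k}-\hat w_h^k)$ and a Gaussian fluctuation $\phi^\top(w_h^{k,J_k}-\mu_h^{k,J_k})$, the same summation-by-parts telescoping of the mean from \Cref{Prop:w_gaussian} combined with \Cref{lemma:w_hat_bound} and the contraction $\|A_i^{J_i}\|_2\le 1/(4HKd)$ to get the $4/3$ term, and the same Gaussian norm concentration at level $1-\delta^2$ for the fluctuation. The one place where you genuinely diverge is the covariance control: the paper bounds the scalar quadratic form $\phi^\top\Sigma_h^{k,J_k}\phi$ directly by expanding the sum in \eqref{Eq:Sigma_h_k}, applying the Sherman--Morrison identity to telescope $(\Lambda_h^i)^{-1}-(\Lambda_h^{i+1})^{-1}$, and then converting the leftover $\|\phi\|_{(\Lambda_h^i)^{-1}}$ cross terms back to $\|\phi\|_{(\Lambda_h^k)^{-1}}$ via $\|\phi\|_2\le\sqrt{k}\,\|\phi\|_{(\Lambda_h^k)^{-1}}$; you instead establish the operator inequality $\Sigma_h^{k,J_k}\preceq\frac{c}{\beta_K}(\Lambda_h^k)^{-1}$ by induction on the one-step recursion $\Sigma_h^{k,J_k}=A_k^{J_k}\Sigma_h^{k-1,J_{k-1}}A_k^{J_k}+\beta_k^{-1}(I-A_k^{2J_k})(\Lambda_h^k)^{-1}(I+A_k)^{-1}$. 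I checked your induction: with $(\Lambda_h^{k-1})^{-1}\preceq I$, $I\preceq K(\Lambda_h^k)^{-1}$, and $\|A_k^{J_k}\|_2^2\le(4HKd)^{-2}$, the cross term contributes at most $cK(4HKd)^{-2}\le c/16$ times $(\Lambda_h^k)^{-1}$, so the induction closes with $c=32/45<5^2\cdot\tfrac{2}{3}/4$, and your final constant comfortably fits inside the stated $5\sqrt{2d\log(1/\delta)/(3\beta_K)}$. Your route is arguably cleaner (a single PSD statement rather than a $\phi$-dependent scalar estimate, and no Sherman--Morrison needed), at the cost of having to track the warm-start cross term through an induction; the paper's version keeps everything elementary and explicit for the specific vector $\phi(x,a)$. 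Both rest on the same contraction supplied by the choice of $J_k$, and both yield the claimed bound.
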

\begin{proof}[Proof of \Cref{Lemma:w-hatw}]
    By the triangle inequality, we have
    \begin{equation}\label{Eq:triangle_inequality}
        \left|\phi(x,a)^\top w_h^{k,J_k} - \phi(x,a)^\top\hat{w}_h^k\right| \leq \left|\phi(x,a)^\top\left(w_h^{k,J_k} - \mu_h^{k,J_k}\right)\right| + \left|\phi(x,a)^\top\left(\mu_h^{k,J_k} - \hat{w}_h^k\right)\right|.
    \end{equation}
    \textbf{Bounding the term $\left|\phi(x,a)^\top\left(w_h^{k,J_k} - \mu_h^{k,J_k}\right)\right|$:} we have
    \begin{equation*}
        \left|\phi(x,a)^\top\left(w_h^{k,J_k} - \mu_h^{k,J_k}\right)\right| \leq \left\|\phi(x,a)^\top \left(\Sigma_h^{k,J_k}\right)^{1/2}\right\|_2 \left\|\left(\Sigma_h^{k,J_k}\right)^{-1/2}\left(w_h^{k,J_k} - \mu_h^{k,J_k}\right)\right\|_2.
    \end{equation*}
    Since $w_h^{k,J_k} \sim \mathcal{N}(\mu_h^{k,J_k}, \Sigma_h^{k,J_k})$, we have     $\left(\Sigma_h^{k,J_k}\right)^{-1/2}\left(w_h^{k,J_k} - \mu_h^{k,J_k}\right) \sim \mathcal{N}(0, I_{d\times d})$.
    Thus, we have 
    \begin{equation}\label{Eq:Gaussian-concentration}
        \mathbb{P}\left(\left\|\left(\Sigma_h^{k,J_k}\right)^{-1/2}\left(w_h^{k,J_k} - \mu_h^{k,J_k}\right)\right\|_2 \geq \sqrt{4d\log{(1/\delta)}}\right) \leq \delta^2. 
    \end{equation}
    When we choose $\eta_k \leq 1/(4\lambda_{\max}(\Lambda_h^k))$ for all $k$, we have
    \begin{equation}\label{Eq:inequality-of-A_k}
        \begin{aligned}
            \frac{1}{2}I &< A_k = I-2\eta_k\Lambda_h^k < \left(1-2\eta_k\lambda_{\min}(\Lambda_h^k)\right)I, \\
            \frac{3}{2}I &< I+A_k =2I -2\eta_k\Lambda_h^k < 2I. 
        \end{aligned}
    \end{equation}
    Also note that $A_k$ and $(\Lambda_h^k)^{-1}$ commute. Therefore, we have
    \begin{align}\label{Eq:A-commute}
        \begin{split}
        A_k^{2J_k}\left(\Lambda_h^k\right)^{-1} &= \left(I-2\eta_k\Lambda_h^k\right) \ldots \left(I-2\eta_k\Lambda_h^k\right)\left(I-2\eta_k\Lambda_h^k\right)\left(\Lambda_h^k\right)^{-1}\\
        &= \left(I-2\eta_k\Lambda_h^k\right) \ldots \left(I-2\eta_k\Lambda_h^k\right)\left(\Lambda_h^k\right)^{-1}\left(I-2\eta_k\Lambda_h^k\right)\\
        &= A_k^{J_k}\left(\Lambda_h^k\right)^{-1}A_k^{J_k}.
        \end{split}
    \end{align}
    Recall the definition of $\Sigma_h^{k,J_k}$. Then 
    \begin{align*}
        &\phi(x,a)^\top\Sigma_h^{k,J_k}\phi(x,a)\\ &=\sum_{i=1}^k \frac{1}{\beta_i}\phi(x,a)^\top A_k^{J_k}\ldots A_{i+1}^{J_{i+1}}\left(I-A^{2J_i}\right)\left(\Lambda_h^i\right)^{-1}\left(I+A_i\right)^{-1} A_{i+1}^{J_{i+1}}\ldots A_k^{J_k}\phi(x,a)\\
        &\leq \frac{2}{3\beta_i}\sum_{i=1}^k\phi(x,a)^\top A_k^{J_k}\ldots A_{i+1}^{J_{i+1}}\left(\left(\Lambda_h^i\right)^{-1} - A_k^{J_k}\left(\Lambda_h^i\right)^{-1}A_k^{J_k}\right)A_{i+1}^{J_{i+1}}\ldots A_k^{J_k}\phi(x,a)\\
        &= \frac{2}{3\beta_K} \sum_{i=1}^k \phi(x,a)^\top A_k^{J_k}\ldots A_{i+1}^{J_{i+1}}\left(\left(\Lambda_h^i\right)^{-1} - \left(\Lambda_h^{i+1}\right)^{-1}\right)A_{i+1}^{J_{i+1}}\ldots A_k^{J_k}\phi(x,a)\\
        & \qquad -\frac{2}{3\beta_K}\phi(x,a)^\top A_k^{J_k}\ldots A_1^{J_1}\left(\Lambda_h^1\right)^{-1}A_1^{J_1}\ldots A_k^{J_k}\phi(x,a) + \frac{2}{3\beta_K}\phi(x,a)^\top \left(\Lambda_h^k\right)^{-1}\phi(x,a),
    \end{align*}
    where the first inequality is due to  \eqref{Eq:inequality-of-A_k} and the last equality is due to setting $\beta_i = \beta_K$ for all $i\in[K]$. By Sherman-Morrison formula and  \eqref{Eq:design_matrix}, we have
    \begin{align*}
        \left(\Lambda_h^i\right)^{-1} - \left(\Lambda_h^{i+1}\right)^{-1} &= \left(\Lambda_h^i\right)^{-1} - \left(\Lambda_h^{i} + \phi(x_h^i,a_h^i)\phi(x_h^i,a_h^i)^\top\right)^{-1}\\
        &= \frac{\left(\Lambda_h^i\right)^{-1}\phi(x_h^i,a_h^i)\phi(x_h^i,a_h^i)^\top\left(\Lambda_h^i\right)^{-1}}{1 + \|\phi(x_h^i,a_h^i)\|_{(\Lambda_h^i)^{-1}}^2}.
    \end{align*}
    This implies
    \begin{align*}
        & \phi(x,a)^\top A_k^{J_k}\ldots A_{i+1}^{J_{i+1}}\left(\left(\Lambda_h^i\right)^{-1} - \left(\Lambda_h^{i+1}\right)^{-1}\right)A_{i+1}^{J_{i+1}}\ldots A_k^{J_k}\phi(x,a)\\
        &= \phi(x,a)^\top A_k^{J_k}\ldots A_{i+1}^{J_{i+1}} \frac{\left(\Lambda_h^i\right)^{-1}\phi(x_h^i,a_h^i)\phi(x_h^i,a_h^i)^\top\left(\Lambda_h^i\right)^{-1}}{1 + \|\phi(x_h^i,a_h^i)\|_{(\Lambda_h^i)^{-1}}^2} A_{i+1}^{J_{i+1}}\ldots A_k^{J_k}\phi(x,a)\\
        &\leq \left(\phi(x,a)^\top A_k^{J_k}\ldots A_{i+1}^{J_{i+1}}\left(\Lambda_h^i\right)^{-1}\phi(x_h^i,a_h^i)\right)^2\\
        &\leq \left\|A_k^{J_k}\ldots A_{i+1}^{J_{i+1}}\left(\Lambda_h^i\right)^{-1/2}\phi(x,a)\right\|_2^2 \cdot \left\|\left(\Lambda_h^i\right)^{-1/2}\phi(x_h^i,a_h^i)\right\|_2^2\\
        &\leq \prod_{j=i+1}^k \left(1-2\eta_j\lambda_{\min}\left(\Lambda_h^j\right)\right)^{2J_j} \left\|\phi(x_h^i,a_h^i)\right\|_{(\Lambda_h^i)^{-1}}^2\|\phi(x,a)\|_{(\Lambda_h^i)^{-1}}^2,
    \end{align*}
    where the last inequality is due to  \eqref{Eq:inequality-of-A_k}.
    So, we have
    \begin{align*}
        \phi(x,a)^\top \Sigma_h^{k,J_k}\phi(x,a) &\leq  \frac{2}{3\beta_K}\sum_{i=1}^k \prod_{j=i+1}^k \left(1-2\eta_j\lambda_{\min}\left(\Lambda_h^j\right)\right)^{2J_j} \left\|\phi(x_h^i,a_h^i)\right\|_{(\Lambda_h^i)^{-1}}^2\|\phi(x,a)\|_{(\Lambda_h^i)^{-1}}^2\\
        & \qquad + \frac{2}{3\beta_K}\|\phi(x,a)\|_{(\Lambda_h^k)^{-1}}^2.
    \end{align*}    
    Using the inequality $\sqrt{a^2+b^2}\leq a+b$ for $a, b> 0$, we thus get
    \begin{equation}\label{Eq:phi_norm}
       \|\phi(x,a)\|_{\Sigma_h^{k,J_k}} \leq \sqrt{\frac{2}{3\beta_K}}\Bigg(\|\phi(x,a)\|_{(\Lambda_h^k)^{-1}} + \sum_{i=1}^k \prod_{j=i+1}^k \left(1-2\eta_j\lambda_{\min}\left(\Lambda_h^j\right)\right)^{J_j} \left\|\phi(x_h^i,a_h^i)\right\|_{(\Lambda_h^i)^{-1}}\|\phi(x,a)\|_{(\Lambda_h^i)^{-1}}\Bigg)
    \end{equation}
    Let's denote the R.H.S. of  \eqref{Eq:phi_norm} as $\hat{g}_h^k(\phi(x,a))$.

    Therefore, it holds that
    \begin{align}\label{Eq:probability_bound}
        \begin{split}
            & \mathbb{P}\left(\left|\phi(x,a)^\top w_h^{k,J_k} - \phi(x,a)^\top\mu_h^{k,J_k}\right| \geq 2\hat{g}_h^k(\phi(x,a))\sqrt{d\log{(1/\delta)}} \right)\\
            &\leq \mathbb{P}\left(\left|\phi(x,a)^\top w_h^{k,J_k} - \phi(x,a)^\top\mu_h^{k,J_k}\right| \geq 2\sqrt{d\log{(1/\delta)}}\|\phi(x,a)\|_{\Sigma_h^{k,J_k}} \right)\\
            &\leq \mathbb{P}\left(\left\|\phi(x,a)^\top \left(\Sigma_h^{k,J_k}\right)^{1/2}\right\|_2\left\|\left(\Sigma_h^{k,J_k}\right)^{-1/2}\left(w_h^{k,J_k}-\mu_h^{k,J_k}\right)\right\|_2 \geq 2\sqrt{d\log{(1/\delta)}}\|\phi(x,a)\|_{\Sigma_h^{k,J_k}}\right)\\
            &\leq \delta^2,
        \end{split}
    \end{align}
    where the last inequality follows from  \eqref{Eq:Gaussian-concentration}.

    \textbf{Bounding the term $\phi(x,a)^\top \left(\mu_h^{k,J_k} - \hat{w}_h^k\right)$:} Recall that,
    \begin{align*}
        \mu_h^{k,J_k} &= A_k^{J_k} \ldots A_1^{J_1} w_h^{1,0} + \sum_{i=1}^k A_k^{J_k}\ldots A_{i+1}^{J_{i+1}} \left( I - A_i^{J_i}\right) \hat{w}_h^i\\
        &= A_k^{J_k} \ldots A_1^{J_1} w_h^{1,0} + \sum_{i=1}^{k-1} A_k^{J_k}\ldots A_{i+1}^{J_{i+1}}\left(\hat{w}_h^i - \hat{w}_h^{i+1}\right) - A_k^{J_k}\ldots A_1^{J_1}\hat{w}_h^1 + \hat{w}_h^k\\
        &= A_k^{J_k}\ldots A_1^{J_1}\left(w_h^{1,0}-\hat{w}_h^1\right) + \sum_{i=1}^{k-1} A_k^{J_k}\ldots A_{i+1}^{J_{i+1}}\left(\hat{w}_h^i - \hat{w}_h^{i+1}\right) + \hat{w}_h^k.
    \end{align*}
    This implies that
    \begin{equation}
        \phi(x,a)^\top\left(\mu_h^{k,J_k}-\hat{w}_h^k\right) = \underbrace{\phi(x,a)^\top A_k^{J_k}\ldots A_1^{J_1}\left(w_h^{1,0}-\hat{w}_h^1\right)}_{\text{I}_1} + \underbrace{\phi(x,a)^\top \sum_{i=1}^{k-1} A_k^{J_k}\ldots A_{i+1}^{J_{i+1}}\left(\hat{w}_h^i - \hat{w}_h^{i+1}\right)}_{\text{I}_2}
    \end{equation}
    In Algorithm \ref{Algorithm:LMC-TS}, we choose $w_h^{1,0}=0$ and $\hat{w}_h^1 = (\Lambda_h^1)^{-1}b_h^1=0$. Thus we have, $\text{I}_1= 0$.
    Using inequalities in  \eqref{Eq:inequality-of-A_k} and Lemma \ref{lemma:w_hat_bound}, we have
    \begin{align*}
        I_2 &\leq \left|\phi(x,a)^\top \sum_{i=1}^{k-1} A_k^{J_k}\ldots A_{i+1}^{J_{i+1}}\left(\hat{w}_h^i - \hat{w}_h^{i+1}\right)\right|\\
        &=\left|\sum_{i=1}^{k-1}\phi(x,a)^\top  A_k^{J_k}\ldots A_{i+1}^{J_{i+1}}\left(\hat{w}_h^i - \hat{w}_h^{i+1}\right)\right|\\
        &\leq \sum_{i=1}^{k-1} \prod_{j=i+1}^{k} \left(1-2\eta_j\lambda_{\min}\left(\Lambda_h^j\right)\right)^{J_j} \|\phi(x,a)\|_2 \| \hat{w}_h^{i} - \hat{w}_h^{i+1}\|_2\\
        &\leq \sum_{i=1}^{k-1} \prod_{j=i+1}^{k} \left(1-2\eta_j\lambda_{\min}\left(\Lambda_h^j\right)\right)^{J_j} \|\phi(x,a)\|_2 \left(\| \hat{w}_h^{i}\|_2 + \|\hat{w}_h^{i+1}\|_2\right)\\
        &\leq \sum_{i=1}^{k-1} \prod_{j=i+1}^{k} \left(1-2\eta_j\lambda_{\min}\left(\Lambda_h^j\right)\right)^{J_j} \|\phi(x,a)\|_2 \left(2H\sqrt{id/\lambda}+ 2H\sqrt{(i+1)d/\lambda}\right)\\
        &\leq 4H\sqrt{Kd/\lambda}\sum_{i=1}^{k-1} \prod_{j=i+1}^{k} \left(1-2\eta_j\lambda_{\min}\left(\Lambda_h^j\right)\right)^{J_j} \|\phi(x,a)\|_2.
    \end{align*}
So, it holds that
\begin{equation}\label{Eq:second_term_bound}
    \phi(x,a)^\top \left(\mu_h^{k,J_k} - \hat{w}_h^k\right) \leq 4H\sqrt{Kd/\lambda}\sum_{i=1}^{k-1} \prod_{j=i+1}^{k} \left(1-2\eta_j\lambda_{\min}\left(\Lambda_h^j\right)\right)^{J_j} \|\phi(x,a)\|_2.
\end{equation}
Substituting  \eqref{Eq:probability_bound} and  \eqref{Eq:second_term_bound} into  \eqref{Eq:triangle_inequality}, we get with probability at least $1-\delta^2$,
\begin{align}\label{Eq:A4fullInequality}
        &\left|\phi(x,a)^\top w_h^{k,J_k} - \phi(x,a)^\top\hat{w}_h^k\right|\notag\\
        &\leq 4H\sqrt{Kd/\lambda}\sum_{i=1}^{k-1} \prod_{j=i+1}^{k} \left(1-2\eta_j\lambda_{\min}\left(\Lambda_h^j\right)\right)^{J_j} \|\phi(x,a)\|_2 +2\sqrt{\frac{2d\log{(1/\delta)}}{3\beta_K}}\|\phi(x,a)\|_{(\Lambda_h^k)^{-1}} \notag\\
        & \qquad+ 2\sqrt{\frac{2d\log{(1/\delta)}}{3\beta_K}}  \sum_{i=1}^k \prod_{j=i+1}^k \left(1-2\eta_j\lambda_{\min}\left(\Lambda_h^j\right)\right)^{J_j} \left\|\phi(x_h^i,a_h^i)\right\|_{(\Lambda_h^i)^{-1}}\|\phi(x,a)\|_{(\Lambda_h^i)^{-1}}.
\end{align}
Let's denote the R.H.S. of  \eqref{Eq:A4fullInequality} as $Q$. Recall that, for any $j\in[K]$, we require $\eta_j \leq 1/(4\lambda_{\max}(\Lambda_h^j))$.  Choosing $\eta_j =  1/(4\lambda_{\max}(\Lambda_h^j))$ yields
\begin{equation*}
    \left(1-2\eta_j\lambda_{\min}(\Lambda_h^j)\right)^{J_j} = (1 - 1/(2\kappa_j))^{J_j},
\end{equation*}
where $\kappa_j = \lambda_{\max}(\Lambda_h^j)/\lambda_{\min}(\Lambda_h^j)$. In order to have $(1 - 1/(2\kappa_j))^{J_j} < \epsilon$, we need to pick $J_j$ such that
\begin{equation*}
    J_j \geq \frac{\log{(1/\epsilon)}}{\log{\left(\frac{1}{1-1/(2\kappa_j)}\right)}}.
\end{equation*}
Now we use the well-known fact that $e^{-x} > 1-x$ for $0<x<1$. Since $1/(2\kappa_j) \leq 1/2$, we have $\log{(1/(1-1/2\kappa_j))} \geq 1/2\kappa_j$. Thus, it suffices to set $J_j \geq 2\kappa_j\log{(1/\epsilon)}$ to ensure $(1-1/2\kappa_j)^{J_j}\leq \epsilon$.
Also, note that since $\Lambda_h^i > I$, we have $1 \geq \|\phi(x,a)\|_2 \geq \|\phi(x,a)\|_{(\Lambda_h^i)^{-1}}$. Setting $\epsilon=1/(4HKd)$ and $\lambda=1$, we obtain
\begin{align*}
    Q &\leq \sum_{i=1}^{k-1} \epsilon^{k-i}4H\sqrt{\frac{Kd}{\lambda}}\|\phi(x,a)\|_2 + 2\sqrt{\frac{2d\log{(1/\delta)}}{3\beta_K}}\left(\|\phi(x,a)\|_{(\Lambda_h^k)^{-1}} + \sum_{i=1}^{k-1} \epsilon^{k-i}\|\phi(x,a)\|_2\right)\\
    &\leq \sum_{i=1}^{k-1}\epsilon^{k-i}4H\sqrt{\frac{Kd}{\lambda}}\sqrt{k}\|\phi(x,a)\|_{(\Lambda_h^k)^{-1}} \\
    &\qquad + 2\sqrt{\frac{2d\log{(1/\delta)}}{3\beta_K}}\left(\|\phi(x,a)\|_{(\Lambda_h^k)^{-1}} + \sum_{i=1}^{k-1}\epsilon^{k-i}\sqrt{k}\|\phi(x,a)\|_{(\Lambda_h^k)^{-1}}\right)\\
    &\leq \sum_{i=1}^{k-1} \epsilon^{k-i-1}\|\phi(x,a)\|_{(\Lambda_h^k)^{-1}} + 2\sqrt{\frac{2d\log{(1/\delta)}}{3\beta_K}}\left(\|\phi(x,a)\|_{(\Lambda_h^k)^{-1}} + \sum_{i=1}^{k-1}\epsilon^{k-i-1}\|\phi(x,a)\|_{(\Lambda_h^k)^{-1}}\right)\\
    &\leq \left(5\sqrt{\frac{2d\log{(1/\delta)}}{3\beta_K}} + \frac{4}{3}\right)\|\phi(x,a)\|_{(\Lambda_h^k)^{-1}},
\end{align*}
where the second inequality is due to $\|\phi(x,a)\|_{(\Lambda_h^k)^{-1}} \geq 1/\sqrt{k}\|\phi(x,a)\|_2$ and the fourth inequality is due to $\sum_{i=1}^{k-1}\epsilon^{k-i-1} = \sum_{i=0}^{k-2}\epsilon^i < 1/(1-\epsilon) \leq 4/3$.
So, we have
\begin{align*}
    &\mathbb{P}\left(\left|\phi(x,a)^\top w_h^{k,J_k} - \phi(x,a)^\top\hat{w}_h^k\right| \leq \left(5\sqrt{\frac{2d\log{(1/\delta)}}{3\beta_K}} + \frac{4}{3}\right) \|\phi(x,a)\|_{(\Lambda_h^k)^{-1}}\right)\\
    &\geq \mathbb{P}\left(\left|\phi(x,a)^\top w_h^{k,J_k} - \phi(x,a)^\top\hat{w}_h^k\right| \leq Q\right)\\
    &\geq 1- \delta^2.
\end{align*}
This completes the proof.
\end{proof}

\begin{proof}[Proof of \Cref{Lemma:bound-2norm-wkjk}]
    From Proposition \ref{Prop:w_gaussian}, we know $w_h^{k,J_k}$ follows Gaussian distribution $\mathcal{N}(\mu_h^{k,J_k}, \Sigma_h^{k,J_k})$. Thus we can write,
    \begin{equation*}
        \left\|w_h^{k,J_k}\right\|_2 = \left\|\mu_h^{k,J_k} + \xi_h^{k,J_k}\right\|_2 \leq \left\|\mu_h^{k,J_k}\right\|_2 + \left\|\xi_h^{k,J_k}\right\|_2,
    \end{equation*}
    where $\xi_h^{k,J_k} \sim \mathcal{N}(0,\Sigma_h^{k,J_k}) $.

    \textbf{Bounding $\|\mu_h^{k,J_k}\|_2$:}
    From Proposition \ref{Prop:w_gaussian}, we have,
    \begin{align*}
        \left\|\mu_h^{k,J_k}\right\|_2 &= \left\| A_k^{J_k} \ldots A_1^{J_1} w_h^{1,0} + \sum_{i=1}^k A_k^{J_k}\ldots A_{i+1}^{J_{i+1}} \left( I - A_i^{J_i}\right) \hat{w}_h^i\right\|_2\\
        &\leq \sum_{i=1}^k\left\| A_k^{J_k}\ldots A_{i+1}^{J_{i+1}} \left( I - A_i^{J_i}\right) \hat{w}_h^i\right\|_2,
    \end{align*}
    where the inequality follows from the fact that we set $w_h^{1,0} = \textbf{0}$ in Algorithm \ref{Algorithm:LMC-TS} and triangle inequality. Denoting the Frobenius of a matrix $X$ by $\|X\|_F$, we have
    \begin{align*}
        &\sum_{i=1}^k\left\| A_k^{J_k}\ldots A_{i+1}^{J_{i+1}} \left( I - A_i^{J_i}\right) \hat{w}_h^i\right\|_2\\
        &\leq \sum_{i=1}^k\left\| A_k^{J_k}\ldots A_{i+1}^{J_{i+1}} \left( I - A_i^{J_i}\right)\right\|_F \left\| \hat{w}_h^i\right\|_2\\
        &\leq 2H\sqrt{\frac{Kd}{\lambda}}\sum_{i=1}^k\left\| A_k^{J_k}\ldots A_{i+1}^{J_{i+1}} \left( I - A_i^{J_i}\right)\right\|_F \\
        &\leq 2H\sqrt{\frac{Kd}{\lambda}}\sum_{i=1}^k \sqrt{d}\left\| A_k^{J_k}\ldots A_{i+1}^{J_{i+1}} \left( I - A_i^{J_i}\right)\right\|_2 \\
        &\leq 2Hd\sqrt{\frac{K}{\lambda}}\sum_{i=1}^k\left\| A_k\right\|_2^{J_k}\ldots \left\|A_{i+1}\right\|_2^{J_{i+1}} \left\|\left( I - A_i^{J_i}\right)\right\|_2 \\
        &\leq 2Hd\sqrt{\frac{K}{\lambda}}\sum_{i=1}^k \prod_{j=i+1}^{k}\left(1-2\eta_j\lambda_{\min}(\Lambda_h^j)\right)^{J_j}\left( \|I\|_2 + \|A_i^{J_i}\|_2\right) \\
        &\leq 2Hd\sqrt{\frac{K}{\lambda}}\sum_{i=1}^k \prod_{j=i+1}^{k}\left(1-2\eta_j\lambda_{\min}(\Lambda_h^j)\right)^{J_j}\left( \|I\|_2 + \|A_i\|_2^{J_i}\right)\\
        &\leq 2Hd\sqrt{\frac{K}{\lambda}}\sum_{i=1}^k \prod_{j=i+1}^{k}\left(1-2\eta_j\lambda_{\min}(\Lambda_h^j)\right)^{J_j}\left(1+\left(1-2\eta_i\lambda_{\min}(\Lambda_h^i)\right)^{J_i}\right)\\
        &\leq 2Hd\sqrt{\frac{K}{\lambda}}\sum_{i=1}^k \Bigg(\prod_{j=i+1}^{k}\left(1-2\eta_j\lambda_{\min}(\Lambda_h^j)\right)^{J_j} + \prod_{j=i}^{k}\left(1-2\eta_j\lambda_{\min}(\Lambda_h^j)\right)^{J_j}\Bigg),
    \end{align*}
where the second inequality is from Lemma \ref{lemma:w_hat_bound}, the third inequality is due to the fact that $\text{rank}(A_k^{J_k}\ldots A_{i+1}^{J_{i+1}} ( I - A_i^{J_i}))\leq d$, the fourth one uses the submultiplicativity of matrix norm, and the fifth one is from  Lemma \ref{Lemma:symmetric-matrix-norm} and  \eqref{Eq:inequality-of-A_k}. 

    As in Lemma \ref{Lemma:w-hatw}, setting $J_j\geq 2\kappa_j\log{(1/\epsilon)}$ where $\kappa_j = \lambda_{\max}(\Lambda_h^j)/\lambda_{\min}(\Lambda_h^j)$ and $\epsilon= 1/(4HKd)$, $\lambda=1$, we further get
    \begin{align*}
        \sum_{i=1}^k\left\| A_k^{J_k}\ldots A_{i+1}^{J_{i+1}} \left( I - A_i^{J_i}\right) \hat{w}_h^i\right\|_2 &\leq 2Hd\sqrt{\frac{K}{\lambda}} \sum_{i=1}^k \left(\epsilon^{k-i} + \epsilon^{k-i+1}\right)\\
        &\leq 4Hd\sqrt{\frac{K}{\lambda}} \sum_{i=0}^\infty \epsilon^i\\
        &= 4Hd\sqrt{\frac{K}{\lambda}}\left(\frac{1}{1-\epsilon}\right)\\
        &\leq 4Hd\sqrt{\frac{K}{\lambda}}\cdot\frac{4}{3}\\
        &=\frac{16}{3}Hd\sqrt{\frac{K}{\lambda}}.
    \end{align*}
    Thus, setting $\lambda = 1$, we have
    \begin{equation*}
        \|\mu_h^{k,J_k}\|_2 \leq \frac{16}{3}Hd\sqrt{K}.
    \end{equation*}
    \textbf{Bounding $\|\xi_h^{k,J_k}\|_2$:} Since $\xi_h^{k,J_k} \sim \mathcal{N}(0,\Sigma_h^{k,J_k})$, using Lemma \ref{Lemma:Gaussian-two-norm-bound}, we have
    \begin{equation*}
        \mathbb{P}\left(\left\|\xi_h^{k,J_k}\right\|_2 \leq \sqrt{\frac{1}{\delta}\Tr\left(\Sigma_h^{k,J_k}\right)}\right)\geq 1-\delta .
    \end{equation*}
    Recall from Proposition \ref{Prop:w_gaussian}, that
    \begin{equation*}
        \Sigma_h^{k,J_k} = \sum_{i=1}^k \frac{1}{\beta_i} A_k^{J_k}\ldots A_{i+1}^{J_{i+1}} \left(I - A_i^{2J_i}\right)\left(\Lambda_h^i\right)^{-1} \left(I + A_i\right)^{-1} A_{i+1}^{J_{i+1}}\ldots A_k^{J_k}.
    \end{equation*}
    Thus,
    \begin{align*}
        \Tr\left(\Sigma_h^{k,J_k}\right) &= \sum_{i=1}^k \frac{1}{\beta_i} \Tr\left(A_k^{J_k}\ldots A_{i+1}^{J_{i+1}} \left(I - A_i^{2J_i}\right)\left(\Lambda_h^i\right)^{-1} \left(I + A_i\right)^{-1} A_{i+1}^{J_{i+1}}\ldots A_k^{J_k}\right)\\
        &\leq \sum_{i=1}^k \frac{1}{\beta_i} \Tr\left(A_k^{J_k}\right)\ldots \Tr\left(A_{i+1}^{J_{i+1}}\right) \Tr\left(I - A_i^{2J_i}\right)\Tr\left(\left(\Lambda_h^i\right)^{-1}\right) \Tr\left(\left(I + A_i\right)^{-1}\right) \\
        &\qquad\times\Tr\left(A_{i+1}^{J_{i+1}}\right)\ldots \Tr\left(A_k^{J_k}\right),
    \end{align*}
    where we used Lemma \ref{Lemma:trace-inequality}.
    Note that if matrix $A$ and $B$ are positive definite matrix such that $A > B>0$, then $\Tr(A)> \Tr(B)$. Also, recall from  \eqref{Eq:inequality-of-A_k} that, when $\eta_k \leq 1/(4\lambda_{\max}(\Lambda_h^k))$ for all $k$, we have
    \begin{align*}
        \frac{1}{2}I &< A_k = I-2\eta_k\Lambda_h^k < \left(1-2\eta_k\lambda_{\min}(\Lambda_h^k)\right)I ,\\
        \frac{3}{2}I &< I+A_k =2I -2\eta_k\Lambda_h^k < 2I.
    \end{align*}
    So, we have $A_i^{J_i} < \left(1-2\eta_k\lambda_{\min}(\Lambda_h^k)\right)^{J_j}I$ and
    \begin{align*}
        \Tr\left(A_i^{J_i}\right) &\leq \Tr\left(\left(1-2\eta_k\lambda_{\min}(\Lambda_h^k)\right)^{J_j}I\right)\\
        &\leq d\left(1-2\eta_k\lambda_{\min}(\Lambda_h^k)\right)^{J_j}\\
        &\leq d\epsilon\\
        &=\frac{d}{4HKd}\\
        &\leq 1,
    \end{align*}
    where third inequality follows from the fact that in Lemma \ref{Lemma:w-hatw}, we chose $J_j$ such that $\left(1-2\eta_j\lambda_{\min}(\Lambda_h^j)\right)^{J_j} \leq \epsilon$ and the first equality follows from the choice of $\epsilon = 1/(4HKd)$.
    Similarly, we have $I-A_i^{2J_i} < \left(1- \frac{1}{2^{2J_i}}\right)I$ and thus, 
    \begin{align*}
        \Tr\left(I-A_i^{2J_i}\right) &\leq \left(1- \frac{1}{2^{2J_i}}\right)d  < d.
    \end{align*}
    Likewise, using $(I+A_i)^{-1} \leq \frac{2}{3}I$, we have
    \begin{equation*}
        \Tr\left((I+A_i)^{-1}\right) \leq \frac{2}{3}d.
    \end{equation*}
    
    Finally, note that all eigenvalues of $\Lambda_h^i$ are greater than or equal to 1, which implies all eigenvalues of $(\Lambda_h^i)^{-1}$ are less than or equal to 1. Since the trace of a matrix is equal to the sum of its eigenvalues, we have
    \begin{equation*}
        \Tr\left((\Lambda_h^i)^{-1}\right) \leq d\cdot 1 = d.
    \end{equation*}
    Using the above observations and the choice of $\beta_i = \beta_K$ for all $i \in[K]$, we have
    \begin{equation*}
        \Tr\left(\Sigma_h^{k,J_k}\right) \leq \sum_{i=1}^K \frac{1}{\beta_k}\cdot\frac{2}{3}\cdot d^3 = \frac{2}{3\beta_K}Kd^3.
    \end{equation*}
    Thus we have
    \begin{align*}
        \mathbb{P}\left(\left\|\xi_h^{k,J_k}\right\|_2 \leq \sqrt{\frac{1}{\delta}\cdot \frac{2}{3\beta_K}Kd^3}\right) &\geq  \mathbb{P}\left(\left\|\xi_h^{k,J_k}\right\|_2 \leq \sqrt{\frac{1}{\delta}\Tr\left(\Sigma_h^{k,J_k}\right)}\right) \geq 1-\delta .
    \end{align*}
    So, with probability at least $1-\delta$, we have
    \begin{equation*}
        \left\|w_h^{k,J_k}\right\|_2 \leq \frac{16}{3}Hd\sqrt{K} + \sqrt{\frac{2K}{3\beta_K\delta}}d^{3/2},
    \end{equation*}
    which completes the proof.
\end{proof}

\subsection{Proof of \Cref{Lemma:self-normalized-mdp}}
\begin{proof}[Proof of \Cref{Lemma:self-normalized-mdp}]
    Applying \Cref{Lemma:bound-2norm-wkjk}, with probability $1-\delta/2$, we have
    \begin{equation}\label{Eq:w_bound_in_proof}
        \left\|w_h^{k,J_k}\right\|_2 \leq \frac{16}{3}Hd\sqrt{K} + \sqrt{\frac{4K}{3\beta_K\delta}}d^{3/2} := B_{\delta/2}.
    \end{equation}
    
    Now, considering the function class $\mathcal{V} := \{\min\{\max_{a\in \mathcal{A}} \phi(\cdot,a)^\top w,H\}: \|w\|_2 \leq B_{\delta/2}\}$ and combining Lemma \ref{Lemma:self-normalized-process} and Lemma \ref{Lemma:covering-number-V}, we have that for any $\varepsilon > 0$ and $\delta > 0$, with probability at least $1-\delta/2$,
    \begin{equation}
        \begin{aligned}[b]
            &\left\|\sum_{\tau=1}^{k-1}\phi(x_h^\tau,a_h^\tau)\left[V_{h+1}^k(x_{h+1}^\tau)- \mathbb{P}_h V_{h+1}^k(x_h^\tau,a_h^\tau)\right]\right\|_{(\Lambda_h^k)^{-1}}\\
            &\leq \left(4H^2\left[\frac{d}{2}\log\left(\frac{k+\lambda}{\lambda}\right)+d\log\left(\frac{B_{\delta/2}}{\varepsilon}\right) + \log\frac{2}{\delta}\right]+ \frac{8k^2\varepsilon^2}{\lambda}\right)^{1/2}\\
            &\leq 2H\left[\frac{d}{2}\log\left(\frac{k+\lambda}{\lambda}\right)+d\log\left(\frac{B_{\delta/2}}{\varepsilon}\right) + \log\frac{2}{\delta}\right]^{1/2} + \frac{2\sqrt{2}k\varepsilon}{\sqrt{\lambda}}.
        \end{aligned}
    \end{equation}
    Setting $\lambda = 1$, $\varepsilon = \frac{H}{2\sqrt{2}k}$, we get  
    \begin{equation}\label{Eq:self-normalized_in_proof}
        \begin{aligned}[b]
        &\left\|\sum_{\tau=1}^{k-1}\phi(x_h^\tau,a_h^\tau)\left[V_{h+1}^k(x_{h+1}^\tau)- \mathbb{P}_h V_{h+1}^k(x_h^\tau,a_h^\tau)\right]\right\|_{(\Lambda_h^k)^{-1}}\\
        &\leq 2H\sqrt{d}\left[\frac{1}{2}\log(k+1)+ \log\left(\frac{B_{\delta/2}}{\frac{H}{2\sqrt{2}k}}\right) +\log\frac{2}{\delta}\right]^{1/2} + H \\
        &\leq 3H\sqrt{d}\left[\frac{1}{2}\log(K+1)+ \log\left(\frac{2\sqrt{2}KB_{\delta/2}}{H}\right) +\log\frac{2}{\delta}\right]^{1/2}.
    \end{aligned}
    \end{equation}
    with probability $1-\delta/2$. Now combining ~\eqref{Eq:w_bound_in_proof} and ~\eqref{Eq:self-normalized_in_proof} through a union bound, we obtain the stated result.
\end{proof}

\subsection{Proof of \Cref{Lemma:what-r-PV}}
\begin{proof}[Proof of \Cref{Lemma:what-r-PV}]
    We denote the inner product over $\mathcal{S}$ by $\la \cdot, \cdot \ra_\mathcal{S}$. Using Definition \ref{definition-linear-MDP}, we have
    \begin{align}\label{Eq:PV-decomp}
        \begin{split}
            \mathbb{P}_h V_{h+1}^k(x,a) &= \phi(x,a)^\top \la \mu_h, V_{h+1}^k\ra_\mathcal{S}\\
            &=\phi(x,a)^\top \left(\Lambda_h^k\right)^{-1}\Lambda_h^k\la \mu_h, V_{h+1}^k\ra_\mathcal{S}\\
            &= \phi(x,a)^\top \left(\Lambda_h^k\right)^{-1} \left(\sum_{\tau=1}^{k-1} \phi(x_h^\tau,a_h^\tau)\phi(x_h^\tau,a_h^\tau)^\top + \lambda I\right)\la \mu_h, V_{h+1}^k\ra_\mathcal{S}\\
            &= \phi(x,a)^\top \left(\Lambda_h^k\right)^{-1} \left(\sum_{\tau=1}^{k-1}\phi(x_h^\tau,a_h^\tau)(\mathbb{P}_h V_{h+1}^k)(x_h^\tau, a_h^\tau) + \lambda I \la \mu_h, V_{h+1}^k\ra_{\mathcal{S}}\right).
        \end{split}
    \end{align}
    Using \eqref{Eq:PV-decomp} we obtain,
    \begin{align}\label{Eq:decomp}
            &\phi(x,a)^\top \hat{w}_h^k - r_h(x,a) - \mathbb{P}_h V_{h+1}^k(x,a) \notag\\
            &= \phi(x,a)^\top \left(\Lambda_h^k\right)^{-1}\sum_{\tau=1}^{k-1}\left[r_h(x_h^\tau,a_h^\tau) + V_{h+1}^k(x_{h+1}^\tau)\right]\cdot \phi(x_h^\tau, a_h^\tau)-r_h(x,a)\notag\\
            &\qquad -\phi(x,a)^\top \left(\Lambda_h^k\right)^{-1} \left(\sum_{\tau=1}^{k-1}\phi(x_h^\tau,a_h^\tau)(\mathbb{P}_h V_{h+1}^k)(x_h^\tau, a_h^\tau) + \lambda I \la \mu_h, V_{h+1}^k\ra_{\mathcal{S}}\right)\notag\\
            &=\underbrace{\phi(x,a)^\top (\Lambda_h^k)^{-1}\left(\sum_{\tau=1}^{k-1}\phi(x_h^\tau,a_h^\tau)\left[V_{h+1}^k(x_{h+1}^\tau)-\mathbb{P}_h V_{h+1}^k(x_h^\tau,a_h^\tau)\right]\right)}_{\text{(i)}}\notag\\
            &\qquad + \underbrace{\phi(x,a)^\top(\Lambda_h^k)^{-1}\left(\sum_{\tau=1}^{k-1} r_h(x_h^\tau,a_h^\tau)\phi(x_h^\tau,a_h^\tau)\right) - r_h(x,a)}_{\text{(ii)}}\notag\\
            &\qquad - \underbrace{\lambda \phi(x,a)^\top (\Lambda_h^k)^{-1}\la \mu_h, V_{h+1}^k\ra_{\mathcal{S}}}_{\text{(iii)}}.    
    \end{align}
    We now provide an upper bound for each of the terms in  \eqref{Eq:decomp}.
    
    \textbf{Term(i).} Using Cauchy-Schwarz inequality and Lemma \ref{Lemma:self-normalized-mdp}, with probability at least $1-\delta$, we have
    \begin{align}\label{Eq:term-i}
            &\phi(x,a)^\top (\Lambda_h^k)^{-1}\left(\sum_{\tau=1}^{k-1}\phi(x_h^\tau,a_h^\tau)\left[V_{h+1}^k(x_{h+1}^\tau)-\mathbb{P}_h V_{h+1}^k(x_h^\tau,a_h^\tau)\right]\right)\notag\\
            &\leq \left\|\sum_{\tau=1}^{k-1} \phi(x_h^\tau,a_h^\tau)\left[V_{h+1}^k(x_{h+1}^\tau)-\mathbb{P}V_{h+1}^k(x_h^\tau,a_h^\tau)\right]\right\|_{(\Lambda_h^k)^{-1}}\left\|\phi(x,a)\right\|_{(\Lambda_h^k)^{-1}}\notag\\
            &\leq 3H\sqrt{d}\left[\frac{1}{2}\log(K+1)+ \log\left(\frac{2\sqrt{2}KB_{\delta/2}}{H}\right) +\log\frac{2}{\delta}\right]^{1/2}\left\|\phi(x,a)\right\|_{(\Lambda_h^k)^{-1}}.
    \end{align}

    \textbf{Term (ii).} First note that,
    \begin{align}\label{Eq:term-ii}
            &\phi(x,a)^\top(\Lambda_h^k)^{-1}\left(\sum_{\tau=1}^{k-1} r_h(x_h^\tau,a_h^\tau)\phi(x_h^\tau,a_h^\tau)\right) - r_h(x,a)\notag\\
            &=\phi(x,a)^\top(\Lambda_h^k)^{-1}\left(\sum_{\tau=1}^{k-1} r_h(x_h^\tau,a_h^\tau)\phi(x_h^\tau,a_h^\tau)\right) - \phi(x,a)^\top \theta_h\notag\\
            &=\phi(x,a)^\top(\Lambda_h^k)^{-1}\left(\sum_{\tau=1}^{k-1} r_h(x_h^\tau,a_h^\tau)\phi(x_h^\tau,a_h^\tau) - \Lambda_h^k\theta_h\right)\notag\\
            &=\phi(x,a)^\top(\Lambda_h^k)^{-1}\left(\sum_{\tau=1}^{k-1} r_h(x_h^\tau,a_h^\tau)\phi(x_h^\tau,a_h^\tau) - \sum_{\tau=1}^{k-1}\phi(x_h^\tau,a_h^\tau)\phi(x_h^\tau,a_h^\tau)^\top\theta_h - \lambda I \theta_h\right)\notag\\
            &=\phi(x,a)^\top(\Lambda_h^k)^{-1}\left(\sum_{\tau=1}^{k-1} r_h(x_h^\tau,a_h^\tau)\phi(x_h^\tau,a_h^\tau) - \sum_{\tau=1}^{k-1}\phi(x_h^\tau,a_h^\tau)r_h(x_h^\tau,a_h^\tau) - \lambda I \theta_h\right)\notag\\
            &=-\lambda \phi(x,a)^\top (\Lambda_h^k)^{-1}\theta_h.        
    \end{align}
    Here we used the definition $r_h(x,a)=\la \phi(x,a),\theta_h\ra$ from Definition \ref{definition-linear-MDP}. Applying Cauchy-Schwarz inequality, we further get,
    \begin{align}\label{Eq:term-ii-afterCauchy}
        \begin{split}
            -\lambda \phi(x,a)^\top (\Lambda_h^k)^{-1}\theta_h &\leq \lambda\|\phi(x,a)\|_{(\Lambda_h^k)^{-1}}\|\theta_h\|_{(\Lambda_h^k)^{-1}}\\
            &\leq \sqrt{\lambda}\|\phi(x,a)\|_{(\Lambda_h^k)^{-1}}\|\theta_h\|_2\\
            &\leq \sqrt{\lambda d}\|\phi(x,a)\|_{(\Lambda_h^k)^{-1}}.
        \end{split}
    \end{align}
    Here we used the observation that the largest eigenvalue of $(\Lambda_h^k)^{-1}$ is at most $1/\lambda$ and $\|\theta_h\|_2\leq \sqrt{d}$ from Definition \ref{definition-linear-MDP}. Combining  \eqref{Eq:term-ii} and  \eqref{Eq:term-ii-afterCauchy}, we get, 
    \begin{equation}\label{Eq:term-ii-final}
        \phi(x,a)^\top(\Lambda_h^k)^{-1}\left(\sum_{\tau=1}^{k-1} r_h(x_h^\tau,a_h^\tau)\phi(x_h^\tau,a_h^\tau)\right) - r_h(x,a) \leq \sqrt{\lambda d}\|\phi(x,a)\|_{(\Lambda_h^k)^{-1}}.
    \end{equation}

    \textbf{Term(iii).} Applying Cauchy-Schwarz inequality, we get,
    \begin{align}\label{Eq:term-iii}
        \begin{split}
            \lambda \phi(x,a)^\top (\Lambda_h^k)^{-1}\la \mu_h, V_{h+1}^k\ra_{\mathcal{S}} &\leq \lambda\|\phi(x,a)\|_{(\Lambda_h^k)^{-1}}\|\la \mu_h, V_{h+1}^k\ra_{\mathcal{S}}\|_{(\Lambda_h^k)^{-1}}\\
            &\leq \sqrt{\lambda}\|\phi(x,a)\|_{(\Lambda_h^k)^{-1}}\|\la \mu_h, V_{h+1}^k\ra_{\mathcal{S}}\|_2\\
            &\leq \sqrt{\lambda}\|\phi(x,a)\|_{(\Lambda_h^k)^{-1}} \left(\sum_{\tau=1}^{d}\|\mu_h^\tau\|_1^2\right)^{\frac{1}{2}}\|V_{h+1}^k\|_\infty\\
            &\leq H\sqrt{\lambda d}\|\phi(x,a)\|_{(\Lambda_h^k)^{-1}},
        \end{split}
    \end{align}
    where the the last inequality follows from $\sum_{\tau=1}^{d}\|\mu_h^\tau\|_1^2 \leq d$ in Definition \ref{definition-linear-MDP}. Combining  \eqref{Eq:term-i},  \eqref{Eq:term-ii-final} and  \eqref{Eq:term-iii}, and letting $\lambda=1$, we get, with probability at least $1-\delta$
    \begin{align*}
        &\left|\phi(x,a)^\top \hat{w}_h^k - r_h(x,a) - \mathbb{P}_h V_{h+1}^k(x,a)\right|\\
        &\leq \left(3H\sqrt{d}\left[\frac{1}{2}\log(K+1)+ \log\left(\frac{2\sqrt{2}KB_{\delta/2}}{H}\right) +\log\frac{2}{\delta}\right]^{1/2} + \sqrt{\lambda d} + H\sqrt{\lambda d}\right)\|\phi(x,a)\|_{(\Lambda_h^k)^{-1}}\\
        &\leq 5H\sqrt{d}\left[\frac{1}{2}\log(K+1)+ \log\left(\frac{2\sqrt{2}KB_{\delta/2}}{H}\right) +\log\frac{2}{\delta}\right]^{1/2}\left\|\phi(x,a)\right\|_{(\Lambda_h^k)^{-1}}\\
        &= 5H\sqrt{d} C_\delta \left\|\phi(x,a)\right\|_{(\Lambda_h^k)^{-1}},
    \end{align*}
    where we denote $C_\delta = \left[\frac{1}{2}\log(K+1)+ \log\left(\frac{2\sqrt{2}KB_{\delta/2}}{H}\right) +\log\frac{2}{\delta}\right]^{1/2}$.
\end{proof}

\subsection{Proof of \Cref{lemma:error-bound-on-l}}
\begin{proof}[Proof of \Cref{lemma:error-bound-on-l}]
    First note that,
    \begin{align*}
        -l_h^k(x,a) &= Q_h^k(x,a) - r_h(x,a) - \mathbb{P}_h V_{h+1}^k(x,a)\\
        &= \min\{\phi(x,a)^\top w_h^{k,J_k}, H-h+1\} - r_h(x,a) -\mathbb{P}_h V_{h+1}^k(x,a)\\
        &\leq \phi(x,a)^\top w_h^{k,J_k}- r_h(x,a) -\mathbb{P}_h V_{h+1}^k(x,a)\\
        &= \phi(x,a)^\top w_h^{k,J_k} - \phi(x,a)^\top \hat{w}_h^k + \phi(x,a)^\top \hat{w}_h^k- r_h(x,a) -\mathbb{P}_h V_{h+1}^k(x,a)\\
        &\leq \underbrace{\left|\phi(x,a)^\top w_h^{k,J_k} - \phi(x,a)^\top \hat{w}_h^k\right|}_{\text{(i)}} + \underbrace{\left|\phi(x,a)^\top \hat{w}_h^k- r_h(x,a) -\mathbb{P}_h V_{h+1}^k(x,a)\right|}_{\text{(ii)}} .
    \end{align*}
    Applying Lemma \ref{Lemma:w-hatw}, for any $(h,k) \in [H]\times[K]$ and $(x,a) \in \mathcal{S}\times\mathcal{A}$, we have
    \begin{equation*}
        \left|\phi(x,a)^\top w_h^{k,J_k} - \phi(x,a)^\top\hat{w}_h^k\right| \leq \left(5\sqrt{\frac{2d\log{(1/\delta)}}{3\beta_K}} + \frac{4}{3}\right) \|\phi(x,a)\|_{(\Lambda_h^k)^{-1}},
    \end{equation*}
    with probability at least $1- \delta^2$.
    
    From  \Cref{Lemma:what-r-PV}, conditioned on the event $\mathcal{E}(K,H,\delta)$, for all $(h,k)\in [H]\times[K]$ and $(x,a)\in \mathcal{S}\times\mathcal{A}$, we have
    \begin{equation*}
        \left|\phi(x,a)^\top \hat{w}_h^k - r_h(x,a) - \mathbb{P}_h V_{h+1}^k(x,a)\right| \leq 5H\sqrt{d}C_\delta \|\phi(x,a)\|_{(\Lambda_h^k)^{-1}}.
    \end{equation*}
    So, with probability $1-\delta^2$,
    \begin{align*}
        -l_h^k(x,a) &\leq (i)+(ii)\\
        &\leq \left(5H\sqrt{d}C_\delta + 5\sqrt{\frac{2d\log{(1/\delta)}}{3\beta_K}} + 4/3\right)\|\phi(x,a)\|_{(\Lambda_h^k)^{-1}}.
    \end{align*}
    This completes the proof.
\end{proof}

\subsection{Proof of \Cref{lemma:bound-on-l}}
\begin{proof}[Proof of \Cref{lemma:bound-on-l}]
    We want to show $Q_h^k(x,a)\geq r_h(x,a) + \mathbb{P}_h V_{h+1}^k(x,a)$ with high probability. Recall that $Q_h^k(x,a) =\min\{\phi(x,a)^\top w_h^{k,J_k},H-h+1\}$. Also note that $r_h(x,a) + \mathbb{P}_h V_{h+1}^k(x,a) \leq H - h + 1$. Thus, when $\phi(x,a)^\top w_h^{k,J_k} \geq H - h + 1$, we trivially have $Q_h^k(x,a)\geq r_h(x,a) + \mathbb{P}_h V_{h+1}^k(x,a)$. So, we now consider the case, when  $\phi(x,a)^\top w_h^{k,J_k} \leq H - h + 1$  and thus $Q_h^k(x,a) = \phi(x,a)^\top w_h^{k,J_k}$.
    
    Based on the mean and covariance matrix defined in Proposition \ref{Prop:w_gaussian}, we have that $\phi(x,a)^\top w_h^{k,J_k}$ follows the distribution $\mathcal{N}(\phi(x,a)^\top \mu_h^{k,J_k}, \phi(x,a)^\top \Sigma_h^{k,J_k}\phi(x,a))$.
    
    Define, $Z_k = \frac{r_h(x,a)+\mathbb{P}_h V_{h+1}^k(x,a)-\phi(x,a)^\top\mu_h^{k,J_k}}{\sqrt{\phi(x,a)^\top\Sigma_h^{k,J_k}\phi(x,a)}}$. When $|Z_k|<1$, by Lemma \ref{Lemma:Gaussian-abramowitz}, we have
    \begin{align*}
        &\mathbb{P}\left(\phi(x,a)^\top w_h^{k,J_k} \geq r_h(x,a) + \mathbb{P}_h V_{h+1}^k(x,a)\right)\\
        &= \mathbb{P}\left(\frac{\phi(x,a)^\top w_h^{k,J_k} - \phi(x,a)^\top \mu_h^{k,J_k}}{\sqrt{\phi(x,a)^\top\Sigma_h^{k,J_k}\phi(x,a)}} \geq \frac{r_h(x,a) + \mathbb{P}_h V_{h+1}^k(x,a)-\phi(x,a)^\top \mu_h^{k,J_k} }{\sqrt{\phi(x,a)^\top\Sigma_h^{k,J_k}\phi(x,a)}}\right)\\
        &\geq \frac{1}{2\sqrt{2\pi}}\exp{(-Z_k^2/2)}\\
        &\geq \frac{1}{2\sqrt{2e\pi}}.
    \end{align*}
    We now show that $|Z_k| < 1$ under the event $\mathcal{E}(K,H,\delta)$. First note that by triangle inequality, we have
    \begin{align*}
        & \left|r_h(x,a) + \mathbb{P}_h V_{h+1}^k(x,a)-\phi(x,a)^\top \mu_h^{k,J_k}\right| \\
        &\leq \left|r_h(x,a) + \mathbb{P}_h V_{h+1}^k(x,a)-\phi(x,a)^\top\hat{w}_h^k\right|+ \left|\phi(x,a)^\top\hat{w}_h^k - \phi(x,a)^\top \mu_h^{k,J_k}\right|.
    \end{align*}
    
    By definition of the event $\mathcal{E}(K,H,\delta)$ from Lemma \ref{Lemma:what-r-PV}, we have,
    \begin{equation*}
        \left|r_h(x,a) + \mathbb{P}_h V_{h+1}^k(x,a)-\phi(x,a)^\top \hat{w}_h^k\right| \leq 5H\sqrt{d}C_\delta \|\phi(x,a)\|_{(\Lambda_h^k)^{-1}},
    \end{equation*}
    From  \eqref{Eq:second_term_bound}, we have
    \begin{equation*}
        \left|\phi(x,a)^\top\hat{w}_h^k - \phi(x,a)^\top \mu_h^{k,J_k}\right| \leq 4H\sqrt{Kd/\lambda}\sum_{i=1}^{k-1} \prod_{j=i+1}^{k} \left(1-2\eta_j\lambda_{\min}\left(\Lambda_h^j\right)\right)^{J_j} \|\phi(x,a)\|_2.
    \end{equation*}
    As in proof of Lemma \ref{Lemma:w-hatw}, setting $\eta_j = 1/(4\lambda_{\max}(\Lambda_h^j))$, $J_j \geq 2\kappa_j \log(1/\epsilon)$, we have for all $j\in[K]$, $\left(1-2\eta_j\lambda_{\min}\left(\Lambda_h^j\right)\right)^{J_j} \leq \epsilon$. Setting $\epsilon = 1/(4HKD)$, we have,
    \begin{align*}
        \left|\phi(x,a)^\top\hat{w}_h^k - \phi(x,a)^\top \mu_h^{k,J_k}\right|&\leq 4H\sqrt{Kd}\sum_{i=1}^{k-1}\epsilon^{k-i}\|\phi(x,a)\|_2\\
        &\leq \sum_{i=1}^{k-1}\epsilon^{k-i-1}\frac{1}{4HKd} 4H\sqrt{Kd}\sqrt{K}\|\phi(x,a)\|_{(\Lambda_h^k)^{-1}}\\ 
        &\leq \sum_{i=1}^{k-1}\epsilon^{k-i-1}\|\phi(x,a)\|_{(\Lambda_h^k)^{-1}}\\
        &\leq \sum_{i=0}^{k-2}\epsilon^{i}\|\phi(x,a)\|_{(\Lambda_h^k)^{-1}}\\
        &\leq \frac{1}{1-\epsilon}\|\phi(x,a)\|_{(\Lambda_h^k)^{-1}}\\
        &\leq\frac{4}{3}\|\phi(x,a)\|_{(\Lambda_h^k)^{-1}}.
    \end{align*}
    So, we have
    \begin{equation}\label{Eq:numerator}
        \left|r_h(x,a) + \mathbb{P}_h V_{h+1}^k(x,a)-\phi(x,a)^\top \mu_h^{k,J_k}\right| \leq \left(5H\sqrt{d}C_\delta + \frac{4}{3}\right)\|\phi(x,a)\|_{(\Lambda_h^k)^{-1}}.
    \end{equation}
    Now, recall the definition of $\Sigma_h^{k,J_k}$ from Proposition \ref{Prop:w_gaussian}:
    \begin{align*}
        &\phi(x,a)^\top\Sigma_h^{k,J_k}\phi(x,a)\\ &=\sum_{i=1}^k \frac{1}{\beta_i}\phi(x,a)^\top A_k^{J_k}\ldots A_{i+1}^{J_{i+1}}\left(I-A^{2J_i}\right)\left(\Lambda_h^i\right)^{-1}\left(I+A_i\right)^{-1} A_{i+1}^{J_{i+1}}\ldots A_k^{J_k}\phi(x,a)\\
        &\geq \sum_{i=1}^k \frac{1}{2\beta_i}\phi(x,a)^\top A_k^{J_k}\ldots A_{i+1}^{J_{i+1}}\left(I-A^{2J_i}\right)\left(\Lambda_h^i\right)^{-1} A_{i+1}^{J_{i+1}}\ldots A_k^{J_k}\phi(x,a),
    \end{align*}
    where we used the fact that $\frac{1}{2}I < (I+A_k)^{-1}$. Recall that in  \eqref{Eq:A-commute}, we showed $A_k^{2J_k}\left(\Lambda_h^k\right)^{-1}=A_k^{J_k}\left(\Lambda_h^k\right)^{-1}A_k^{J_k}$. So,
    \begin{align*}
        &\phi(x,a)^\top\Sigma_h^{k,J_k}\phi(x,a)\\ &\geq \sum_{i=1}^k \frac{1}{2\beta_i}\phi(x,a)^\top A_k^{J_k}\ldots A_{i+1}^{J_{i+1}}\left(\left(\Lambda_h^i\right)^{-1}-A_k^{J_k}\left(\Lambda_h^i\right)^{-1}A_k^{J_k}\right) A_{i+1}^{J_{i+1}}\ldots A_k^{J_k}\phi(x,a) \\
        &=\frac{1}{2\beta_K}\sum_{i=1}^{k-1}\phi(x,a)^\top A_k^{J_k}\ldots A_{i+1}^{J_{i+1}}\left((\Lambda_h^i)^{-1}-(\Lambda_h^{i+1})^{-1}\right)A_{i+1}^{J_{i+1}}\ldots A_k^{J_k}\phi(x,a)\\
        &\qquad -\frac{1}{2\beta_K}\phi(x,a)^\top A_k^{J_k}\ldots A_1^{J_1}(\Lambda_h^1)^{-1}A_1^{J_1}\ldots A_k^{J_k}\phi(x,a) + \frac{1}{2\beta_K}\phi(x,a)^\top (\Lambda_h^k)^{-1}\phi(x,a),
    \end{align*}
    where we used the choice of $\beta_i = \beta_K$ for all $i\in[K]$. By Sherman-Morrison formula and  \eqref{Eq:design_matrix}, we have
    \begin{align*}
        \left(\Lambda_h^i\right)^{-1} - \left(\Lambda_h^{i+1}\right)^{-1} &= \left(\Lambda_h^i\right)^{-1} - \left(\Lambda_h^{i} + \phi(x_h^i,a_h^i)\phi(x_h^i,a_h^i)^\top\right)^{-1}\\
        &= \frac{\left(\Lambda_h^i\right)^{-1}\phi(x_h^i,a_h^i)\phi(x_h^i,a_h^i)^\top\left(\Lambda_h^i\right)^{-1}}{1 + \|\phi(x_h^i,a_h^i)\|_{(\Lambda_h^i)^{-1}}^2},
    \end{align*}
    which implies
    \begin{align*}
        &\left|\phi(x,a)^\top A_k^{J_k}\ldots A_{i+1}^{J_{i+1}}\left((\Lambda_h^i)^{-1}-(\Lambda_h^{i+1})^{-1}\right)A_{i+1}^{J_{i+1}}\ldots A_k^{J_k}\phi(x,a)\right|\\
        &=\left|\phi(x,a)^\top A_k^{J_k}\ldots A_{i+1}^{J_{i+1}}\frac{\left(\Lambda_h^i\right)^{-1}\phi(x_h^i,a_h^i)\phi(x_h^i,a_h^i)^\top\left(\Lambda_h^i\right)^{-1}}{1 + \|\phi(x_h^i,a_h^i)\|_{(\Lambda_h^i)^{-1}}^2}A_{i+1}^{J_{i+1}}\ldots A_k^{J_k}\phi(x,a)\right|\\
        &\leq \left(\phi(x,a)^\top A_k^{J_k}\ldots A_{i+1}^{J_{i+1}} (\Lambda_h^i)^{-1}\phi(x_h^i,a_h^i)\right)^2\\
        &\leq \left\|A_k^{J_k}\ldots A_{i+1}^{J_{i+1}}(\Lambda_h^i)^{-1/2}\phi(x,a)\right\|_2^2\left\|(\Lambda_h^i)^{-1/2}\phi(x_h^i,a_h^i)\right\|_2^2\\
        &\leq \prod_{j=i+1}^k \left(1-2\eta_j \lambda_{\min}(\Lambda_h^j)\right)^{2J_j}\|\phi(x_h^i,a_h^i)\|^2_{(\Lambda_h^i)^{-1}}\|\phi(x,a)\|^2_{(\Lambda_h^i)^{-1}},
    \end{align*}
    where we used $0 < 1/i \leq \|\phi(x,a)\|_{(\Lambda_h^i)^{-1}} \leq 1$. Therefore, we have
    \begin{align*}
        &\phi(x,a)^\top \Sigma_h^{k,J_k}\phi(x,a) \\
        &\geq \frac{1}{2\beta_K}\|\phi(x,a)\|^2_{(\Lambda_h^k)^{-1}} - \frac{1}{2\beta_K}\prod_{i=1}^k \left(1-2\eta_i\lambda_{\min}(\Lambda_h^i)\right)^{2J_i}\|\phi(x,a)\|^2_{(\Lambda_h^1)^{-1}}\\
        &\qquad -\frac{1}{2\beta_K}\sum_{i=1}^{k-1}\prod_{j=i+1}^k \left(1-2\eta_j\lambda_{\min}(\Lambda_h^j)\right)^{2J_j}\|\phi(x_h^i,a_h^i)\|^2_{(\Lambda_h^i)^{-1}}\|\phi(x,a)\|^2_{(\Lambda_h^i)^{-1}}.
    \end{align*}
    Similar to the proof of Lemma \ref{Lemma:w-hatw}, when we choose $J_j \geq \kappa_j \log(3\sqrt{k})$, we have
    \begin{align}\label{Eq:denominator}
        \begin{split}
            \|\phi(x,a)\|_{\Sigma_h^{k,J_k}} &\geq \frac{1}{2\sqrt{\beta_K}}\left(\|\phi(x,a)\|_{(\Lambda_h^k)^{-1}} -\frac{\|\phi(x,a)\|_2}{(3\sqrt{k})^k} - \sum_{i=1}^{k-1}\frac{1}{(\sqrt{3k})^{k-i}}\|\phi(x,a)\|_2\right)\\
            &\geq \frac{1}{2\sqrt{\beta_K}}\left(\|\phi(x,a)\|_{(\Lambda_h^k)^{-1}} - \frac{1}{3\sqrt{k}}\|\phi(x,a)\|_2 - \frac{1}{6\sqrt{k}}\|\phi(x,a)\|_2\right)\\
            &\geq \frac{1}{2\sqrt{\beta_K}}\|\phi(x,a)\|_{(\Lambda_h^k)^{-1}},
        \end{split}
    \end{align}
    where we used the fact that $\lambda_{\min}((\Lambda_h^k)^{-1}) \geq 1/k$.
    Therefore, according to  \eqref{Eq:numerator} and  \eqref{Eq:denominator}, it holds that
    \begin{align}
        \begin{split}
            |Z_k| &= \left|\frac{r_h(x,a)+\mathbb{P}_h V_{h+1}^k(x,a)-\phi(x,a)^\top\mu_h^{k,J_k}}{\sqrt{\phi(x,a)^\top\Sigma_h^{k,J_k}\phi(x,a)}}\right|\\
            &\leq \frac{5H\sqrt{d}C_\delta + \frac{4}{3}}{\frac{1}{2\sqrt{\beta_K}}},
        \end{split}
    \end{align}
    which implies $|Z_k| <1$ when $\frac{1}{\sqrt{\beta_K}} = 10H\sqrt{d}C_\delta + \frac{8}{3}$.       
\end{proof}

\section{Removing the interval constraint on $\delta$ in \Cref{theorem:main-theorem-main-paper}}\label{Section:multi-sample}

As discussed in \Cref{Remark:multi-sample}, one shortcoming of \Cref{theorem:main-theorem-main-paper} is that it requires the failure probability $\delta$ to be in the interval $(\frac{1}{2\sqrt{2e\pi}},1)$. However, in frequentist regret analysis it is desirable that the regret bound holds for arbitrarily small failure probability. Motivated from optimistic reward sampling scheme proposed in \citep{ishfaq2021randomized}, below we propose a modification of LMC-LSVI for which we have the same regret bound that holds with probability $1-\delta$ for any $\delta \in (0,1)$. We call this version of the algorithm  MS-LMC-LSVI where MS stands for multi-sampling. 

\subsection{Multi-Sampling LMC-LSVI}

Multi-Sampling LMC-LSVI follows the same algorithm as in LMC-LSVI (\Cref{Algorithm:LMC-TS}) with one difference. In \Cref{Algorithm:LMC-TS}, we maintain one parameterization $w_h$ for each step $h \in [H]$. After performing noisy gradient descent on this parameter $J_k$ times in Line 7 to 10, we use this perturbed parameter to define our estimated Q function in Line 11. We follow exactly the same procedure in MS-LMC-LSVI but instead of maintaining one estimate of Q function,  we generate $M$ estimates for Q function $\{Q_h^{k, m}\}_{m \in [M]}$ through maintaining $M$ samples of $w$: $\{w_h^{k, J_k, m}\}_{m \in [M]}$ where $Q_h^{k, m}(\cdot,  \cdot) = \phi(\cdot, \cdot)^\top w_h^{k, J_k, m}$ and $\{w_h^{1,0,m}\}_{m \in [M]}$ are intialized as zero vector. Then, we can make an optimistic estimate of Q function by setting the following
\begin{equation}\label{Eq:Q-function-multi-sample}
    Q_h^{k}(\cdot, \cdot)= \min\Big\{\max_{m \in [M]}\{Q_h^{k, m}(\cdot, \cdot)\}, H-h+1\Big\}.
\end{equation}

\subsection{Supporting Lemmas}
First, we outline the supporting lemmas required for the regret analysis of  MS-LMC-LSVI. 

\begin{lemma}\label{Lemma:bound-2norm-wkjkm-multi-sample}
    Let $\lambda=1$ in \eqref{Eq:loss_function_definition}. For any $(k,h)\in[K]\times[H]$ and $m \in [M]$, we have
    \begin{equation*}
        \left\|w_h^{k,J_k,m}\right\|_2 \leq \frac{16}{3}Hd\sqrt{K} + \sqrt{\frac{2K}{3\beta_K\delta}}d^{3/2} := B_\delta,
    \end{equation*}
    with probability at least $1-\delta$.
\end{lemma}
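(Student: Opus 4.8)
The plan is to observe that this lemma is, for each fixed sample index $m$, identical to \Cref{Lemma:bound-2norm-wkjk}, so the entire argument carries over with essentially no change. The key point is that each copy $w_h^{k,J_k,m}$ is generated by exactly the same noisy gradient recursion (Line 9 of \Cref{Algorithm:LMC-TS}) as the single-sample iterate, the only difference being that it uses its own independent noise sequence $\{\epsilon_h^{k,j,m}\}$. Running through the telescoping computation in the proof of \Cref{Prop:w_gaussian} verbatim shows that the marginal law of each $w_h^{k,J_k,m}$ is $\mathcal{N}(\mu_h^{k,J_k}, \Sigma_h^{k,J_k})$ with the \emph{same} mean $\mu_h^{k,J_k}$ and covariance $\Sigma_h^{k,J_k}$ as before, since those formulas are determined entirely by the deterministic quantities $\Lambda_h^i$, $\hat{w}_h^i$, $\eta_i$, $\beta_i$, and $J_i$, none of which depend on the noise realization. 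Thus I would first note that the independence across $m$ is irrelevant for a per-sample bound, and it suffices to bound a single Gaussian vector.

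Given that, I would decompose $w_h^{k,J_k,m} = \mu_h^{k,J_k} + \xi_h^{k,J_k,m}$ with $\xi_h^{k,J_k,m} \sim \mathcal{N}(0, \Sigma_h^{k,J_k})$, and bound the two pieces exactly as in the proof of \Cref{Lemma:bound-2norm-wkjk}. For the mean term, the deterministic bound $\|\mu_h^{k,J_k}\|_2 \leq \frac{16}{3}Hd\sqrt{K}$ relies only on $w_h^{1,0,m}=\mathbf{0}$, on the contraction estimates in \eqref{Eq:inequality-of-A_k}, and on the bound $\|\hat{w}_h^i\|_2 \leq 2H\sqrt{id/\lambda}$ from \Cref{lemma:w_hat_bound}. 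For the noise term, I would invoke \Cref{Lemma:Gaussian-two-norm-bound} together with the trace estimate $\Tr(\Sigma_h^{k,J_k}) \leq \frac{2}{3\beta_K}Kd^3$ to conclude $\|\xi_h^{k,J_k,m}\|_2 \leq \sqrt{\tfrac{1}{\delta}\Tr(\Sigma_h^{k,J_k})} \leq \sqrt{\tfrac{2K}{3\beta_K\delta}}\,d^{3/2}$ with probability at least $1-\delta$. The triangle inequality then yields the claimed bound $B_\delta$.

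The one point requiring care—the closest thing to an obstacle—is verifying that redefining the value function via the multi-sample optimistic estimate in \eqref{Eq:Q-function-multi-sample} does not disturb any of the deterministic ingredients above. This reduces to checking two facts. First, \Cref{lemma:w_hat_bound} holds unchanged because its proof uses only $0 \leq V_{h+1}^k \leq H$ and rewards in $[0,1]$, and the clipping in \eqref{Eq:Q-function-multi-sample} still forces $V_{h+1}^k \in [0, H-h+1] \subseteq [0,H]$; the maximum over $m$ does not break this. Second, the trace bound on $\Sigma_h^{k,J_k}$ depends only on the spectral properties of the $A_i$ and $\Lambda_h^i$ matrices and is therefore completely insensitive to how $V_{h+1}^k$ is formed. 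Since both ingredients survive, the full argument of \Cref{Lemma:bound-2norm-wkjk} applies to each fixed $m$, which completes the proof.
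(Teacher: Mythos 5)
Your proposal is correct and matches the paper's approach: the paper's own proof of this lemma is simply the observation that it is identical to the proof of \Cref{Lemma:bound-2norm-wkjk}, which is exactly the reduction you carry out. Your extra check that the multi-sample value function \eqref{Eq:Q-function-multi-sample} still satisfies $0 \leq V_{h+1}^k \leq H$ (so that \Cref{lemma:w_hat_bound} and the trace bound survive) is a detail the paper leaves implicit, and it is verified correctly.
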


\begin{proof}[Proof of \Cref{Lemma:bound-2norm-wkjkm-multi-sample}]
    The proof is identical to that of \Cref{Lemma:bound-2norm-wkjk}.
\end{proof}

\begin{lemma}\label{Lemma:self-normalized-mdp-multi-sample}
    Let $\lambda=1$ in \eqref{Eq:loss_function_definition}. For any fixed $0 < \delta < 1$, with probability $1-\delta$, we have for all $(k,h)\in[K]\times[H]$,
    \begin{align*}
        &\left\|\sum_{\tau=1}^{k-1}\phi(x_h^\tau,a_h^\tau)\left[V_{h+1}^k(x_{h+1}^\tau)- \mathbb{P}_h V_{h+1}^k(x_h^\tau,a_h^\tau)\right]\right\|_{(\Lambda_h^k)^{-1}} \\
        &\leq 3H\sqrt{d}\left[\frac{1}{2}\log(K+1)+ M\log\left(\frac{2\sqrt{2}KB_{\delta/(2M)}}{H}\right) +\log\frac{2}{\delta}\right]^{1/2},
    \end{align*}
    where $B_\delta = \frac{16}{3}Hd\sqrt{K} + \sqrt{\frac{2K}{3\beta_K\delta}}d^{3/2}$.
\end{lemma}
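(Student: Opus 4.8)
The plan is to follow the proof of \Cref{Lemma:self-normalized-mdp} essentially verbatim, modifying only the two places where the single-sample structure entered: the norm bound on the weight iterates, and the covering-number bound for the class of value functions that appears inside the self-normalized concentration argument. All the probabilistic bookkeeping (the final union bound that turns two $1-\delta/2$ events into a $1-\delta$ guarantee) carries over unchanged.

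First I would control all $M$ sampled parameters simultaneously. By \Cref{Lemma:bound-2norm-wkjkm-multi-sample}, for each fixed $m\in[M]$ we have $\|w_h^{k,J_k,m}\|_2 \leq B_{\delta/(2M)}$ with probability at least $1-\delta/(2M)$; a union bound over the $M$ samples shows that all of these hold simultaneously with probability at least $1-\delta/2$. This is exactly where the tighter argument $\delta/(2M)$ inside $B$ in the statement comes from. On this event, recalling that by \Cref{Eq:Q-function-multi-sample} the value function has the form
\[
V_{h+1}^k(\cdot) = \max_{a\in\mathcal{A}} \min\Big\{\max_{m\in[M]}\phi(\cdot,a)^\top w_{h+1}^{k,J_k,m},\, H-h\Big\},
\]
it lies in the enlarged class
\[
\mathcal{V}_M := \Big\{\min\Big\{\max_{a\in\mathcal{A}}\max_{m\in[M]}\phi(\cdot,a)^\top w^{(m)},\, H\Big\} : \|w^{(m)}\|_2 \leq B_{\delta/(2M)}\ \text{for all}\ m\in[M]\Big\}.
\]

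The crux is then to bound the $\varepsilon$-covering number of $\mathcal{V}_M$. Because $t\mapsto\min\{t,H\}$, $\max_{a}$ and $\max_{m}$ are all $1$-Lipschitz (the last two with respect to the sup-norm over their arguments) and $\|\phi\|_2\leq 1$, whenever $\|w^{(m)}-\bar w^{(m)}\|_2\leq\varepsilon$ for every $m$ the corresponding functions differ by at most $\varepsilon$ in sup-norm. Hence a product of individual $\varepsilon$-covers of the $M$ parameter balls yields an $\varepsilon$-cover of $\mathcal{V}_M$, and its log-cardinality is the $M$-fold analogue of \Cref{Lemma:covering-number-V}, namely $\log N_\varepsilon \leq Md\log(1+B_{\delta/(2M)}/\varepsilon)$ up to constants. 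This single change replaces the $d\log(B_{\delta/2}/\varepsilon)$ term of the single-sample proof by $Md\log(B_{\delta/(2M)}/\varepsilon)$.

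Feeding this covering number into the self-normalized concentration bound of \Cref{Lemma:self-normalized-process} gives, with probability $1-\delta/2$,
\[
\Big\|\sum_{\tau=1}^{k-1}\phi(x_h^\tau,a_h^\tau)\big[V_{h+1}^k(x_{h+1}^\tau)-\mathbb{P}_h V_{h+1}^k(x_h^\tau,a_h^\tau)\big]\Big\|_{(\Lambda_h^k)^{-1}} \leq \Big(4H^2\Big[\tfrac{d}{2}\log\tfrac{k+\lambda}{\lambda}+Md\log\tfrac{B_{\delta/(2M)}}{\varepsilon}+\log\tfrac{2}{\delta}\Big]+\tfrac{8k^2\varepsilon^2}{\lambda}\Big)^{1/2}.
\]
Setting $\lambda=1$ and $\varepsilon=H/(2\sqrt{2}k)$ as in the single-sample proof, applying $\sqrt{a+b}\leq\sqrt{a}+\sqrt{b}$ to peel off the residual $H$ term and absorbing it by enlarging the leading constant from $2$ to $3$, and finally taking a union bound with the norm event, yields the stated bound with probability $1-\delta$. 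The only real obstacle is the covering-number step: one must confirm that the composite $\max_a\max_m$ map remains $1$-Lipschitz in the concatenated parameters so that a product cover is admissible, since this is precisely what converts the single $d\log$ factor into the $M d\log$ factor in the statement; the remainder is a word-for-word repetition of \Cref{Lemma:self-normalized-mdp} with $B_{\delta/2}$ replaced by $B_{\delta/(2M)}$.
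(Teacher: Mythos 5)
Your proposal matches the paper's proof essentially step for step: a union bound over the $M$ parameter vectors to get the $B_{\delta/(2M)}$ radius, the enlarged function class whose covering number picks up the factor $M$ (the paper's \Cref{Lemma:covering-number-V-multi-sample}), and then the identical application of \Cref{Lemma:self-normalized-process} with $\lambda=1$, $\varepsilon=H/(2\sqrt{2}k)$ and a final union bound. The Lipschitz/product-cover argument you flag as the only delicate point is exactly how the paper proves the $dM\log(3B/\varepsilon)$ covering bound, so there is no gap.
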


\begin{proof}[Proof of \Cref{Lemma:self-normalized-mdp-multi-sample}]
    Applying \Cref{Lemma:bound-2norm-wkjkm-multi-sample} and applying union bound over all $m$, with probability $1- \delta/2$, for all $m \in [M]$, we have
    \begin{equation}
        \left\|w_h^{k,J_k,m}\right\|_2 \leq \frac{16}{3}Hd\sqrt{K} + \sqrt{\frac{4MK}{3\beta_K\delta}}d^{3/2} := B_{\delta/(2M)}
    \end{equation}

    Similar to the proof of \Cref{Lemma:self-normalized-mdp}, considering the function class $\mathcal{V} := \{\max_{a \in \mathcal{A}} \min\left\{\max_{m\in [M]} \phi(\cdot,a)^\top w^m,H\right\}: \forall m \in [M], \|w^m\|_2 \leq B_{\delta/(2M)}\}$ and combining Lemma \ref{Lemma:self-normalized-process} and Lemma \ref{Lemma:covering-number-V-multi-sample}, we have that for any $\varepsilon > 0$ and $\delta > 0$, with probability at least $1-\delta/2$,
    \begin{equation}
        \begin{aligned}[b]
            &\left\|\sum_{\tau=1}^{k-1}\phi(x_h^\tau,a_h^\tau)\left[V_{h+1}^k(x_{h+1}^\tau)- \mathbb{P}_h V_{h+1}^k(x_h^\tau,a_h^\tau)\right]\right\|_{(\Lambda_h^k)^{-1}}\\
            &\leq 2H\left[\frac{d}{2}\log\left(\frac{k+\lambda}{\lambda}\right)+dM\log\left(\frac{B_{\delta/(2M)}}{\varepsilon}\right) + \log\frac{2}{\delta}\right]^{1/2} + \frac{2\sqrt{2}k\varepsilon}{\sqrt{\lambda}}.
        \end{aligned}
    \end{equation}
    As in the proof of \Cref{Lemma:self-normalized-mdp}, setting $\lambda = 1$, $\varepsilon = \frac{H}{2\sqrt{2}k}$, we get  
    \begin{equation}\label{Eq:self-normalized_in_proof_multi_sample}
        \begin{aligned}[b]
        &\left\|\sum_{\tau=1}^{k-1}\phi(x_h^\tau,a_h^\tau)\left[V_{h+1}^k(x_{h+1}^\tau)- \mathbb{P}_h V_{h+1}^k(x_h^\tau,a_h^\tau)\right]\right\|_{(\Lambda_h^k)^{-1}}\\
        &\leq 3H\sqrt{d}\left[\frac{1}{2}\log(K+1)+ M\log\left(\frac{2\sqrt{2}KB_{\delta/(2M)}}{H}\right) +\log\frac{2}{\delta}\right]^{1/2}.
    \end{aligned}
    \end{equation}
    with probability $1-\delta/2$. Applying union bound completes the proof.
\end{proof}

\begin{lemma}\label{Lemma:what-r-PV-multi-sample}
    Let $\lambda = 1$ in \eqref{Eq:loss_function_definition}. Define the following event
    \begin{align}
        &\mathcal{E}(K,H,M,\delta) \notag\\
        &= \biggl\{  \left|\phi(x,a)^\top \hat{w}_h^k - r_h(x,a) - \mathbb{P}_h V_{h+1}^k(x,a)\right|\leq 5H\sqrt{d}C_{\delta, M} \|\phi(x,a)\|_{(\Lambda_h^k)^{-1}},  \notag \\
        & \quad \forall (h,k) \in [H]\times [K] \text{ and } \forall (x,a) \in \mathcal{S}\times\mathcal{A} \biggr\},
    \end{align}
    where we denote $C_{\delta, M} = \left[\frac{1}{2}\log(K+1)+ M\log\left(\frac{2\sqrt{2}KB_{\delta/(2M)}}{H}\right) +\log\frac{2}{\delta}\right]^{1/2}$ and $B_\delta$ is defined in \Cref{Lemma:self-normalized-mdp-multi-sample}.
    Then we have $\mathbb{P}(\mathcal{E}(K,H, M, \delta)) \geq 1-\delta$.
\end{lemma}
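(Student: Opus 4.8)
The plan is to mirror the proof of \Cref{Lemma:what-r-PV} almost verbatim, substituting the multi-sample self-normalized bound wherever the single-sample one was used. The key observation is that the least-squares minimizer $\hat{w}_h^k = (\Lambda_h^k)^{-1} b_h^k$ has exactly the same closed form as in the single-sample setting; the only quantity affected by the multi-sampling scheme is the value function $V_{h+1}^k$, which is now defined through \eqref{Eq:Q-function-multi-sample} as a maximum over $M$ sampled $Q$-functions. Since this $V_{h+1}^k$ is still bounded in $[0,H]$, the algebraic decomposition of the prediction error carries over unchanged.

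First I would invoke the linear MDP structure from \Cref{definition-linear-MDP} to write $\mathbb{P}_h V_{h+1}^k(x,a) = \phi(x,a)^\top (\Lambda_h^k)^{-1}\big(\sum_{\tau=1}^{k-1} \phi(x_h^\tau,a_h^\tau) (\mathbb{P}_h V_{h+1}^k)(x_h^\tau, a_h^\tau) + \lambda \langle \mu_h, V_{h+1}^k\rangle_{\mathcal{S}}\big)$, exactly as in \eqref{Eq:PV-decomp}. Substituting this together with the definition of $\hat{w}_h^k$ yields the same three-term decomposition as in \eqref{Eq:decomp}: a self-normalized noise term (i), a reward-fitting term (ii), and a regularization term (iii).

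For Term (i), instead of applying \Cref{Lemma:self-normalized-mdp} I would apply \Cref{Lemma:self-normalized-mdp-multi-sample}, which after a Cauchy--Schwarz step bounds the self-normalized sum by $3H\sqrt{d}\,C_{\delta,M}\|\phi(x,a)\|_{(\Lambda_h^k)^{-1}}$. This is the only place where the multi-sampling genuinely enters: the larger covering number of the function class $\mathcal{V}$ (which now takes a max over $M$ parameter vectors) replaces the single logarithmic factor by an $M\log(\cdot)$ factor, producing $C_{\delta,M}$ in place of $C_\delta$. Terms (ii) and (iii) are identical to \eqref{Eq:term-ii-final} and \eqref{Eq:term-iii}; they depend only on $\|\theta_h\|_2 \le \sqrt{d}$, $\sum_{\tau=1}^d \|\mu_h^\tau\|_1^2 \le d$, and $\|V_{h+1}^k\|_\infty \le H$, all of which are unaffected by multi-sampling, and they contribute $\sqrt{\lambda d}$ and $H\sqrt{\lambda d}$ respectively.

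Finally, combining the three bounds with $\lambda = 1$ and absorbing the lower-order terms $\sqrt{d} + H\sqrt{d}$ into the dominant $3H\sqrt{d}\,C_{\delta,M}$ factor gives the stated bound $5H\sqrt{d}\,C_{\delta,M}\|\phi(x,a)\|_{(\Lambda_h^k)^{-1}}$. The probability guarantee $\mathbb{P}(\mathcal{E}(K,H,M,\delta)) \ge 1-\delta$ then follows directly from the high-probability statement of \Cref{Lemma:self-normalized-mdp-multi-sample}, which is the only randomized ingredient. I do not anticipate a substantive obstacle, since the argument is structurally identical to \Cref{Lemma:what-r-PV}; the only care required is to track the $M$-dependence correctly through $C_{\delta,M}$ and to verify that the covering-number inflation has already been absorbed into \Cref{Lemma:self-normalized-mdp-multi-sample} rather than needing to be re-derived here.
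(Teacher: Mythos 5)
Your proposal is correct and follows exactly the route the paper takes: the paper's own proof of this lemma is a one-line remark that one repeats the proof of \Cref{Lemma:what-r-PV} verbatim, replacing \Cref{Lemma:self-normalized-mdp} with \Cref{Lemma:self-normalized-mdp-multi-sample} in the bound for the self-normalized term. Your more detailed walk-through of the three-term decomposition and the observation that only Term (i) is affected by the multi-sampling (via the enlarged covering number already absorbed into $C_{\delta,M}$) is an accurate elaboration of that same argument.
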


\begin{proof}[Proof of \Cref{Lemma:what-r-PV-multi-sample}]
    The proof is exactly the same as that of \Cref{Lemma:what-r-PV}. We just need to replace \Cref{Lemma:self-normalized-mdp} with \Cref{Lemma:self-normalized-mdp-multi-sample} when it is used.
\end{proof}

\begin{lemma}[Error bound]\label{lemma:error-bound-on-l-multi-sample}
    Let $\lambda =1$ in \eqref{Eq:loss_function_definition}. For any $\delta \in (0,1)$ conditioned on the event $\mathcal{E}(K,H, M, \delta)$, for all $(h,k)\in [H]\times[K]$ and $(x,a)\in \mathcal{S}\times\mathcal{A}$, with probability at least $1-\delta^2$, we have
    \begin{equation}\label{Eq:minus-l-upperbound-}
        -l_h^k(x,a) \leq \left(5H\sqrt{d}C_\delta + 5\sqrt{\frac{2d\log{(1/\delta)}}{3\beta_K}} + 4/3\right)\|\phi(x,a)\|_{(\Lambda_h^k)^{-1}},
    \end{equation}
    where $C_{\delta,M}$ is defined in \Cref{Lemma:what-r-PV-multi-sample}.
\end{lemma}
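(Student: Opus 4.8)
The plan is to mirror the proof of \Cref{lemma:error-bound-on-l} almost verbatim, the only new ingredient being the maximum over the $M$ parameter samples that enters $Q_h^k$ through \eqref{Eq:Q-function-multi-sample}. First I would expand, using that the truncation $\min\{\cdot,H-h+1\}$ can only decrease $Q_h^k$,
\begin{align*}
    -l_h^k(x,a) &= Q_h^k(x,a) - r_h(x,a) - \mathbb{P}_h V_{h+1}^k(x,a) \\
    &\leq \max_{m\in[M]}\phi(x,a)^\top w_h^{k,J_k,m} - r_h(x,a) - \mathbb{P}_h V_{h+1}^k(x,a).
\end{align*}
Letting $m^\star$ denote the maximizing index, I would then insert $\pm\,\phi(x,a)^\top \hat w_h^k$ and apply the triangle inequality to obtain the two familiar terms: a sampling-deviation term $\big|\phi(x,a)^\top w_h^{k,J_k,m^\star} - \phi(x,a)^\top \hat w_h^k\big|$ and a bias term $\big|\phi(x,a)^\top \hat w_h^k - r_h(x,a) - \mathbb{P}_h V_{h+1}^k(x,a)\big|$.

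For the bias term I would invoke the event $\mathcal{E}(K,H,M,\delta)$ of \Cref{Lemma:what-r-PV-multi-sample}, which bounds it by $5H\sqrt{d}\,C_{\delta,M}\|\phi(x,a)\|_{(\Lambda_h^k)^{-1}}$ and, crucially, holds uniformly over all $(h,k)$ and $(x,a)$, so it introduces no additional randomness here. For the sampling-deviation term, since each $w_h^{k,J_k,m}$ is marginally $\mathcal{N}(\mu_h^{k,J_k},\Sigma_h^{k,J_k})$ by \Cref{Prop:w_gaussian}, the estimate of \Cref{Lemma:w-hatw} applies to each individual index $m$ and bounds $\big|\phi(x,a)^\top w_h^{k,J_k,m} - \phi(x,a)^\top \hat w_h^k\big|$ by $\big(5\sqrt{2d\log(1/\delta)/(3\beta_K)} + 4/3\big)\|\phi(x,a)\|_{(\Lambda_h^k)^{-1}}$ with probability at least $1-\delta^2$. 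Adding the two contributions reproduces exactly the claimed right-hand side, with the only change from the single-sample lemma being that the constant $C_\delta$ is replaced by $C_{\delta,M}$ coming from the multi-sample event.

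The one genuinely new point, and the main obstacle, is that the maximizing index $m^\star$ is random and data-dependent, so I cannot simply apply \Cref{Lemma:w-hatw} at a fixed $m$. To control $\max_{m\in[M]}\phi(x,a)^\top w_h^{k,J_k,m}$ I need the per-sample deviation bound to hold \emph{simultaneously} for all $m\in[M]$, which forces a union bound over the $M$ samples and a priori degrades the failure probability from $\delta^2$ to $M\delta^2$. To recover the stated $1-\delta^2$ guarantee one applies \Cref{Lemma:w-hatw} at confidence level $\delta/\sqrt{M}$ before taking the union bound; this only inflates the $\log(1/\delta)$ factor to $\log(\sqrt{M}/\delta)$, which is absorbed into the constants and the $\tilde O$ notation since $M$ is a fixed poly-logarithmic quantity. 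Everything else is a line-by-line repetition of the single-sample argument.
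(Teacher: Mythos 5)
Your proposal is correct and follows the same decomposition as the paper: drop the truncation, insert $\pm\,\phi(x,a)^\top\hat w_h^k$, bound the bias term via the event $\mathcal{E}(K,H,M,\delta)$ from \Cref{Lemma:what-r-PV-multi-sample} and the sampling-deviation term via \Cref{Lemma:w-hatw}. The paper's own proof is a one-line deferral ("exactly the same as \Cref{lemma:error-bound-on-l}, replacing \Cref{Lemma:what-r-PV} with \Cref{Lemma:what-r-PV-multi-sample}"), and you have correctly flagged the one point this glosses over: in the multi-sample algorithm $Q_h^k$ involves $\max_{m\in[M]}\phi(x,a)^\top w_h^{k,J_k,m}$, so the maximizing index is data-dependent and \Cref{Lemma:w-hatw} must hold simultaneously for all $m$, forcing a union bound over the $M$ chains. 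Your repair --- invoking \Cref{Lemma:w-hatw} at level $\delta/\sqrt{M}$ per sample so the union bound lands back at total failure probability $\delta^2$ --- is the right one, and your observation that each $w_h^{k,J_k,m}$ is marginally $\mathcal{N}(\mu_h^{k,J_k},\Sigma_h^{k,J_k})$ is what makes the per-index application legitimate. The only residual discrepancy is cosmetic but worth stating honestly: after your adjustment the deviation term carries $\log(\sqrt{M}/\delta)$ rather than the $\log(1/\delta)$ written in \eqref{Eq:minus-l-upperbound-}, so the lemma's displayed constant is not literally attained; since $M=O(\log(HK/\delta))$ this is absorbed in the $\tilde O$ of the final regret bound, but the lemma statement (and the paper's proof) should really reflect it. In short, your proof is not merely equivalent to the paper's --- it is more careful, and it patches a genuine (if minor) omission.
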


\begin{proof}[Proof of \Cref{lemma:error-bound-on-l-multi-sample}]
    The proof is exactly the same as that of \Cref{lemma:error-bound-on-l}. We just need to replace \Cref{Lemma:what-r-PV} with \Cref{Lemma:what-r-PV-multi-sample} when it is used.
\end{proof}

\begin{lemma}[Optimism]\label{lemma:bound-on-l-multi-sample}
 Let $\lambda =1$ in \eqref{Eq:loss_function_definition} and $M = \log(HK/\delta)/\log(1/(1-c))$ where $c = \frac{1}{2\sqrt{2e\pi}}$ and $\delta \in (0,1)$. Conditioned on the event  $\mathcal{E}(K,H, M,\delta)$, for all $(h,k)\in [H]\times[K]$ and $(x,a)\in \mathcal{S}\times\mathcal{A}$, with probability at least $1 - \delta$, we have
    \begin{equation}\label{Eq:l-less-than-zero-multi-sample}
        l_h^k(x,a) \leq 0.
    \end{equation}
\end{lemma}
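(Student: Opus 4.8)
The plan is to boost the constant-probability optimism of the single-sample lemma (\Cref{lemma:bound-on-l}) into a high-probability guarantee by exploiting the mutual independence of the $M$ perturbed parameters together with the $\max$ in \eqref{Eq:Q-function-multi-sample}. First I would reduce optimism to a one-dimensional event. Since $r_h(x,a)+\mathbb{P}_h V_{h+1}^k(x,a)\in[0,H-h+1]$, the truncation in \eqref{Eq:Q-function-multi-sample} is harmless, so $l_h^k(x,a)\le 0$ is implied whenever
\[
\max_{m\in[M]} \phi(x,a)^\top w_h^{k,J_k,m} \;\ge\; r_h(x,a) + \mathbb{P}_h V_{h+1}^k(x,a).
\]
Hence it suffices to lower bound the probability of this event for each fixed $(h,k,x,a)$.

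Next I would reuse the single-sample computation verbatim for each index $m$. By \Cref{Prop:w_gaussian} all $M$ replicas share the same law $\phi(x,a)^\top w_h^{k,J_k,m}\sim\mathcal{N}(\phi(x,a)^\top\mu_h^{k,J_k},\,\phi(x,a)^\top\Sigma_h^{k,J_k}\phi(x,a))$, since they perform noisy gradient descent on the same loss with the same $\hat w_h^i$ and differ only through the injected noise. The Gaussian anti-concentration bound (\Cref{Lemma:Gaussian-abramowitz}) used in \Cref{lemma:bound-on-l} then gives, conditioned on $\mathcal{E}(K,H,M,\delta)$,
\[
\Pr\Big[\phi(x,a)^\top w_h^{k,J_k,m} \ge r_h(x,a) + \mathbb{P}_h V_{h+1}^k(x,a)\Big] \ge \tfrac{1}{2\sqrt{2e\pi}} =: c,
\]
provided $|Z_k|<1$. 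The only change from \Cref{lemma:bound-on-l} is that the numerator bound \eqref{Eq:numerator} now carries $C_{\delta,M}$ in place of $C_\delta$; I would re-derive $|Z_k|<1$ by the identical argument, which forces the inverse-temperature choice $1/\sqrt{\beta_K}=10H\sqrt{d}\,C_{\delta,M}+\tfrac{8}{3}$ (the denominator bound \eqref{Eq:denominator} is unchanged).

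The genuinely new step is the independence argument. Because value iteration runs backward over $h$, when the step-$h$ samples are drawn the value $V_{h+1}^k$ — and therefore the target $r_h(x,a)+\mathbb{P}_h V_{h+1}^k(x,a)$ and the matrices $\Lambda_h^k,\hat w_h^k$ defining $\mathcal{E}$ — are already fixed. The $M$ samples $\{w_h^{k,J_k,m}\}_{m\in[M]}$ then differ only through the noise vectors $\{\epsilon_h^{k,j,m}\}$, which are drawn independently across $m$. Thus, conditioned on $\mathcal{E}(K,H,M,\delta)$ and on $V_{h+1}^k$, the $M$ events $\{\phi(x,a)^\top w_h^{k,J_k,m}\ge r_h(x,a)+\mathbb{P}_h V_{h+1}^k(x,a)\}_{m\in[M]}$ are mutually independent, whence
\[
\Pr\Big[\max_{m\in[M]} \phi(x,a)^\top w_h^{k,J_k,m} < r_h(x,a)+\mathbb{P}_h V_{h+1}^k(x,a)\Big] \le (1-c)^M.
\]
With $M=\log(HK/\delta)/\log(1/(1-c))$ the right-hand side equals $\delta/(HK)$, so optimism fails at a fixed $(h,k,x,a)$ with probability at most $\delta/(HK)$; a union bound over the $HK$ pairs $(h,k)$ yields $l_h^k(x,a)\le 0$ with probability at least $1-\delta$.

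I expect the main obstacle to be making the independence argument fully rigorous: one must verify that conditioning on $\mathcal{E}(K,H,M,\delta)$ — an event built from the data-dependent quantities $\hat w_h^k$ and $V_{h+1}^k$ — does not couple the step-$h$ noise across the $M$ replicas, and that the per-sample lower bound $c$ survives this conditioning. The quantification over the continuum of $(x,a)$ in the downstream regret analysis, as opposed to a single fixed pair, is the same tacit issue already present in \Cref{lemma:bound-on-l}, and I would address it with the covering-number machinery underlying \Cref{Lemma:self-normalized-mdp-multi-sample}.
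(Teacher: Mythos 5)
Your proposal is correct and follows essentially the same route as the paper: boost the per-sample constant-probability optimism of \Cref{lemma:bound-on-l} to $1-\delta/(HK)$ via independence of the $M$ noise sequences and the choice $M=\log(HK/\delta)/\log(1/(1-c))$, then union bound over $(h,k)$. Your explicit recalibration of $\beta_K$ to $10H\sqrt{d}\,C_{\delta,M}+\tfrac{8}{3}$ and your flagging of the uniformity-over-$(x,a)$ issue are details the paper's proof glosses over (it handles the latter by an $\exists$/$\forall$ swap that leans on the single-sample lemma holding uniformly), but the substance is the same.
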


\begin{proof}[Proof of \Cref{lemma:bound-on-l-multi-sample}]
    We have $Q_h^k(x,a) = H - h + 1$, when $\max_{m \in [M]} Q_h^{k,m}(x,a) \geq H - h +1$, and we trivially have $l_h^k(x,a) \leq 0$.
    
    Now, we consider the case, when $Q_h^k(x,a) = \max_{m \in [M]} Q_h^{k,m}$. Using the exact same proof steps as done in \Cref{lemma:bound-on-l}, we can first show that for any $(k, h) \in [K] \times [H]$ and any $m \in [M]$
    \begin{equation}\label{Eq:const-prob-multi-sample}
        \mathbb{P}\left(Q_h^{k, m}(x,a)  \geq r_h(x,a) + \mathbb{P}_h V_{h+1}^k(x,a)\right) \geq \frac{1}{2\sqrt{2e\pi}},
    \end{equation}
    where $Q_h^{k, m} = \phi(x,a)^\top w_h^{k,J_k,m}$.

    Note that
    \begin{align}\label{Eq:l-leq-zero-multi-sample}
        \begin{split}
            &\mathbb{P} \left( l_h^k(x,a) \leq 0, \,\ \forall (x,a) \in \mathcal{S}\times\mathcal{A} \right)\\
            &= \mathbb{P} \left( Q_h^k(x,a) \geq r_h(x,a) + \mathbb{P}_h V_{h+1}^k(x,a), \,\ \forall (x,a) \in \mathcal{S}\times\mathcal{A} \right)\\
            &= 1 - \mathbb{P} \left(\exists (x,a) \in \mathcal{S}\times\mathcal{A} :  Q_h^k(x,a) \leq r_h(x,a) + \mathbb{P}_h V_{h+1}^k(x,a)  \right)\\
        \end{split}
    \end{align}

    Now,
    \begin{align*}
        &\mathbb{P} \left(\exists (x,a) \in \mathcal{S}\times\mathcal{A} :  Q_h^k(x,a) \leq r_h(x,a) + \mathbb{P}_h V_{h+1}^k(x,a)  \right)\\
        &=\mathbb{P} \left(\exists (x,a) \in \mathcal{S}\times\mathcal{A} :  \max_{m\in [M]} Q_h^{k,m}(x,a) \leq r_h(x,a) + \mathbb{P}_h V_{h+1}^k(x,a)  \right)\\
        &=\mathbb{P} \left(\exists (x,a) \in \mathcal{S}\times\mathcal{A} :  \forall m \in [M], \,\ Q_h^{k,m}(x,a) \leq r_h(x,a) + \mathbb{P}_h V_{h+1}^k(x,a)  \right)\\
        &\leq \mathbb{P} \left(\forall m \in [M], \,\ \exists (x_m,a_m) \in \mathcal{S}\times\mathcal{A} :   Q_h^{k,m}(x_m,a_m) \leq r_h(x_m,a_m) + \mathbb{P}_h V_{h+1}^k(x_m,a_m)  \right)\\
        &= \prod_{m=1}^M \mathbb{P} \left(\exists (x,a) \in \mathcal{S}\times\mathcal{A} :   Q_h^{k,m}(x,a) \leq r_h(x,a) + \mathbb{P}_h V_{h+1}^k(x,a)  \right)\\
        &= \prod_{m=1}^M \left(1 - \mathbb{P} \left(Q_h^{k,m}(x,a) \geq r_h(x,a) + \mathbb{P}_h V_{h+1}^k(x,a), \,\ \forall (x,a) \in \mathcal{S}\times\mathcal{A}  \right)\right)\\
        &\leq \left(1 - \frac{1}{2\sqrt{2e\pi}}\right)^M \\
        & := \frac{\delta}{HK},
    \end{align*}
    where the last inequality follows from \eqref{Eq:const-prob-multi-sample}.

    Thus, from \eqref{Eq:l-leq-zero-multi-sample}, we have 
    \begin{equation*}
        \mathbb{P} \left( l_h^k(x,a) \leq 0, \,\ \forall (x,a) \in \mathcal{S}\times\mathcal{A} \right) \geq 1 - \frac{\delta}{HK}.
    \end{equation*}
    
    Denoting $c = \frac{1}{2\sqrt{2e\pi}}$ and solving for $M$, we have $M = \log(HK/\delta)/\log(1/(1-c))$.

    Applying union bound over $h$ and $k$, $\forall (x,a) \in \mathcal{S}\times\mathcal{A}$ and $\forall (h,k) \in [H] \times [K]$,with probability $1- \delta$, we have, $l_h^k(x,a) \leq 0$. This completes the proof.
\end{proof}

\subsection{Regret Analysis of MS-LMC-LSVI}

\begin{theorem}\label{theorem:main-theorem-MS-LMC-LSVI}
    Let $\lambda = 1$ in \eqref{Eq:loss_function_definition}, $\frac{1}{\beta_k} = \Tilde{O}(H\sqrt{d})$. For any $k \in [K]$, let the learning rate $\eta_k = 1/(4\lambda_{\max}(\Lambda_h^k))$, the  update number $J_k = 2\kappa_k \log(4HKd)$ where $\kappa_k = \lambda_{\max}(\Lambda_h^k)/\lambda_{\min}(\Lambda_h^k)$ is the condition number of $\Lambda_h^k$. For any $\delta \in (0,1)$, let $M = \log(HK/\delta)/\log(1/(1-c))$ where $c = \frac{1}{2\sqrt{2e\pi}}$. Under Definition \ref{definition-linear-MDP}, the regret of MS-LMC-LSVI satisfies
    \begin{equation*}
        \text{Regret}(K) = \Tilde{O}(d^{3/2}H^{3/2}\sqrt{T}),
    \end{equation*}
    with probability at least $1-\delta$.
\end{theorem}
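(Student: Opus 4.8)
The plan is to mirror the proof of \Cref{theorem:main-theorem} almost verbatim, substituting the multi-sample supporting lemmas for their single-sample counterparts. The only conceptual change — and precisely the reason the interval restriction on $\delta$ can be dropped — is that optimism now holds with probability $1-\delta$ via \Cref{lemma:bound-on-l-multi-sample}, instead of with the fixed probability $\frac{1}{2\sqrt{2e\pi}}$ supplied by \Cref{lemma:bound-on-l}. Everything else in the regret decomposition is insensitive to how $Q_h^k$ is constructed, so the machinery transfers directly to the estimate \eqref{Eq:Q-function-multi-sample}.

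Concretely, I would first invoke the decomposition of \citet{cai2020provably} to write $\mathrm{Regret}(T)$ as the sum of terms (i)--(iv) exactly as in \eqref{Eq:regret_decomp}, with $\mathcal{D}_h^k$ and $\mathcal{M}_h^k$ unchanged; this step is purely algebraic and applies to any choice of $Q_h^k$. Term (i) is then nonpositive because $\pi_h^k$ is greedy with respect to $Q_h^k$, an argument that never refers to how $Q_h^k$ is produced. Terms (ii) and (iii) remain martingale difference sequences bounded in $[-2H,2H]$ — the truncation to $[0,H-h+1]$ in \eqref{Eq:Q-function-multi-sample} guarantees the range — so Azuma--Hoeffding together with a union bound gives $\sum_{k,h}\mathcal{D}_h^k + \sum_{k,h}\mathcal{M}_h^k \leq 2\sqrt{2H^2T\log(3/\delta)}$ with probability $1-2\delta/3$, identically to the single-sample case.

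The substantive step is Term (iv). Conditioned on the event $\mathcal{E}(K,H,M,\delta')$ of \Cref{Lemma:what-r-PV-multi-sample}, I would bound $\mathbb{E}_{\pi^*}[l_h^k]-l_h^k \leq -l_h^k$ using the high-probability optimism of \Cref{lemma:bound-on-l-multi-sample}, then apply the error bound \Cref{lemma:error-bound-on-l-multi-sample}, Cauchy--Schwarz, and the elliptical-potential estimate \Cref{Lemma:sum_of_feature_yasin}, following the same chain of inequalities as in \Cref{theorem:main-theorem}. The sole quantitative difference is that $C_\delta$ is replaced by $C_{\delta,M}$, whose log term now carries the multiplicity factor $M$. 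The crucial observation is that $M = \log(HK/\delta)/\log(1/(1-c))$ with $c=\frac{1}{2\sqrt{2e\pi}}$ is only polylogarithmic in $HK/\delta$, so $C_{\delta,M}=\tilde O(1)$ up to polylog factors and the bound on Term (iv) is still $\tilde O(d^{3/2}H^{3/2}\sqrt{T})$.

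Finally I would combine probabilities: the event $\mathcal{E}(K,H,M,\delta')$ (probability $1-\delta'$), the optimism event from \Cref{lemma:bound-on-l-multi-sample} with $M$ chosen as above (which already unions over all $(h,k)$ and contributes a failure probability that we set to a fraction of $\delta$), the error-bound event, and the martingale event, so that a single union bound over suitably chosen sub-failure-probabilities yields the regret bound with probability $1-\delta$ for \emph{every} $\delta\in(0,1)$. The main obstacle is bookkeeping rather than conceptual: I must confirm that the polylogarithmic growth of $M$ — which propagates into $C_{\delta,M}$ and into $B_{\delta/(2M)}$, and which enters \Cref{Lemma:self-normalized-mdp-multi-sample} through the covering number of the now $M$-fold function class — is genuinely absorbed by $\tilde O(\cdot)$ and introduces no hidden polynomial dependence on $d$, $H$, or $T$. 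Verifying the covering-number step, where the $M$-fold maximum inflates the log-covering by a factor of $M$, is the place that warrants the most care.
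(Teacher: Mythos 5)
Your proposal is correct and follows essentially the same route as the paper's proof: reuse the regret decomposition and the bounds on terms (i)--(iii) verbatim, swap in the multi-sample lemmas (\Cref{Lemma:what-r-PV-multi-sample}, \Cref{lemma:error-bound-on-l-multi-sample}, \Cref{lemma:bound-on-l-multi-sample}) for term (iv), and redo the union-bound bookkeeping so that all failure probabilities sum to $\delta$. Your added observation that $M$ is only polylogarithmic, so the factor-$M$ inflation of the covering number and of $C_{\delta,M}$ is absorbed by $\tilde{O}(\cdot)$, is exactly the point the paper leaves implicit.
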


\begin{proof}[Proof of \Cref{theorem:main-theorem-MS-LMC-LSVI}]
The proof is essentially same as that of \Cref{theorem:main-theorem}. The only difference is in computing the probabilites while applying union bound. Nevertheless, we highlight the key steps.

We use the same regret decomposition from \eqref{Eq:regret_decomp}. We bound the term (i), (ii) and (iii) in \eqref{Eq:regret_decomp} exactly in the same way as in the proof of \Cref{theorem:main-theorem} and thus with probability $1-2\delta/3$, we have
\begin{equation}\label{Eq:martingale-inequalities-multi-sample}
        \sum_{k=1}^K \sum_{h=1}^H \mathcal{D}_h^k + \sum_{k=1}^K \sum_{h=1}^H \mathcal{M}_h^k \leq 2\sqrt{2H^2T\log(3/\delta)},
    \end{equation}

Now we bound term (iv) in \eqref{Eq:regret_decomp}. Suppose the event $\mathcal{E}(K,H,M,\delta')$ defined in \Cref{Lemma:what-r-PV-multi-sample} holds. Using \Cref{lemma:error-bound-on-l-multi-sample}  and \Cref{lemma:bound-on-l-multi-sample} with union bound, similar to the proof of \Cref{theorem:main-theorem}, we can show that with probability $1 - (d'^2 + d')$, we have
\begin{equation}\label{Eq:bound-on-term-iv-multi-sample}
    \sum_{k=1}^K \sum_{h=1}^H \left(\mathbb{E}_{\pi^*}[l_h^k(x_h,a_h)\mid x_1 = x_1^k] - l_h^k(x_h^k,a_h^k)\right) \leq \tilde{O}(d^{3/2}H^{3/2}\sqrt{T}).
\end{equation}

By \Cref{Lemma:what-r-PV-multi-sample}, the event $\mathcal{E}(K,H,M,\delta')$ occurs with probability $1-\delta'$.  By union bound the event $\mathcal{E}(K,H,M,\delta')$ occurs and \eqref{Eq:bound-on-term-iv-multi-sample} holds with probability at least $1 - (\delta'^2 + \delta' + \delta')$. As $\delta \in (0,1)$, setting $\delta' = \delta/9$, we have $1 - (\delta'^2 + \delta' + \delta') \geq 1 - 3\delta' = 1 - \delta/3$.

Applying another union bound over the last inequality and the martingale inequalities from \eqref{Eq:martingale-inequalities-multi-sample}, we have, the regret is bounded by $\tilde{O}(d^{3/2}H^{3/2}\sqrt{T})$ with probability at least $1-\delta$ and this completes the proof.
\end{proof}

\section{Auxiliary  Lemmas}
\subsection{Gaussian Concentration}
In this section, we present some auxiliary technical lemmas that are of general interest instead of closely related to our problem setting.

\begin{lemma}\label{Lemma:Gaussian-two-norm-bound}
    Given a multivariate normal distribution $X \sim \mathcal{N}(0, \Sigma_{d\times d})$, we have,
    \begin{equation*}
        \mathbb{P}\left(\|X\|_2 \leq \sqrt{\frac{1}{\delta} \Tr(\Sigma)}\right) \geq 1 - \delta.
    \end{equation*}
\end{lemma}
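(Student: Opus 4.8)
The plan is to use a first-moment (Markov) argument on the squared norm $\|X\|_2^2$, since $\|X\|_2^2$ is a nonnegative random variable whose mean is easy to compute exactly. First I would compute $\EE[\|X\|_2^2]$. Writing $\|X\|_2^2 = X^\top X = \Tr(X X^\top)$ and using linearity of expectation together with the linearity of the trace, I get
\begin{equation*}
    \EE\big[\|X\|_2^2\big] = \EE\big[\Tr(X X^\top)\big] = \Tr\big(\EE[X X^\top]\big) = \Tr(\Sigma),
\end{equation*}
where the last equality uses that $X \sim \mathcal{N}(0,\Sigma)$ has covariance $\EE[XX^\top] = \Sigma$ (the mean being zero, so the covariance equals the second moment matrix).

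Next I would apply Markov's inequality to the nonnegative random variable $\|X\|_2^2$. For any threshold $t > 0$,
\begin{equation*}
    \mathbb{P}\big(\|X\|_2^2 \geq t\big) \leq \frac{\EE[\|X\|_2^2]}{t} = \frac{\Tr(\Sigma)}{t}.
\end{equation*}
Choosing $t = \Tr(\Sigma)/\delta$ makes the right-hand side equal to $\delta$, yielding $\mathbb{P}\big(\|X\|_2^2 \geq \Tr(\Sigma)/\delta\big) \leq \delta$. Since both $\|X\|_2$ and the threshold $\sqrt{\Tr(\Sigma)/\delta}$ are nonnegative, the event $\{\|X\|_2^2 \geq \Tr(\Sigma)/\delta\}$ coincides with $\{\|X\|_2 \geq \sqrt{\Tr(\Sigma)/\delta}\}$, so taking complements gives
\begin{equation*}
    \mathbb{P}\left(\|X\|_2 \leq \sqrt{\tfrac{1}{\delta}\Tr(\Sigma)}\right) \geq 1 - \delta,
\end{equation*}
which is exactly the claim.

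I do not anticipate a genuine obstacle here, as the argument is elementary; the only point requiring a small amount of care is the exact evaluation of the expectation $\EE[\|X\|_2^2] = \Tr(\Sigma)$, which relies on the zero-mean assumption so that the second-moment matrix equals the covariance $\Sigma$. An alternative route would be to diagonalize $\Sigma = U \Lambda U^\top$ and write $\|X\|_2^2 = \sum_{i=1}^d \lambda_i Z_i^2$ for independent standard Gaussians $Z_i$, so that $\EE[\|X\|_2^2] = \sum_i \lambda_i = \Tr(\Sigma)$; I would only fall back on this spectral decomposition if the trace computation needed more justification, but the direct $\Tr(\EE[XX^\top])$ calculation suffices. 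Notably, this bound is deliberately crude (polynomial in $1/\delta$ rather than logarithmic), which is acceptable because it is applied in \Cref{Lemma:bound-2norm-wkjk} only to control the Gaussian fluctuation term $\xi_h^{k,J_k}$, where a loose polynomial dependence on $1/\delta$ is tolerable.
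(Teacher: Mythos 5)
Your proof is correct and follows essentially the same approach as the paper: both apply Markov's inequality to the nonnegative variable $\|X\|_2^2$, whose mean is $\Tr(\Sigma)$. The only difference is cosmetic — the paper computes this mean by first diagonalizing $\Sigma^{1/2}$ and writing $\|X\|_2^2$ as a sum of independent scaled $\chi_1^2$ variables, whereas you obtain it more directly via $\EE[\Tr(XX^\top)] = \Tr(\Sigma)$, which is if anything a cleaner route to the same bound.
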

\begin{proof}[Proof of \Cref{Lemma:Gaussian-two-norm-bound}]
    From the properties of multivariate Gaussian distribution, $X = \Sigma^{1/2}\xi$ for $\xi \sim \mathcal{N}(0, I_{d\times d})$. As $\Sigma^{1/2}$ is symmetric, it can be decomposed as $\Sigma^{1/2} = Q\Lambda Q^\top$, where $Q$ is orthogonal and $\Lambda$ is diagonal. Hence,
    \begin{equation*}
        \mathbb{P}\left(\left\|X\right\|_2 \leq C^2\right) = \mathbb{P}\left(\left\|X\right\|_2^2 \leq C^2\right) = 
\mathbb{P}\left(\left\|Q \Lambda Q^T \xi\right\|_2^2 \leq C^2\right) =
\mathbb{P}\left(\left\|\Lambda Q^T \xi\right\|_2^2 \leq C^2\right),
    \end{equation*}
since orthogonal transformation preserves the norm. Another property of standard Gaussian distribution is that it is spherically symmetric. That is, $Q \xi \overset{d}{=}\xi$ for any orthogonal matrix $Q$. So,
\begin{equation*}
    \mathbb{P}\left(\left\|\Lambda Q^T \xi\right\|_2^2 \leq C^2\right) = \mathbb{P}\left(\left\|\Lambda\xi\right\|_2^2 \leq C^2\right),
\end{equation*}
as $Q^\top$ is also orthogonal. Observe that $\left\|\Lambda\xi\right\|_2^2 = \sum_{i=1}^d \lambda_i^2 \xi_i^2$ is the sum of the independent $\chi^2_1$-distributed variables with $\mathrm{E}\left(\left\|\Lambda\xi\right\|_2^2\right) = \sum_{i=1}^d \lambda_i^2 = \Tr(\Lambda^2) = \sum_{i=1}^d \mathrm{Var}(X_i)$. From Markov's inequality,
\begin{equation*}
    \mathbb{P}\left(\left\|\Lambda\xi\right\|_2^2 \leq C^2\right) \geq 1 - \frac{1}{C^2} \cdot \mathrm{E}\left(\left\|\Lambda\xi\right\|_2^2\right).
\end{equation*}
So,
\begin{equation*}
    \delta = \frac{1}{C^2} \cdot \mathrm{E}\left(\left\|\Lambda\xi\right\|_2^2\right)
\Leftrightarrow C = \sqrt{\frac{1}{\delta} \sum_{i=1}^d \mathrm{Var}(X_i)} = \sqrt{\frac{1}{\delta} \Tr(\Sigma)},
\end{equation*}
which completes the proof.
\end{proof}

\begin{lemma}[\cite{abramowitz1964handbook}]\label{Lemma:Gaussian-abramowitz}
Suppose $Z$ is a Gaussian random variable $Z\sim \mathcal{N}(\mu,\sigma^2)$, where $\sigma > 0$. For $0\leq z \leq 1$, we have
\begin{equation*}
    \mathbb{P}(Z > \mu + z\sigma) \geq \frac{1}{\sqrt{8\pi}}e^{\frac{-z^2}{2}}, \qquad \mathbb{P}(Z < \mu - z\sigma) \geq \frac{1}{\sqrt{8\pi}}e^{\frac{-z^2}{2}}.
\end{equation*}
And for $z \geq 1$, we have
\begin{equation*}
    \frac{e^{-z^2/2}}{2z\sqrt{\pi}} \leq \mathbb{P}(|Z-\mu| > z\sigma) \leq \frac{e^{-z^2/2}}{z\sqrt{\pi}}. 
\end{equation*}
\end{lemma}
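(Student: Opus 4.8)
The plan is to reduce the three inequalities to a single classical estimate for the standard normal tail and then invoke Mills-ratio bounds. Writing $W = (Z-\mu)/\sigma \sim \mathcal{N}(0,1)$ and $\bar\Phi(z) = \int_z^\infty \varphi(t)\,dt$ with $\varphi(t) = \frac{1}{\sqrt{2\pi}}e^{-t^2/2}$, location-scale invariance gives $\mathbb{P}(Z > \mu + z\sigma) = \bar\Phi(z)$, symmetry of $\varphi$ gives $\mathbb{P}(Z < \mu - z\sigma) = \bar\Phi(z)$, and $\mathbb{P}(|Z-\mu| > z\sigma) = 2\bar\Phi(z)$. Since $\frac{1}{\sqrt{8\pi}}e^{-z^2/2} = \tfrac12\varphi(z)$, everything follows once I establish (a) $\bar\Phi(z) \ge \tfrac12\varphi(z)$ on $0\le z\le 1$ and (b) the two-sided Mills sandwich $\frac{z}{1+z^2}\varphi(z) \le \bar\Phi(z) \le \frac1z\varphi(z)$ for $z\ge 1$, after translating $\varphi(z) = \frac{1}{\sqrt{2\pi}}e^{-z^2/2}$ into the $\frac{1}{\sqrt\pi}$-constants that appear in the statement.

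For the upper half of (b) I would use the elementary overestimate $1 \le t/z$ valid for $t \ge z > 0$, so that $\int_z^\infty e^{-t^2/2}\,dt \le \frac1z \int_z^\infty t\,e^{-t^2/2}\,dt = \frac1z e^{-z^2/2}$, i.e. $\bar\Phi(z) \le \frac1z\varphi(z)$. For the lower half I would prove the companion inequality $(1+z^2)\bar\Phi(z) \ge z\varphi(z)$ by setting $h(z) = (1+z^2)\bar\Phi(z) - z\varphi(z)$, computing (using $\varphi'(t) = -t\varphi(t)$ and $\bar\Phi'(z)=-\varphi(z)$) that $h'(z) = 2\bigl(z\bar\Phi(z) - \varphi(z)\bigr)$, which is $\le 0$ by the upper bound just shown, and noting $h(z) \to 0$ as $z\to\infty$; hence $h \ge 0$ and $\bar\Phi(z) \ge \frac{z}{1+z^2}\varphi(z)$. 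Because $\frac{z}{1+z^2}\ge \frac1{2z}$ for $z\ge 1$, this yields $2\bar\Phi(z) \ge \frac1z\varphi(z) = \frac{1}{\sqrt{2}\,z\sqrt\pi}e^{-z^2/2}$, which dominates the stated lower bound $\frac{1}{2z\sqrt\pi}e^{-z^2/2}$.

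For (a) I would argue via monotonicity of the Mills ratio $m(z) = \bar\Phi(z)/\varphi(z)$. Differentiating gives $m'(z) = z\,m(z) - 1$, which is nonpositive precisely when $z\bar\Phi(z)\le\varphi(z)$ --- again the upper bound from the previous step --- so $m$ is nonincreasing on $[0,\infty)$. Consequently, for $0 \le z \le 1$ we have $m(z) \ge m(1) = \bar\Phi(1)/\varphi(1) \approx 0.656 > \tfrac12$, whence $\bar\Phi(z) \ge \tfrac12\varphi(z) = \frac{1}{\sqrt{8\pi}}e^{-z^2/2}$, exactly the first pair of inequalities.

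The main obstacle is bookkeeping the constants rather than the analysis itself. The tail trick $t/z \ge 1$ degenerates as $z \to 0$, so the small-$z$ lower bound cannot come from it and genuinely needs the Mills-ratio monotonicity together with the endpoint evaluation at $z=1$. In addition, one must track the factor $\sqrt 2$ separating $\frac{1}{\sqrt{2\pi}}$ from $\frac{1}{\sqrt\pi}$ and the factor $2$ between the one-sided tail $\bar\Phi(z)$ and the two-sided quantity $\mathbb{P}(|Z-\mu|>z\sigma)=2\bar\Phi(z)$; the clean integral argument only yields $\bar\Phi(z)\le \frac{1}{z\sqrt{2\pi}}e^{-z^2/2}$, so reproducing the exact handbook constants in the $z\ge 1$ regime is a matter of carefully matching these normalizations, for which I would defer to the tighter estimates in \citep{abramowitz1964handbook}.
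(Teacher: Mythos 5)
Your reduction to the standard normal tail $\bar\Phi$, the Mills-ratio sandwich, and the monotonicity argument for the Mills ratio $m(z)=\bar\Phi(z)/\varphi(z)$ are all sound. They correctly deliver the $0\le z\le 1$ anticoncentration bound $\bar\Phi(z)\ge\tfrac12\varphi(z)=\frac{1}{\sqrt{8\pi}}e^{-z^2/2}$ (which, for the record, is the only part of \Cref{Lemma:Gaussian-abramowitz} the paper actually invokes, in the proof of \Cref{lemma:bound-on-l}), as well as the lower bound in the $z\ge1$ regime, since $\frac{z}{1+z^2}\ge\frac{1}{2z}$ for $z\ge1$ gives $2\bar\Phi(z)\ge\frac{1}{z\sqrt{2\pi}}e^{-z^2/2}\ge\frac{1}{2z\sqrt\pi}e^{-z^2/2}$. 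One small polish: rather than numerically evaluating $m(1)\approx 0.656$, your own lower Mills bound evaluated at $z=1$ gives $m(1)\ge\frac{1}{1+1}=\frac12$ directly, which together with the monotonicity of $m$ closes the $0\le z\le1$ case with no table lookup. Note also that the paper offers no proof of this lemma --- it is quoted with a citation to the handbook --- so there is no in-paper argument to compare against.

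The genuine gap is exactly the one you flag in your last paragraph, and it cannot be closed: the upper bound for $z\ge1$ is false as stated. Your Mills estimate $\bar\Phi(z)\le\varphi(z)/z$ yields $\mathbb{P}(|Z-\mu|>z\sigma)=2\bar\Phi(z)\le\frac{2}{z\sqrt{2\pi}}e^{-z^2/2}=\frac{\sqrt2}{z\sqrt\pi}e^{-z^2/2}$, a factor $\sqrt2$ above the claimed $\frac{1}{z\sqrt\pi}e^{-z^2/2}$, and this factor is not an artifact of a loose argument: since $\bar\Phi(z)\sim\varphi(z)/z$ as $z\to\infty$, we have $2\bar\Phi(z)\sim\frac{\sqrt2}{z\sqrt\pi}e^{-z^2/2}$, so the claimed upper bound is violated for all sufficiently large $z$ (numerically already at $z=1.5$, where $2\bar\Phi(1.5)\approx 0.1336$ exceeds $\frac{e^{-1.125}}{1.5\sqrt\pi}\approx 0.1221$). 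Deferring to ``tighter estimates'' in the handbook therefore cannot succeed; the correct form of the inequality carries the constant $\sqrt2/(z\sqrt\pi)$ (equivalently $2\varphi(z)/z$), which is precisely what your argument proves. Since only the $0\le z\le 1$ part of the lemma is used downstream, this does not affect the paper's regret bound, but the $z\ge1$ upper bound should be corrected rather than cited as is.
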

\subsection{Inequalities for summations}
\begin{lemma}[Lemma D.1 in  \cite{jin2019provably}] \label{Lemma:D.1-chijin}

Let $\Lambda_h = \lambda I + \sum_{i=1}^t \phi_i\phi_i^\top$, where $\phi_i \in \mathbb{R}^d$ and $\lambda > 0$. Then it holds that
\begin{equation*}
    \sum_{i=1}^t \phi_i^\top(\Lambda_h)^{-1}\phi_i \leq d.
\end{equation*}
\end{lemma}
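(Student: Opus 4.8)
The plan is to convert the sum of scalar quadratic forms into a single trace and then exploit the definition of $\Lambda_h$ directly. First I would rewrite each summand using the cyclic property of the trace together with the fact that a scalar equals its own trace: $\phi_i^\top \Lambda_h^{-1}\phi_i = \Tr(\Lambda_h^{-1}\phi_i\phi_i^\top)$. Summing over $i$ and pulling the sum inside the trace by linearity gives $\sum_{i=1}^t \phi_i^\top \Lambda_h^{-1}\phi_i = \Tr\big(\Lambda_h^{-1}\sum_{i=1}^t \phi_i\phi_i^\top\big)$.

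The next step is to substitute $\sum_{i=1}^t \phi_i\phi_i^\top = \Lambda_h - \lambda I$, which follows immediately from the definition $\Lambda_h = \lambda I + \sum_{i=1}^t \phi_i\phi_i^\top$. This yields $\Tr\big(\Lambda_h^{-1}(\Lambda_h - \lambda I)\big) = \Tr(I) - \lambda\Tr(\Lambda_h^{-1}) = d - \lambda\Tr(\Lambda_h^{-1})$, where $I$ denotes the $d\times d$ identity.

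Finally I would argue that the subtracted term is nonnegative. Since $\lambda > 0$, the matrix $\Lambda_h = \lambda I + \sum_{i=1}^t \phi_i\phi_i^\top \succeq \lambda I \succ 0$ is positive definite, so its inverse $\Lambda_h^{-1}$ is positive definite as well; hence all its eigenvalues are positive and $\Tr(\Lambda_h^{-1}) > 0$. Therefore $\lambda\Tr(\Lambda_h^{-1}) \geq 0$, and consequently $\sum_{i=1}^t \phi_i^\top\Lambda_h^{-1}\phi_i = d - \lambda\Tr(\Lambda_h^{-1}) \leq d$, as claimed.

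There is no genuine obstacle here, since the statement reduces to a short algebraic identity for the trace. The only point requiring a moment's care is justifying that the discarded term $\lambda\Tr(\Lambda_h^{-1})$ is truly nonnegative, which in turn rests solely on the positive-definiteness of $\Lambda_h^{-1}$; everything else is mechanical manipulation of traces.
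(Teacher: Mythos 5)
Your proof is correct and is essentially the standard argument: the paper itself does not prove this lemma but imports it from Lemma D.1 of \citet{jin2019provably}, whose proof is exactly your trace computation $\sum_{i}\phi_i^\top\Lambda_h^{-1}\phi_i=\Tr\bigl(\Lambda_h^{-1}(\Lambda_h-\lambda I)\bigr)=d-\lambda\Tr(\Lambda_h^{-1})\leq d$. The justification of the discarded term via positive definiteness of $\Lambda_h^{-1}$ is sound, so nothing is missing.
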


\begin{lemma}[Lemma 11 in  \cite{abbasi2011improved}]\label{Lemma:sum_of_feature_yasin}

Using the same notation as defined in this paper
\begin{equation*}
    \sum_{k=1}^K \bigl\|\phi(s_h^k,a_h^k)\bigr\|^2_{(\Lambda^k_h)^{-1}} \leq 2d\log\Bigl(\frac{\lambda + K}{\lambda}\Bigr).
\end{equation*}
\end{lemma}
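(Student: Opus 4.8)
The statement is the \emph{elliptical potential lemma} (equivalently, the self-normalized sum bound of \cite{abbasi2011improved}), so the plan is to convert the sum of squared feature norms into a telescoping ratio of log-determinants. Fix $h$ and abbreviate $\phi_\tau := \phi(x_h^\tau, a_h^\tau)$, so that by \eqref{Eq:design_matrix} the design matrices obey the rank-one recursion $\Lambda_h^{k+1} = \Lambda_h^k + \phi_k\phi_k^\top$ with base case $\Lambda_h^1 = \lambda I$.

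First I would apply the matrix determinant lemma to this rank-one update, which gives $\det(\Lambda_h^{k+1}) = \det(\Lambda_h^k)\bigl(1 + \|\phi_k\|^2_{(\Lambda_h^k)^{-1}}\bigr)$, equivalently $\|\phi_k\|^2_{(\Lambda_h^k)^{-1}} = \det(\Lambda_h^{k+1})/\det(\Lambda_h^k) - 1$. Next, since $\|\phi_k\|_2 \leq 1$ by Definition~\ref{definition-linear-MDP} and $\Lambda_h^k \succeq \lambda I$ with $\lambda \geq 1$, each per-step term satisfies $\|\phi_k\|^2_{(\Lambda_h^k)^{-1}} \leq 1/\lambda \leq 1$; hence the elementary scalar inequality $u \leq 2\log(1+u)$ (valid on $[0,1]$) applies and yields $\|\phi_k\|^2_{(\Lambda_h^k)^{-1}} \leq 2\log\bigl(\det(\Lambda_h^{k+1})/\det(\Lambda_h^k)\bigr)$.

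Summing over $k = 1,\dots,K$, the right-hand side telescopes to $\sum_{k=1}^K \|\phi_k\|^2_{(\Lambda_h^k)^{-1}} \leq 2\log\bigl(\det(\Lambda_h^{K+1})/\det(\Lambda_h^1)\bigr)$. It then remains to bound the two determinants. We have $\det(\Lambda_h^1) = \det(\lambda I) = \lambda^d$, while every eigenvalue of $\Lambda_h^{K+1} = \lambda I + \sum_{\tau=1}^K \phi_\tau\phi_\tau^\top$ is at most $\lambda + K$ (the summed matrix has operator norm at most $\sum_{\tau}\|\phi_\tau\|_2^2 \leq K$), so $\det(\Lambda_h^{K+1}) \leq (\lambda+K)^d$. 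Combining these yields $2\log\bigl((\lambda+K)^d/\lambda^d\bigr) = 2d\log\bigl((\lambda+K)/\lambda\bigr)$, which is exactly the claimed bound.

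There is no genuine difficulty here, as this is a classical result; the single point that warrants care is the applicability of $u \leq 2\log(1+u)$, which holds only for $u \in [0,1]$. This is precisely why the normalization $\|\phi\|_2 \leq 1$ together with $\lambda \geq 1$ is needed, so that every $\|\phi_k\|^2_{(\Lambda_h^k)^{-1}} \leq 1$; since the regret analysis fixes $\lambda = 1$, the hypothesis is satisfied throughout.
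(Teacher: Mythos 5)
Your proof is correct and is the standard log-determinant/telescoping argument; the paper itself gives no proof of this lemma, simply citing Lemma 11 of \cite{abbasi2011improved}, and your derivation matches the classical one. You also rightly flag the one real hypothesis — that $u \leq 2\log(1+u)$ requires $u \leq 1$, hence $\lambda \geq 1$ together with $\|\phi\|_2 \leq 1$ — which is satisfied since the paper sets $\lambda = 1$ throughout.
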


\begin{lemma}[Lemma D.5 in  \cite{ishfaq2021randomized}]\label{Lemma:D.5_Ishfaq}
    Let $A \in \mathbb{R}^{d\times d}$ be a positive definite matrix where its largest eigenvalue $\lambda_{\max}(A) \leq \lambda$. Let $x_1, \ldots, x_k$ be $k$ vectors in $\mathbb{R}^d$. Then it holds that
    \begin{equation*}
        \Bigl\|A \sum_{i=1}^k x_i\Bigr\| \leq \sqrt{\lambda k} \left(\sum_{i=1}^k \|x_i\|_A^2\right)^{1/2}.
    \end{equation*}
\end{lemma}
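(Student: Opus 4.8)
The plan is to reduce the claim to a symmetric factorization of $A$ followed by an elementary Cauchy–Schwarz step. Since $A$ is positive definite it admits a symmetric positive-definite square root $A^{1/2}$, whose eigenvalues are the square roots of those of $A$; in particular its operator norm satisfies $\|A^{1/2}\|_2 = \sqrt{\lambda_{\max}(A)} \le \sqrt{\lambda}$. Writing $S = \sum_{i=1}^k x_i$, I would first peel off one factor of $A^{1/2}$ using submultiplicativity of the operator norm:
\[
\left\|A S\right\| = \left\|A^{1/2}\left(A^{1/2} S\right)\right\| \le \|A^{1/2}\|_2 \left\|A^{1/2} S\right\| \le \sqrt{\lambda}\,\left\|A^{1/2} S\right\|.
\]

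Next I would introduce the vectors $y_i := A^{1/2} x_i$, so that $\|y_i\|^2 = x_i^\top A x_i = \|x_i\|_A^2$ and $A^{1/2} S = \sum_{i=1}^k y_i$. The remaining task is to bound $\|\sum_{i=1}^k y_i\|$ in terms of $\sum_{i=1}^k \|y_i\|^2$, which is a standard fact: by the triangle inequality followed by the Cauchy–Schwarz inequality applied to $(\|y_1\|, \dots, \|y_k\|)$ against the all-ones vector,
\[
\left\|\sum_{i=1}^k y_i\right\| \le \sum_{i=1}^k \|y_i\| \le \sqrt{k}\left(\sum_{i=1}^k \|y_i\|^2\right)^{1/2}.
\]
Chaining these two bounds and substituting $\|y_i\|^2 = \|x_i\|_A^2$ yields $\|A S\| \le \sqrt{\lambda k}\,(\sum_{i=1}^k \|x_i\|_A^2)^{1/2}$, which is exactly the claim.

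There is essentially no serious obstacle here; the only points requiring care are the identity $\|A^{1/2} x_i\|^2 = \|x_i\|_A^2$ and the bound $\|A^{1/2}\|_2 \le \sqrt{\lambda}$, both of which follow immediately from the spectral decomposition of the symmetric matrix $A$. An alternative route avoids the square root entirely: one can write $\|AS\|^2 = S^\top A^2 S$, observe that $A \preceq \lambda I$ gives $A^2 = A^{1/2} A A^{1/2} \preceq \lambda A$, hence $S^\top A^2 S \le \lambda\, S^\top A S = \lambda \|S\|_A^2$, and then bound $\|S\|_A^2 = \|\sum_i x_i\|_A^2 \le k \sum_i \|x_i\|_A^2$ by Cauchy–Schwarz in the inner product induced by $A$. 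Either formulation reaches the conclusion in a couple of lines; I would present the first, since it keeps every step in the ordinary Euclidean norm and matches the operator-norm notation already fixed in the paper.
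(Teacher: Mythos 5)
Your proof is correct; both routes you sketch (peeling off $A^{1/2}$ and using $\|A^{1/2}\|_2 \le \sqrt{\lambda}$, or the operator inequality $A^2 \preceq \lambda A$ followed by Cauchy--Schwarz in the $A$-inner product) are valid and complete. Note that the paper itself states this lemma only by citation to the source reference and gives no proof of its own, so there is no in-paper argument to compare against; your derivation is the standard one and fills that gap correctly.
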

\subsection{Linear Algebra Lemmas}

\begin{lemma}\label{Lemma:symmetric-matrix-norm} Consider two symmetric positive semidefinite square matrices $A$ and $B$. If $A \geq B$, then $\|A\|_2 \geq \|B\|_2$.
\end{lemma}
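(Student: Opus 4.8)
The plan is to reduce the operator-norm comparison to the variational (Rayleigh-quotient) characterization of the largest eigenvalue. For a symmetric positive semidefinite matrix $M$, the operator norm $\|M\|_2$ coincides with its largest eigenvalue $\lambda_{\max}(M)$, which in turn equals $\max_{\|x\|_2 = 1} x^\top M x$. First I would record this identity for both matrices, obtaining $\|A\|_2 = \max_{\|x\|_2=1} x^\top A x$ and $\|B\|_2 = \max_{\|x\|_2=1} x^\top B x$.

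Next I would unpack the hypothesis $A \geq B$. In the Loewner order this means precisely that $A - B$ is positive semidefinite, i.e. $x^\top (A-B) x \geq 0$, or equivalently $x^\top A x \geq x^\top B x$ for every $x \in \mathbb{R}^d$. This pointwise inequality between the two quadratic forms is the only structural fact I need.

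The final step is a one-line chain. Let $x^\star$ be a unit vector attaining the maximum in the Rayleigh quotient for $B$, so that $\|B\|_2 = (x^\star)^\top B x^\star$. Then
\[
\|A\|_2 = \max_{\|x\|_2=1} x^\top A x \geq (x^\star)^\top A x^\star \geq (x^\star)^\top B x^\star = \|B\|_2,
\]
where the first inequality holds because $x^\star$ is merely one feasible unit vector in the maximization defining $\|A\|_2$, and the second inequality is the pointwise bound coming from $A \geq B$. This establishes the claim.

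There is essentially no obstacle here; this is a routine fact about the Loewner order. The only point requiring a little care is to invoke the correct identity $\|M\|_2 = \lambda_{\max}(M)$, which is valid precisely because the matrices are symmetric positive semidefinite (for a general matrix the operator norm is the largest singular value rather than the largest eigenvalue). The positive semidefiniteness assumption is exactly what makes this identification, and hence the whole argument, clean.
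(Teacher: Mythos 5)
Your proof is correct and follows essentially the same route as the paper's: both use the identity $\|M\|_2 = \sup_{\|x\|=1} x^\top M x$ for symmetric positive semidefinite matrices together with the pointwise inequality $x^\top A x \geq x^\top B x$ coming from positive semidefiniteness of $A-B$. The only cosmetic difference is that you evaluate at a maximizer $x^\star$ for $B$ while the paper compares the two suprema directly; these are equivalent.
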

\begin{proof}[Proof of \Cref{Lemma:symmetric-matrix-norm}]
Note that $A - B$ is also positive semidefinite. Now,
\begin{equation}
    \|B\|_2 = \sup_{\|x\| = 1}x^\top Bx \leq 
\sup_{\|x\| = 1}\left(x^\top Bx + x^\top (A-B)x\right) = \sup_{\|x\| = 1}x^\top Ax = \|A\|_2.
\end{equation}
This completes the proof.
\end{proof}

\begin{lemma}[\citep{horn2012matrix}]\label{Lemma:trace-inequality}
    If $A$ and $B$ are positive semi-definite square matrices of the same size, then
    \begin{equation*}
        0 \leq [\Tr(AB)]^2 \leq \Tr(A^2)\Tr(B^2) \leq [\Tr(A)]^2[\Tr(B)]^2.
    \end{equation*}
\end{lemma}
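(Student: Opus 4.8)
The plan is to prove the three inequalities in turn, from left to right, using only elementary facts about positive semidefinite matrices and the eigenvalue representation of the trace. Throughout, the two hypotheses $A \succeq 0$ and $B \succeq 0$ are what make each step go through, so the main task is to track where positivity is invoked.

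First I would establish the leftmost inequality $0 \leq [\Tr(AB)]^2$. The square is automatically nonnegative, but for the subsequent steps it is cleaner to argue the stronger fact $\Tr(AB) \geq 0$. Since $A$ is positive semidefinite it admits a symmetric square root $A^{1/2}$, and by the cyclic property of the trace $\Tr(AB) = \Tr(A^{1/2} B A^{1/2})$. The matrix $A^{1/2} B A^{1/2}$ is positive semidefinite, because for any $v$ we have $v^\top A^{1/2} B A^{1/2} v = (A^{1/2} v)^\top B (A^{1/2} v) \geq 0$ using $B \succeq 0$; hence its trace, a sum of nonnegative eigenvalues, is nonnegative.

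Next, for the middle inequality $[\Tr(AB)]^2 \leq \Tr(A^2)\Tr(B^2)$, I would invoke the Cauchy-Schwarz inequality for the Frobenius inner product $\langle X, Y\rangle_F = \Tr(X^\top Y)$. Because $A$ and $B$ are symmetric, $\langle A, B\rangle_F = \Tr(A^\top B) = \Tr(AB)$, while $\langle A, A\rangle_F = \Tr(A^2)$ and $\langle B, B\rangle_F = \Tr(B^2)$. Cauchy-Schwarz, $\langle A, B\rangle_F^2 \leq \langle A, A\rangle_F \langle B, B\rangle_F$, then yields $[\Tr(AB)]^2 \leq \Tr(A^2)\Tr(B^2)$ directly.

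Finally, for the rightmost inequality I would reduce to the single-matrix claim $\Tr(A^2) \leq [\Tr(A)]^2$ and multiply it by the analogous bound for $B$. Writing the eigenvalues of $A$ as $\lambda_1, \ldots, \lambda_d \geq 0$, we have $\Tr(A^2) = \sum_i \lambda_i^2$ and $[\Tr(A)]^2 = (\sum_i \lambda_i)^2 = \sum_i \lambda_i^2 + \sum_{i \neq j} \lambda_i \lambda_j$, and the cross terms are nonnegative precisely because $A \succeq 0$ forces $\lambda_i \lambda_j \geq 0$; the same holds for $B$, and the product of the two bounds gives $\Tr(A^2)\Tr(B^2) \leq [\Tr(A)]^2 [\Tr(B)]^2$. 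I do not anticipate a genuine obstacle here, since each inequality is standard; the only point deserving care is that every step genuinely uses positive semidefiniteness (nonnegativity of $\Tr(AB)$, of the eigenvalues, and of the cross terms), so the main verification is that these hypotheses are applied at the right places and that Cauchy-Schwarz is taken with respect to the Frobenius inner product rather than some other pairing.
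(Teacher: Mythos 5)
The paper does not prove this lemma at all --- it is quoted directly from Horn and Johnson's \emph{Matrix Analysis} with no argument supplied --- so there is no in-paper proof to compare against. Your self-contained proof is correct: $\Tr(AB)=\Tr(A^{1/2}BA^{1/2})\ge 0$ handles the first inequality, Cauchy--Schwarz for the Frobenius inner product (using symmetry of $A$ and $B$ so that $\Tr(A^\top B)=\Tr(AB)$) gives the middle one, and the eigenvalue computation $\sum_i\lambda_i^2\le(\sum_i\lambda_i)^2$ for nonnegative eigenvalues, applied to $A$ and $B$ separately and multiplied (both factors being nonnegative), gives the last one. Each use of positive semidefiniteness is placed where it is actually needed, so this is a complete and valid replacement for the external citation.
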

\subsection{Covering numbers and self-normalized processes}
\begin{lemma}[Lemma D.4 in \cite{jin2019provably}]\label{Lemma:self-normalized-process}
    Let $\{s_i\}_{i=1}^\infty$ be a stochastic process on state space $\mathcal{S}$ with corresponding filtration $\{\mathcal{F}_i\}_{i=1}^\infty$. Let $\{\phi_i\}_{i=1}^\infty$ be an $\mathbb{R}^d$-valued stochastic process where $\phi_i \in \mathcal{F}_{i-1}$, and $\|\phi_i\| \leq 1$. Let $\Lambda_k = \lambda I + \sum_{i=1}^k \phi_i \phi_i^\top$. Then for any $\delta > 0$, with probability at least $1-\delta$, for all $k \geq 0$, and any $V \in \mathcal{V}$ with $\sup_{s \in \mathcal{S}} |V(s)| \leq H$, we have

\begin{equation*}
    \Bigl\|\sum_{i=1}^k \phi_i \bigl \{ V(s_i) - \mathbb{E}[V(s_i) \,|\, \mathcal{F}_{i-1} ]\bigr \}  \Bigr\|^2_{\Lambda_k^{-1}} \leq 4H^2 \Bigl [\frac{d}{2}\log\Bigl(\frac{k+\lambda}{\lambda}\Bigr) + \log{\frac{\mathcal{N}_\varepsilon}{\delta}}\Bigr] + \frac{8k^2\epsilon^2}{\lambda},
\end{equation*}
where $\mathcal{N}_\varepsilon$ is the $\varepsilon$-covering number of $\mathcal{V}$ with respect to the distance $\text{dist}(V,V') = \sup_{s \in \mathcal{S}} |V(s) - V'(s)|$.
\end{lemma}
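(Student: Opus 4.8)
The plan is to combine a self-normalized martingale concentration inequality, applied to a \emph{fixed} deterministic function, with a covering-number union bound that handles the fact that the $V$ of interest is itself data-dependent. First I would fix a deterministic $V \in \mathcal{V}$ with $\sup_s|V(s)|\le H$ and set $\eta_i = V(s_i) - \mathbb{E}[V(s_i)\mid\mathcal{F}_{i-1}]$. Since $\phi_i\in\mathcal{F}_{i-1}$ and $\mathbb{E}[\eta_i\mid\mathcal{F}_{i-1}]=0$, with $V(s_i)\in[-H,H]$ making $\eta_i$ conditionally $H$-sub-Gaussian by Hoeffding's lemma, the sequence $\{\phi_i\eta_i\}$ is a vector-valued martingale difference sequence. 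Applying the concentration inequality for self-normalized processes of \citet{abbasi2011improved} to this fixed $V$, with probability at least $1-\delta$, simultaneously for all $k\ge0$,
\[
\Bigl\|\sum_{i=1}^k \phi_i\eta_i\Bigr\|_{\Lambda_k^{-1}}^2 \le 2H^2\log\Bigl(\frac{\det(\Lambda_k)^{1/2}\det(\lambda I)^{-1/2}}{\delta}\Bigr).
\]

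Next I would discretize. Let $\mathcal{C}_\varepsilon$ be a minimal $\varepsilon$-net of $\mathcal{V}$ under $\mathrm{dist}(V,V')=\sup_s|V(s)-V'(s)|$, of cardinality $\mathcal{N}_\varepsilon$. Apply the fixed-$V$ bound above to every net point at failure level $\delta/\mathcal{N}_\varepsilon$ and union bound, so that the displayed inequality holds simultaneously for all $\bar V\in\mathcal{C}_\varepsilon$ and all $k$, now with $\log(\mathcal{N}_\varepsilon/\delta)$ in place of $\log(1/\delta)$. For the actual (possibly data-dependent) $V\in\mathcal{V}$, choose $\bar V\in\mathcal{C}_\varepsilon$ with $\mathrm{dist}(V,\bar V)\le\varepsilon$ and split $\phi_i\{V(s_i)-\mathbb{E}[V(s_i)\mid\mathcal{F}_{i-1}]\}$ into the $\bar V$ term plus the centered residual of $V-\bar V$. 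Using $(a+b)^2\le2a^2+2b^2$, the $\bar V$ term is at most twice the union-bounded quantity, while the residual—whose centered entries are bounded by $2\varepsilon$—is controlled crudely via $\Lambda_k^{-1}\preceq\lambda^{-1}I$ and $\|\phi_i\|\le1$: its squared $\Lambda_k^{-1}$-norm is at most $\lambda^{-1}(2k\varepsilon)^2$, contributing $8k^2\varepsilon^2/\lambda$ after the factor of two.

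Finally I would bound the determinant term. Since $\|\phi_i\|\le1$, the eigenvalues of $\Lambda_k$ sum to at most $d\lambda+k$, so AM--GM gives $\det(\Lambda_k)\le(\lambda+k/d)^d$ and hence $\tfrac12\log\bigl(\det(\Lambda_k)/\lambda^d\bigr)\le\tfrac{d}{2}\log\tfrac{k+\lambda}{\lambda}$. Substituting this into the union-bounded version of the Step~1 estimate and adding the discretization contribution yields exactly $4H^2\bigl[\tfrac{d}{2}\log\tfrac{k+\lambda}{\lambda}+\log\tfrac{\mathcal{N}_\varepsilon}{\delta}\bigr]+\tfrac{8k^2\varepsilon^2}{\lambda}$, as claimed.

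The main obstacle is conceptual rather than computational: the function $V$ we must control is not chosen in advance but depends on the very data generating the martingale increments, which breaks the conditional mean-zero structure needed for a direct application of the concentration bound. The covering argument is precisely what decouples them—concentration is established uniformly over a fixed finite net, and the resulting discretization gap is absorbed into the $8k^2\varepsilon^2/\lambda$ term at the modest cost of the $\log\mathcal{N}_\varepsilon$ factor.
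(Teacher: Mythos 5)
Your proposal is correct and follows essentially the same route as the proof the paper relies on (the lemma is imported verbatim from Lemma D.4 of \citet{jin2019provably}, whose proof is exactly this argument): apply the self-normalized concentration inequality of \citet{abbasi2011improved} with conditionally $H$-sub-Gaussian noise to each element of an $\varepsilon$-net at level $\delta/\mathcal{N}_\varepsilon$, bound $\log\det(\Lambda_k)$ via the trace/AM--GM estimate to get the $\tfrac{d}{2}\log\tfrac{k+\lambda}{\lambda}$ term, and absorb the discretization residual through $(a+b)^2\le 2a^2+2b^2$ and $\Lambda_k^{-1}\preceq\lambda^{-1}I$ into the $8k^2\varepsilon^2/\lambda$ term. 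All constants check out (the factor $4H^2$ arises precisely from doubling $2H^2$, and the residual with centered increments bounded by $2\varepsilon$ yields $2\cdot\lambda^{-1}(2k\varepsilon)^2=8k^2\varepsilon^2/\lambda$), so there is no gap.
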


\begin{lemma}[Covering number of Euclidean ball, \cite{vershynin2018high} ]\label{lemma:covering-number-euclidean-ball}
For any $\varepsilon > 0$, the $\varepsilon$-covering number, $\mathcal{N}_\varepsilon$, of the Euclidean ball of radius $B > 0$ in $\mathbb{R}^d$ satisfies 
\begin{equation*}
    \mathcal{N}_\varepsilon \leq \Bigl(1+ \frac{2B}{\varepsilon}\Bigr)^d \leq \Bigl( \frac{3B}{\varepsilon}\Bigr)^d .
\end{equation*}
\end{lemma}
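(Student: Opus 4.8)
The plan is to prove this by the standard volumetric packing argument, which sidesteps computing the exact volume of a Euclidean ball. Write $\mathcal{B} = \{x \in \mathbb{R}^d : \|x\|_2 \leq B\}$ for the ball in question. First I would pass from covering to packing: let $\{x_1, \ldots, x_N\} \subseteq \mathcal{B}$ be a \emph{maximal} $\varepsilon$-separated set, i.e.\ $\|x_i - x_j\|_2 > \varepsilon$ for all $i \neq j$, with no point of $\mathcal{B}$ appendable while preserving this property. Maximality forces every $x \in \mathcal{B}$ to lie within distance $\varepsilon$ of some $x_i$ (otherwise $x$ itself could be added), so $\{x_1,\ldots,x_N\}$ is an $\varepsilon$-net and hence $\mathcal{N}_\varepsilon \leq N$. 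It therefore suffices to upper bound the packing number $N$.

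Next I would bound $N$ by a volume comparison. Since the centers are pairwise more than $\varepsilon$ apart, the open balls of radius $\varepsilon/2$ centered at the $x_i$ are pairwise disjoint. Each such ball is contained in the inflated ball of radius $B + \varepsilon/2$ about the origin, because $\|x_i\|_2 \leq B$. Writing $\mathrm{vol}$ for Lebesgue measure and using that the volume of a Euclidean ball of radius $r$ in $\mathbb{R}^d$ equals $c_d\, r^d$ for some dimensional constant $c_d$, disjointness and containment yield $N \cdot c_d (\varepsilon/2)^d \leq c_d (B + \varepsilon/2)^d$. The constant $c_d$ cancels, so
\begin{equation*}
    N \leq \left(\frac{B + \varepsilon/2}{\varepsilon/2}\right)^d = \left(1 + \frac{2B}{\varepsilon}\right)^d,
\end{equation*}
which is the first claimed inequality.

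Finally, the second inequality holds in the regime $\varepsilon \leq B$ relevant to the application: there $1 \leq B/\varepsilon$, whence $1 + 2B/\varepsilon \leq 3B/\varepsilon$, and raising both sides to the $d$-th power preserves the bound. I do not expect any genuine obstacle here; the only steps requiring a moment of care are the covering-to-packing reduction in the first paragraph and the observation that the unknown constant $c_d$ cancels in the volume comparison, making the final bound fully explicit and independent of the exact unit-ball volume. No probabilistic or concentration tools are needed.
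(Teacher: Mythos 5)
Your proof is correct and is exactly the standard volumetric packing argument; the paper itself offers no proof of this lemma, citing it directly from Vershynin's book, where the same covering-to-packing reduction and volume comparison appear. Your explicit remark that the second inequality $(1 + 2B/\varepsilon)^d \leq (3B/\varepsilon)^d$ requires $\varepsilon \leq B$ is a point the paper's statement glosses over (it asserts the chain "for any $\varepsilon > 0$"), and you handle it appropriately by noting that this is the regime in which the lemma is actually invoked.
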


\begin{lemma}\label{Lemma:covering-number-V}
    Let $\mathcal{V}$ denote a class of functions mapping from $\mathcal{S}$ to $\mathbb{R}$ with the following parametric form
    \begin{equation*}
        V(\cdot) = \min\left\{\max_{a\in \mathcal{A}} \phi(\cdot,a)^\top w,H\right\},
    \end{equation*}
    where the parameter $w$ satisifies $\|w\|\leq B$ and for all $(x,a) \in \mathcal{S}\times\mathcal{A}$, we have $\|\phi(x,a)\|\leq 1$. Let $N_{\mathcal{V},\varepsilon}$ be the $\varepsilon$-covering number of $\mathcal{V}$ with respect to the distance $\text{dist}(V,V')=\sup_{x}|V(x)-V'(x)|$. Then
    \begin{equation*}
        \log{N_{\mathcal{V},\varepsilon}} \leq d\log{(1+2B/\varepsilon)} \leq d\log{(3B/\varepsilon)}.
    \end{equation*}
\end{lemma}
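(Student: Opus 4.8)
The plan is to reduce the problem of covering $\mathcal{V}$ to the problem of covering the Euclidean ball $\mathcal{B} = \{w \in \mathbb{R}^d : \|w\| \leq B\}$, by showing that the parametrization $w \mapsto V_w$, where $V_w(\cdot) = \min\{\max_{a\in\mathcal{A}} \phi(\cdot,a)^\top w, H\}$, is non-expansive from $(\mathcal{B}, \|\cdot\|_2)$ into $(\mathcal{V}, \text{dist})$. Once this Lipschitz property is in hand, any $\varepsilon$-cover of the parameter ball immediately transfers to an $\varepsilon$-cover of $\mathcal{V}$, and the stated bound follows from the known covering number of a Euclidean ball.

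First I would establish the key Lipschitz estimate. Fix $w, w' \in \mathcal{B}$ and an arbitrary state $x$. Since the truncation $t \mapsto \min\{t, H\}$ is $1$-Lipschitz, we have $|V_w(x) - V_{w'}(x)| \leq \big|\max_{a} \phi(x,a)^\top w - \max_{a} \phi(x,a)^\top w'\big|$. Then, using the elementary inequality $|\max_a f(a) - \max_a g(a)| \leq \max_a |f(a) - g(a)|$ together with Cauchy--Schwarz and the assumption $\|\phi(x,a)\| \leq 1$, I obtain
\begin{equation*}
    |V_w(x) - V_{w'}(x)| \leq \max_{a} \big|\phi(x,a)^\top (w - w')\big| \leq \max_{a} \|\phi(x,a)\|\, \|w - w'\|_2 \leq \|w - w'\|_2.
\end{equation*}
Taking the supremum over $x$ gives $\text{dist}(V_w, V_{w'}) \leq \|w - w'\|_2$.

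Next I would convert this into the covering number comparison. Let $\mathcal{C} \subseteq \mathcal{B}$ be an $\varepsilon$-cover of $\mathcal{B}$ in Euclidean norm of minimal size $\mathcal{N}_\varepsilon$. For any $V_w \in \mathcal{V}$, choose $w' \in \mathcal{C}$ with $\|w - w'\|_2 \leq \varepsilon$; the Lipschitz bound then yields $\text{dist}(V_w, V_{w'}) \leq \varepsilon$, so $\{V_w : w \in \mathcal{C}\}$ is an $\varepsilon$-cover of $\mathcal{V}$ and hence $N_{\mathcal{V},\varepsilon} \leq \mathcal{N}_\varepsilon$. Finally, invoking \Cref{lemma:covering-number-euclidean-ball} gives $\mathcal{N}_\varepsilon \leq (1 + 2B/\varepsilon)^d \leq (3B/\varepsilon)^d$, and taking logarithms produces the claimed bound $\log N_{\mathcal{V},\varepsilon} \leq d\log(1 + 2B/\varepsilon) \leq d\log(3B/\varepsilon)$.

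This argument is essentially routine, with no genuine obstacle; the only step requiring care is the Lipschitz estimate, whose correctness rests on the observation that both the truncation $\min\{\cdot, H\}$ and the pointwise maximum $\max_{a}$ are non-expansive operations, so the entire dependence of $V_w$ on $w$ is funneled through the linear term $\phi(x,a)^\top w$, which is controlled by Cauchy--Schwarz and the feature normalization.
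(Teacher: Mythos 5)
Your proof is correct and follows essentially the same route as the paper's: both establish the non-expansiveness $\mathrm{dist}(V_w, V_{w'}) \leq \|w - w'\|_2$ via the contraction properties of $\min\{\cdot,H\}$ and $\max_a$ together with Cauchy--Schwarz, and then transfer an $\varepsilon$-cover of the parameter ball (\Cref{lemma:covering-number-euclidean-ball}) to the function class. No gaps.
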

\begin{proof}[Proof of \Cref{Lemma:covering-number-V}]
    Consider any two functions $V_1, V_2 \in \mathcal{V}$ with parameters $w_1$ and $w_2$ respectively. Since both $\min\{\cdot, H\}$ and $\max_a$ are contraction maps, we have
    \begin{align}
        \begin{split}
            \text{dist}(V_1, V_2) &\leq \sup_{x,a} \big|\phi(x,a)^\top w_1 - \phi(x,a)^\top w_2 \big|\\
            &\leq \sup_{\phi: \|\phi\|\leq 1} \big| \phi^\top w_1 - \phi^\top w_2 \big|\\
            &=\sup_{\phi: \|\phi\|\leq 1} \big| \phi^\top (w_1 - w_2) \big|\\
            &\leq \sup_{\phi: \|\phi\|\leq 1} \|\phi\|_2 \|w_1 - w_2 \|_2\\
            &\leq \|w_1 - w_2\|,
        \end{split}
    \end{align}

    Let $N_{w,\varepsilon}$ denote the $\varepsilon$-covering number of $\{w \in \mathbb{R}^d \mid \|w\| \leq B\}$. Then, Lemma \ref{lemma:covering-number-euclidean-ball} implies
    \begin{equation*}
        N_{w,\varepsilon} \leq \Big( 1 + \frac{2B}{\varepsilon}\Big)^d \leq \Big(\frac{3B}{\varepsilon}\Big)^d. 
    \end{equation*}
    Let $\mathcal{C}_{w,\varepsilon}$ be an $\varepsilon$-cover of $\{w \in \mathbb{R}^d \mid \|w\| \leq B\}$. For any $V_1 \in \mathcal{V}$, there exists $w_2 \in \mathcal{C}_{w,\varepsilon}$ such that $V_2$ parameterized by $w_2$ satisfies $\text{dist}(V_1, V_2)\leq \varepsilon$. Thus, we have, 
    \begin{equation*}
        \log N_{\mathcal{V}, \varepsilon} \leq \log  N_{w, \varepsilon} \leq d \log (1 + 2B/\varepsilon) \leq d \log (3B/\varepsilon),
    \end{equation*}
    which concludes the proof.
\end{proof}

\begin{lemma}\label{Lemma:covering-number-V-multi-sample}
    Let $\mathcal{V}$ denote a class of functions mapping from $\mathcal{S}$ to $\mathbb{R}$ with the following parametric form
    \begin{equation*}
        V(\cdot) = \max_{a \in \mathcal{A}} \min\left\{\max_{m\in [M]} \phi(\cdot,a)^\top w^m,H\right\},
    \end{equation*}
    where the parameter $w^m$ satisifies $\|w^m\|\leq B$ for all $m \in [M]$, and for all $(x,a) \in \mathcal{S}\times\mathcal{A}$, we have $\|\phi(x,a)\|\leq 1$. Let $N_{\mathcal{V},\varepsilon}$ be the $\varepsilon$-covering number of $\mathcal{V}$ with respect to the distance $\text{dist}(V,V')=\sup_{x}|V(x)-V'(x)|$. Then
    \begin{equation*}
        \log{N_{\mathcal{V},\varepsilon}} \leq dM\log{(1+2B/\varepsilon)} \leq dM\log{(3B/\varepsilon)}.
    \end{equation*}
\end{lemma}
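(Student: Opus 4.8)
The plan is to follow the same strategy as in the proof of \Cref{Lemma:covering-number-V}, adapting it to the multi-sample parameterization. Each function $V \in \mathcal{V}$ is determined by the tuple $(w^1, \ldots, w^M) \in (\mathbb{R}^d)^M$ with each $\|w^m\| \leq B$. The key step is to establish a Lipschitz-type bound showing that the sup-distance between two functions $V_1, V_2 \in \mathcal{V}$, parameterized by $(w_1^1, \ldots, w_1^M)$ and $(w_2^1, \ldots, w_2^M)$ respectively, is controlled by $\max_{m \in [M]} \|w_1^m - w_2^m\|_2$. Once this is in place, I would build a cover by taking the product of $M$ copies of an $\varepsilon$-cover of the Euclidean ball of radius $B$ and count its cardinality using \Cref{lemma:covering-number-euclidean-ball}.

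For the Lipschitz bound, I would peel off the three outer operations in sequence, using that each is a contraction. First, $\max_{a \in \mathcal{A}}$ is $1$-Lipschitz with respect to the sup over $a$, so $\text{dist}(V_1, V_2) \leq \sup_{x,a} |\min\{\max_m \phi(x,a)^\top w_1^m, H\} - \min\{\max_m \phi(x,a)^\top w_2^m, H\}|$. Next, $\min\{\cdot, H\}$ is a contraction, which removes the truncation. Then $\max_{m \in [M]}$ is again $1$-Lipschitz with respect to the sup over $m$, giving $\sup_{x,a} \max_m |\phi(x,a)^\top (w_1^m - w_2^m)|$. Finally, applying Cauchy--Schwarz together with $\|\phi(x,a)\|_2 \leq 1$ yields the desired bound $\text{dist}(V_1, V_2) \leq \max_{m \in [M]} \|w_1^m - w_2^m\|_2$.

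With the Lipschitz bound established, I would let $\mathcal{C}_{w,\varepsilon}$ be an $\varepsilon$-cover of $\{w \in \mathbb{R}^d : \|w\| \leq B\}$ and form the product cover consisting of all tuples $(w^1, \ldots, w^M)$ with each $w^m \in \mathcal{C}_{w,\varepsilon}$. For any $V_1 \in \mathcal{V}$, choosing $w_2^m \in \mathcal{C}_{w,\varepsilon}$ with $\|w_1^m - w_2^m\| \leq \varepsilon$ for each $m$ forces $\max_m \|w_1^m - w_2^m\| \leq \varepsilon$, so the corresponding $V_2$ satisfies $\text{dist}(V_1, V_2) \leq \varepsilon$. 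The product cover therefore has cardinality $|\mathcal{C}_{w,\varepsilon}|^M \leq (1 + 2B/\varepsilon)^{dM}$ by \Cref{lemma:covering-number-euclidean-ball}, and taking logarithms gives the stated bound.

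I do not expect a genuine obstacle here, since the argument is essentially the single-sample proof with an extra product over $m$. The one point requiring slight care is the handling of $\max_{m \in [M]}$: one must verify that $\max_m$ is $1$-Lipschitz in the sup norm over its arguments, so that the per-component control $\max_m \|w_1^m - w_2^m\| \leq \varepsilon$ suffices. This is precisely what makes the product of $M$ Euclidean-ball covers the natural construction and what produces the factor $M$ in the exponent (equivalently, the additional $M$ in front of the $\log$).
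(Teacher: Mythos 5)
Your proposal is correct and follows essentially the same route as the paper's proof: the same Lipschitz bound $\text{dist}(V_1,V_2)\leq\max_{m\in[M]}\|w_1^m-w_2^m\|_2$ obtained by peeling off the contractions, followed by a product of $M$ Euclidean-ball covers counted via \Cref{lemma:covering-number-euclidean-ball}. Your explicit remark that $\max_{m}$ is $1$-Lipschitz in the sup norm over its arguments is the same observation the paper uses implicitly in its chain of inequalities.
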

\begin{proof}[Proof of \Cref{Lemma:covering-number-V-multi-sample}]
    The proof is analogous to that of \Cref{Lemma:covering-number-V}. We provide the detailed proof for completenes.
    
    Consider any two functions $V_1, V_2 \in \mathcal{V}$ with 
    
    $$V_1 = \max_{a \in \mathcal{A}} \min\left\{\max_{m\in [M]} \phi(\cdot,a)^\top w_1^m,H\right\}$$
    
    and 

    $$V_2 = \max_{a \in \mathcal{A}} \min\left\{\max_{m\in [M]} \phi(\cdot,a)^\top w_2^m,H\right\}.$$
    
    Since both $\min\{\cdot, H\}$ and $\max_a$ are contraction maps, we have
    \begin{align}
        \begin{split}
            \text{dist}(V_1, V_2) &\leq \sup_{x,a} \big|\max_{m\in[M]}\phi(x,a)^\top w_1^m - \max_{m\in[M]}\phi(x,a)^\top w_2^m \big|\\
            &\leq \sup_{\phi: \|\phi\|\leq 1} \big| \max_{m\in[M]}\phi^\top w_1^m - \max_{m\in[M]}\phi^\top  w_2^m \big|\\
            &\leq \sup_{\phi: \|\phi\|\leq 1} \max_{m\in[M]} \|\phi\|_2 \|w_1^m - w_2^m \|_2\\
            &\leq \max_{m\in[M]} \|w_1^m - w_2^m\|_2,
        \end{split}
    \end{align}

    For any $m \in [M]$, let $\mathcal{C}^m$ be an $\varepsilon$-cover of $\{w^m \in \mathbb{R}^d \mid \|w^m\| \leq B\}$ with respect to the 2-norm. By \Cref{lemma:covering-number-euclidean-ball}, we know, 

    \begin{equation*}
        |\mathcal{C}^m| \leq \Big( 1 + \frac{2B}{\varepsilon}\Big)^d \leq \Big(\frac{3B}{\varepsilon}\Big)^d. 
    \end{equation*}
    It holds that $N_{\mathcal{V}, \varepsilon} \leq \prod_{m=1}^M|\mathcal{C}^m|$. Thus, we have, 
    \begin{equation*}
        \log N_{\mathcal{V}, \varepsilon} \leq \log  \prod_{m=1}^M|\mathcal{C}^m| \leq d M\log (1 + 2B/\varepsilon) \leq d M\log (3B/\varepsilon),
    \end{equation*}
    which concludes the proof.
\end{proof}

\section{Experiment Details}

In this section, first, we provide experiments for LMC-LSVI in randomly generated linear MDPs and the riverswim environment \citep{strehl2008analysis,osband2013more} and compare it against provably efficient algorithms designed for linear MDPs. Then, we provide more implementation details about experiments in $N$-Chain and Atari games.
In total, all experiments (including hyper-parameter tuning) took about 2 GPU (V100) years and 20 CPU years.

\subsection{Experiments for LMC-LSVI}\label{Appendix:experiments-LMC-LSVI}
\subsubsection{Randomly generated linear MDPs}
In this section, we use randomly generated non-stationary and linearly parameterized MDPs with 10 states, 4 actions, horizon length of $H = 100$ and a spares transition matrix. As a training setup, we use 4 randomly generated linear MDPs. For each MDP, we use 5 seeds for a total of 20 runs per hyperparameter combination. In \Cref{fig:linearMDP}, we compare our proposed LMC-LSVI against OPPO \citep{cai2020provably}, LSVI-UCB  \citep{jin2019provably} and LSVI-PHE \citep{ishfaq2021randomized}.

\begin{figure}[tbp]
\centering
\includegraphics[width=0.9\textwidth]{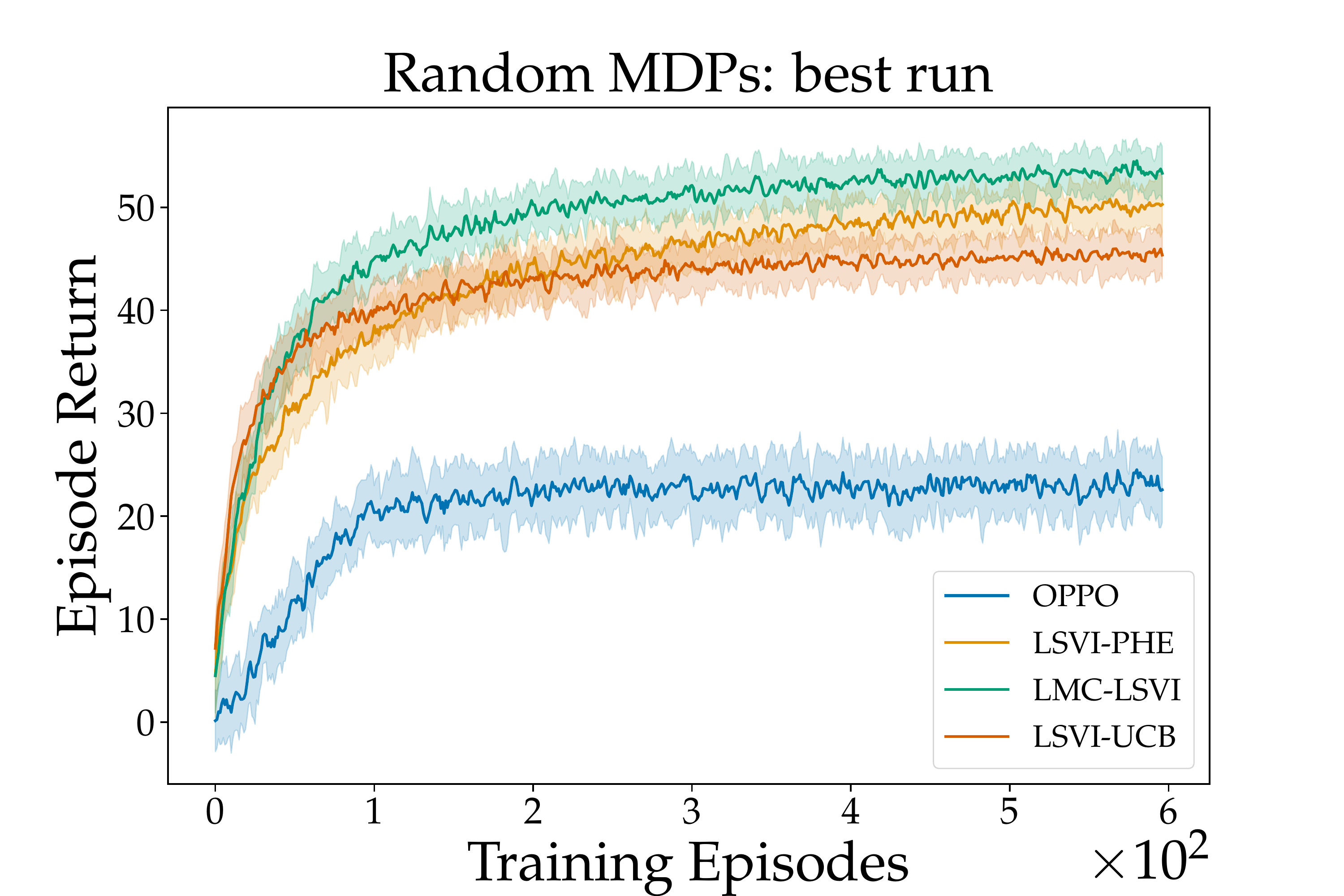}
\caption{
Comparison of LMC-LSVI, OPPO \citep{cai2020provably}, LSVI-UCB  \citep{jin2019provably} and LSVI-PHE \citep{ishfaq2021randomized} in randomly generated non-stationary linearly parameterized MDPs with 10 states, 4 actions, horizon length $H=100$ and a sparse transition matrix.}
\label{fig:linearMDP}
\end{figure}

\subsubsection{The RiverSwim environment}

 In the Riverswim environment, there are $N$ states which are lined up in a chain. \Cref{fig:riverswim} depicts the case when $N = 6$. The agent begins in the leftmost state $s_1$ and in each state can take one of the two actions -- ``left" or ``right". Swimming to the left, with the current, deterministically moves the agent to the left while swimming to the right against the current often fails. The optimal policy is to swim to the right and reach to the rightmost state $s_N$.
Thus, deep exploration is required to obtain the optimal policy in this environment. 
We experiment with the variant of RiverSwim where $N = 12$ and $H = 40$. We use LSVI-UCB \citep{jin2019provably}, LSVI-PHE \citep{ishfaq2021randomized}, OPPO \citep{cai2020provably} and OPT-RLSVI \citep{zanette2019frequentist} as baselines. As shown in Figure \ref{fig:riverswim12_best}, LMC-LSVI achieves similar performace to LSVI-UCB and outperforms other baselines. Figure \ref{fig:riverswim12_J_k_sweep} shows the performance of LMC-LSVI as we vary the update number $J_k$. As we see, even with a relatively small value of $J_k$, LMC-LSVI manages to learn a near-optimal policy quickly.

\begin{figure}[h]
    \centering
    \includegraphics[width=0.9\textwidth]{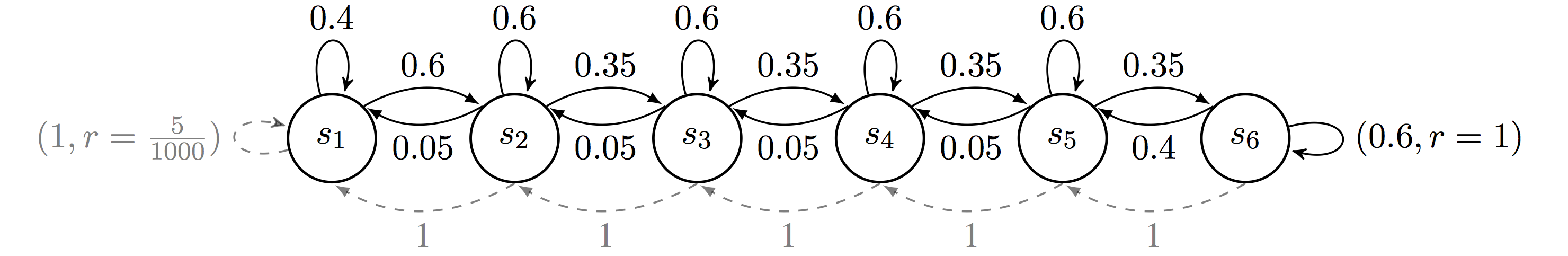}
    \caption{The 6 state RiverSwim environment from \citet{osband2013more}. Here, state $s_1$ has a small reward while state $s_6$ has a large reward. The dotted arrows represent the action ``left" and deterministically move the agent to the left. The continuous arrows denote the action ``right" and move the agent to the right with a relatively high probability. This action represents swimming against the current, hence the name RiverSwim.}
    \label{fig:riverswim}
\end{figure}

\begin{figure}[t]
  \begin{subfigure}{0.5\textwidth}
    \includegraphics[width=\linewidth]{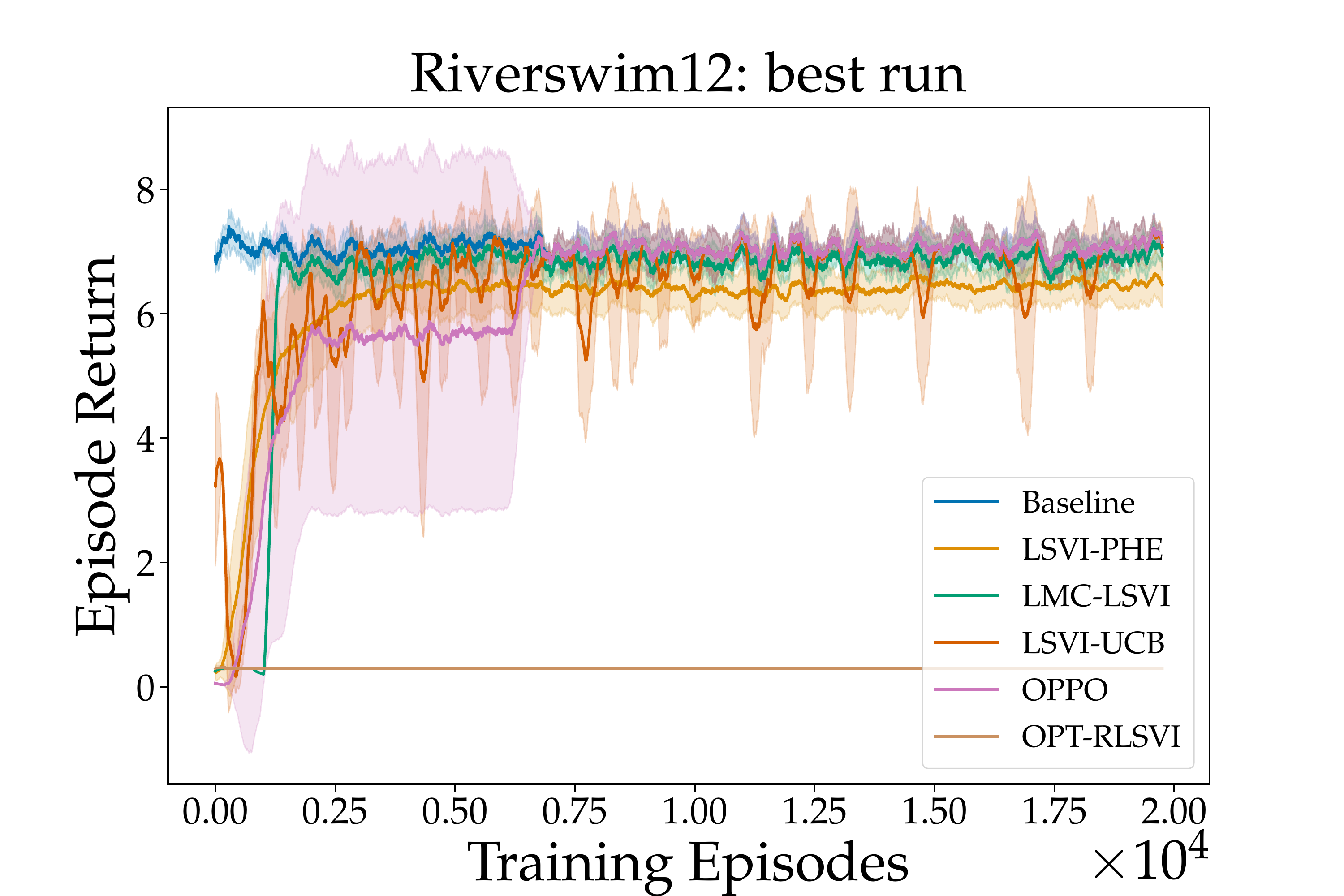}
    \caption{Episode returns for best runs} \label{fig:riverswim12_best}
  \end{subfigure}%
  \hspace*{\fill}   %
  \begin{subfigure}{0.5\textwidth}
    \includegraphics[width=\linewidth]{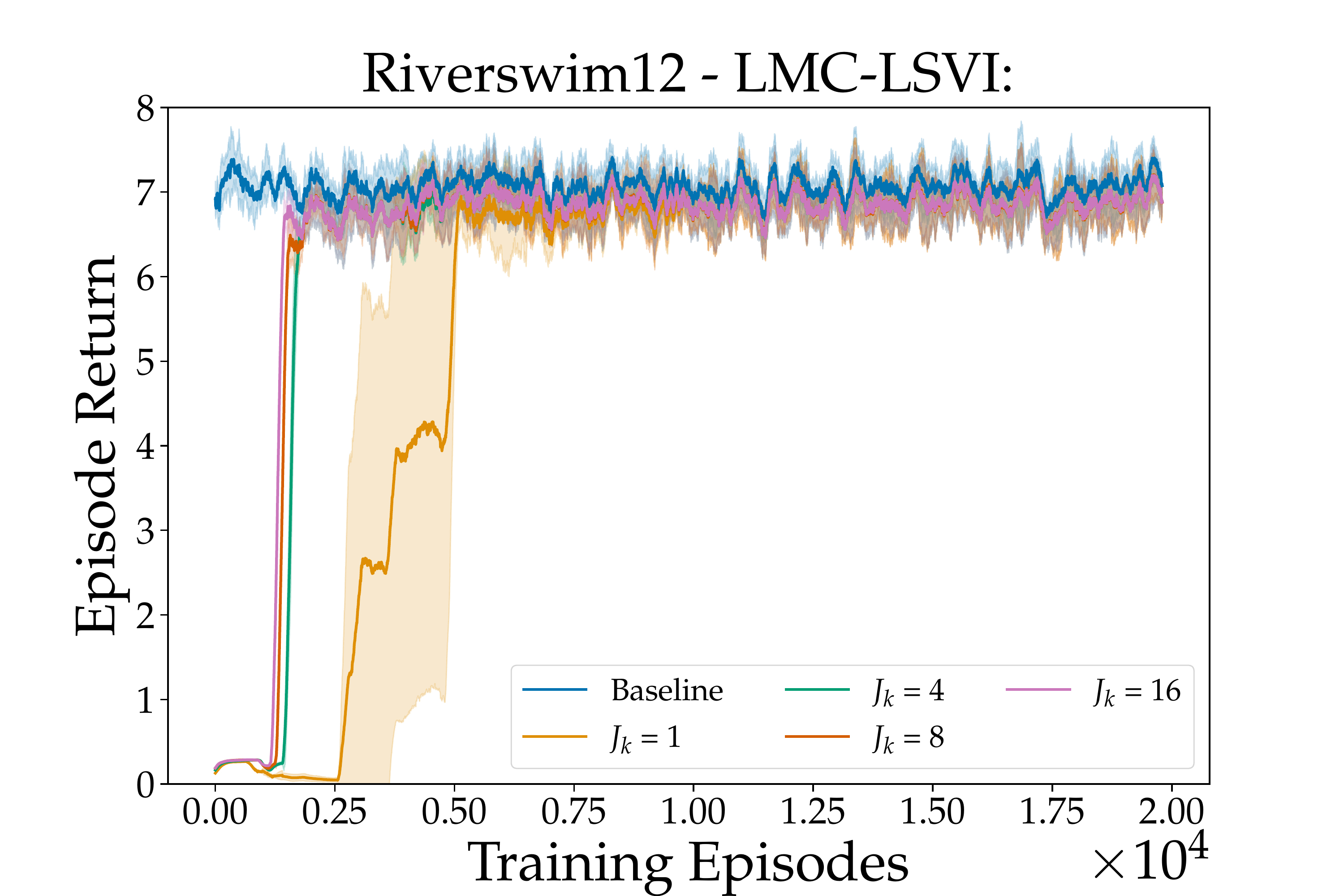}
    \caption{Episode returns for different $J_k$} \label{fig:riverswim12_J_k_sweep}
  \end{subfigure}
\caption{Experiment in the Riverswim environment [Strehl \& Littman, 2008] with chain length 12. (a) Mean episode returns for best runs, after hyperparameter optimization. (b) Mean episode returns for different values of $J_k$ in LMC-LSVI.} \label{fig:riverswim12}
\end{figure}

\subsection{$N$-Chain}\label{sec:nchain}

There are two kinds of input features $\phi_{1\text{hot}}(s_t) = (\mathbf{1} \{x=s_t\})$ and $\phi_{\text{therm}}(s_t) = (\mathbf{1} \{x \le s_t\})$ in $\{0,1\}^N$. 
\citet{osband2016deep} found that $\phi_{\text{therm}}(s_t)$ has lightly better generalization.
So following \citet{osband2016deep}, we use $\phi_{\text{therm}}(s_t)$ as the input features.

\begin{figure*}[h]
    \centering
    \includegraphics[width=\textwidth]{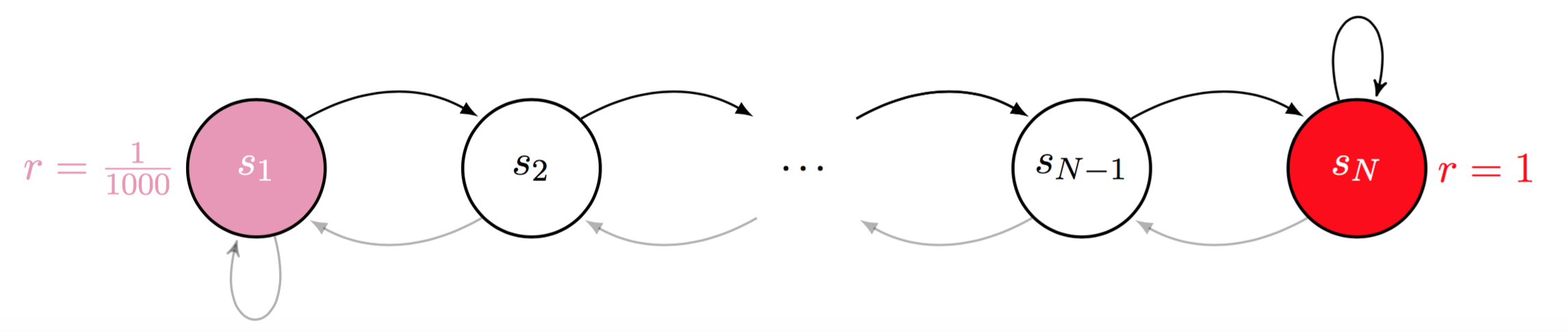}
    \caption{N-Chain environment  \citet{osband2016deep}.}
    \label{fig:nchain}
\end{figure*}

For both DQN and \algnameDeep, the Q function is parameterized with a multi-layer perceptron (MLP). 
The size of the hidden layers in the MLP is $[32, 32]$, and $ReLU$ is used as the activation function.
Both algorithms are trained for $10^5$ steps with an experience replay buffer of size $10^4$.
We measure the performance of each algorithm by the mean return of the last $10$ test episodes.
The mini-batch size is $32$, and we update the target network for every $100$ steps. 
The discount factor $\gamma = 0.99$.

DQN is optimized by Adam, and we do a hyper-parameter sweep for the learning rate with grid search.
\algnameDeep is optimized by Adam SGLD with $\alpha_1 = 0.9$, $\alpha_2 = 0.99$, and $\lambda_1 = 10^{-8}$.
For \algnameDeep, besides the learning rate, we also sweep the bias factor $a$, the inverse temperature $\beta_k$, and the update number $J_k$. 
We list the details of all swept hyper-parameters in Table \ref{tb:nchain}.

\begin{table}[htbp]
\caption{The swept hyper-parameter in $N$-Chain.}
\label{tb:nchain}
\centering
\begin{sc}
\begin{tabular}{lcc}
\toprule
{Hyper-parameter} & {Values} \\
\midrule
learning rate $\eta_k$ & \{$10^{-1}$, $3 \times 10^{-2}$, $10^{-2}$, $3 \times 10^{-3}$, $10^{-3}$, $3 \times 10^{-4}$, $10^{-4}$\} \\
bias factor $a$ & \{$1.0$, $0.1$, $0.01$\} \\
inverse temperature $\beta_k$ & \{$10^{16}$, $10^{14}$, $10^{12}$, $10^{10}$, $10^{8}$\} \\
update number $J_k$ & \{$1$, $4$, $16$, $32$ \}\\
\bottomrule
\end{tabular}
\end{sc}
\end{table}

\subsection{Atari}

\subsubsection{Experiment Setup}\label{sec:atari}

We implement DQN, Bootstrapped DQN, Noisy-Net and \algnameDeep with tianshou framework \citep{tianshou}. For the other five baseline algorithms, we take the results from DQN Zoo \citep{dqnzoo2020github}\footnote{\url{https://github.com/deepmind/dqn_zoo/blob/master/results.tar.gz}}. 
Both DQN and \algnameDeep use the same network structure, following the same observation process as in \citet{mnih2015human}.
To be specific, the observation is $4$ stacked frames and is reshaped to $(4, 84, 84)$.
The raw reward is clipped to $\{-1, 0, +1\}$ for training, but the test performance is based on the raw reward signals.

Unless mentioned explicitly, we use most of the default hyper-parameters from tianshou's DQN \footnote{\url{https://github.com/thu-ml/tianshou/blob/master/examples/atari/atari_dqn.py}}.
For each task, there is just one training environment to reduce the exploration effect of training in multiple environments.
There are $5$ test environments for a robust evaluation.
The mini-batch size is $32$. The buffer size is $1M$. The discount factor is $0.99$.

For DQN, we use the $\epsilon$-greedy exploration strategy, where $\epsilon$ decays linearly from $1.0$ to $0.01$ for the first $1M$ training steps and then is fixed as $0.05$. During the test, we set $\epsilon=0$. The DQN agent is optimized by Adam with a fixed learning rate $10^{-4}$.

For our algorithm \algnameDeep, since a large $J_k$ significantly increases training time, so we set $J_k=1$ so that all experiments can be finished in a reasonable time.
The \algnameDeep agent is optimized by Adam SGLD with learning rate $\eta_k=10^{-4}$, $\alpha_1 = 0.9$, $\alpha_2 = 0.99$, and $\lambda_1 = 10^{-8}$.
We do a hyper-parameter sweep for the bias factor $a$ and the inverse temperature $\beta_k$, as listed in Table \ref{tb:atari}

\begin{table}[htbp]
\caption{The swept hyper-parameter in Atari games.}
\label{tb:atari}
\centering
\begin{sc}
\begin{tabular}{lcc}
\toprule
{Hyper-parameter} & {Values} \\
\midrule
bias factor $a$ & $\{1.0, 0.1, 0.01\}$ \\
inverse temperature $\beta_k$ & $\{10^{16}, 10^{14}, 10^{12}\}$ \\
\bottomrule
\end{tabular}
\end{sc}
\end{table}

\subsubsection{Additional Results}\label{sec:atari_ablation}

Our implementation of \algnameDeep applies double Q networks by default.
In Figure \ref{fig:atari_lmc_nodouble}, we compare the performance of \algnameDeep with and without applying double Q functions.
The performance of \algnameDeep is only slightly worse without using double Q functions, proving the effectiveness of our approach.
Similarly, there is no significant performance difference for Langevin DQN \citep{dwaracherla2020langevin} with and without double Q functions, as shown in Figure \ref{fig:atari_langevin_nodouble}.

\begin{figure*}[htb]
\centering
\includegraphics[width=\textwidth]{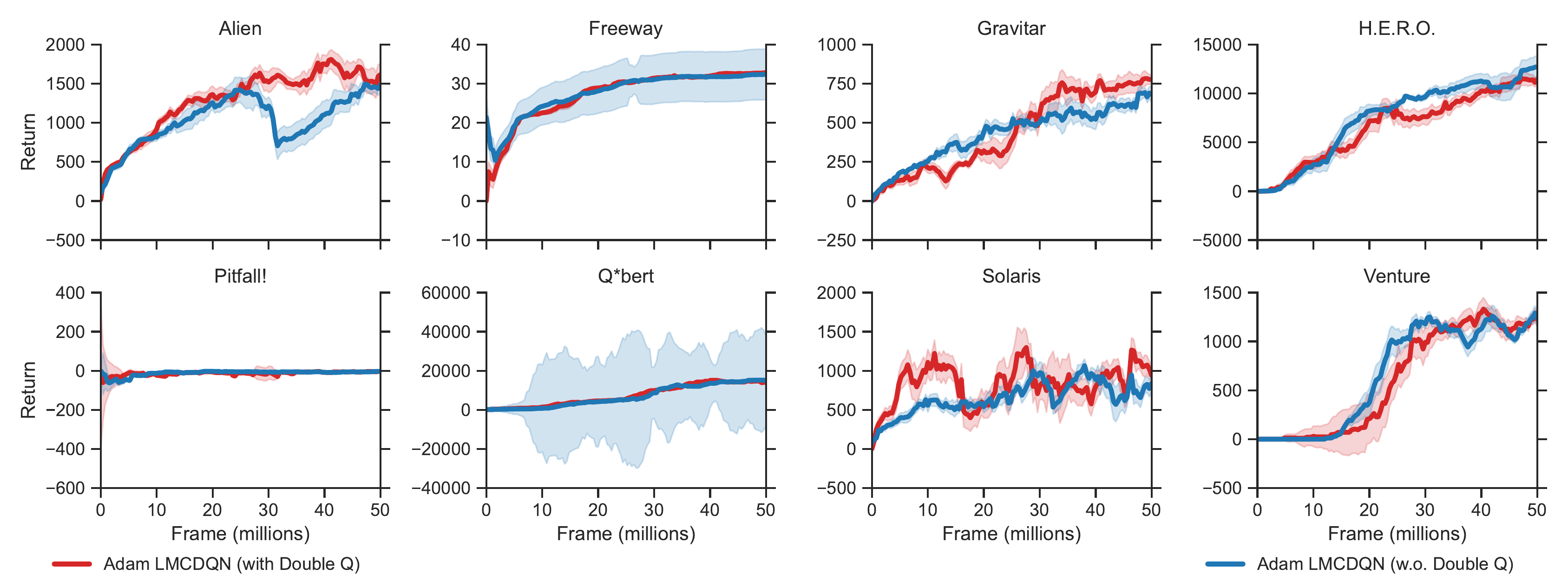}
\caption{The return curves of \algnameDeep in Atari over 50 million training frames, with and without double Q functions. Solid lines correspond to the median performance over 5 random seeds, while shaded areas correspond to $90\%$ confidence interval. The performance of \algnameDeep is only slightly worse without using double Q functions, proving the effectiveness of our approach.}
\label{fig:atari_lmc_nodouble}
\end{figure*}

\begin{figure*}[ht]
\centering
\includegraphics[width=\textwidth]{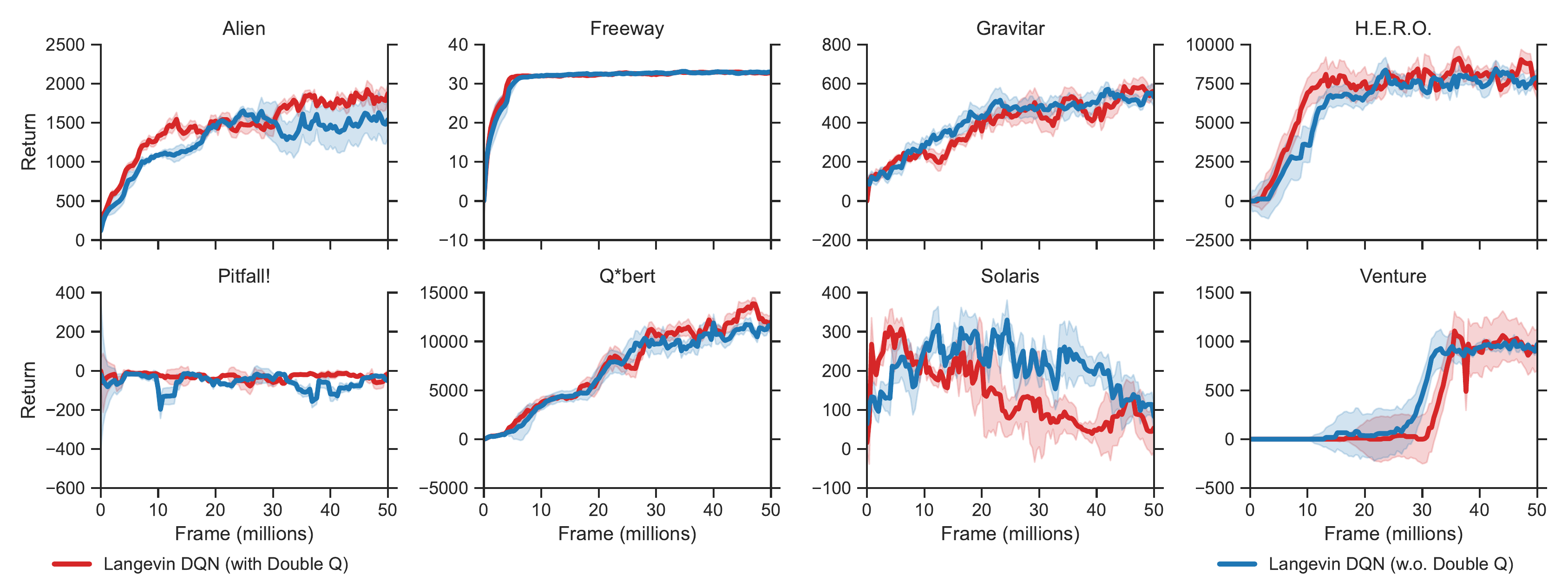}
\caption{The return curves of Langevin DQN in Atari over 50 million training frames, with and without double Q functions. Solid lines correspond to the median performance over 5 random seeds, while shaded areas correspond to $90\%$ confidence interval. There is no significant performance improvement by applying double Q functions in Langevin DQN.}
\label{fig:atari_langevin_nodouble}
\end{figure*}

Moreover, we also compare Langevin DQN with our algorithm \algnameDeep in Figure \ref{fig:atari_langevin_double}. Both algorithms incorporate the double Q trick by default. Overall, Adam LMCDQN usually outperforms Langevin DQN in sparse-reward hard-exploration games, such as Gravitar, Solaris, and Venture, while in dense-reward hard-exploration games, such as Alien, H.E.R.O and Qbert, Adam LMCDQN and Langevin DQN achieve similar performance.

\begin{figure*}[ht]
\centering
\includegraphics[width=\textwidth]{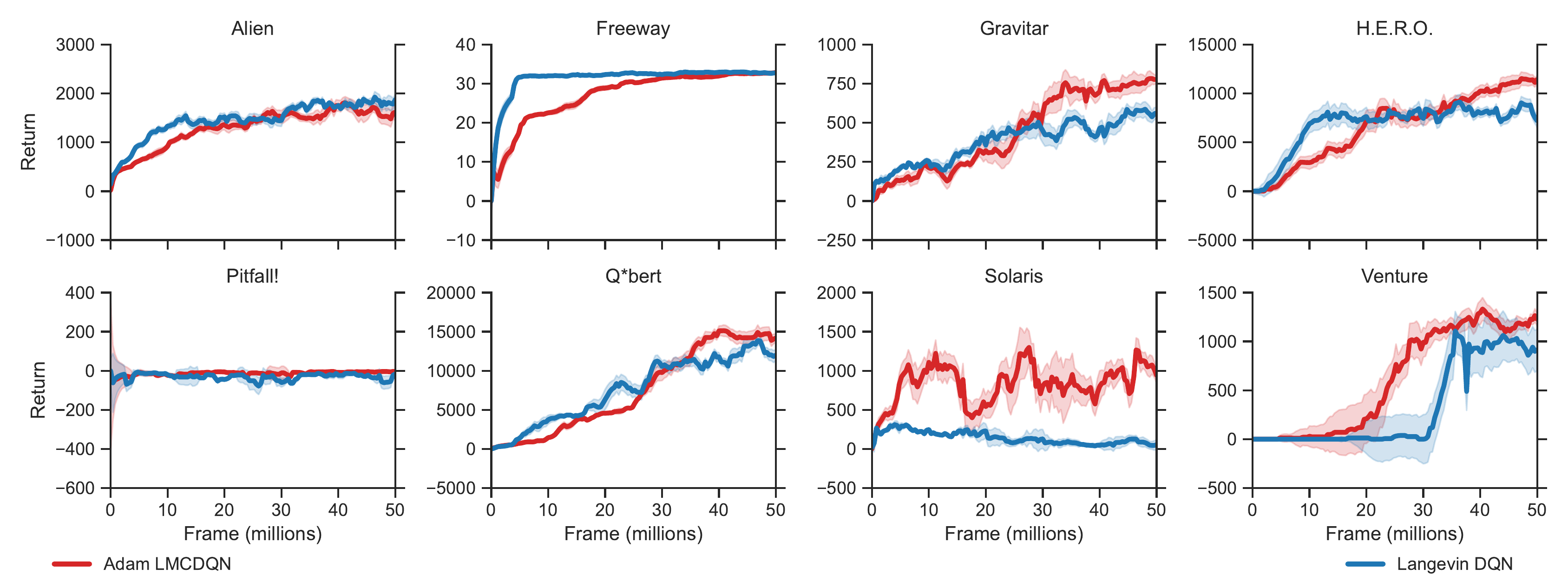}
\caption{The return curves of \algnameDeep and Langevin DQN in Atari over 50 million training frames. Solid lines correspond to the median performance over 5 random seeds, while shaded areas correspond to $90\%$ confidence interval.}
\label{fig:atari_langevin_double}
\end{figure*}

\end{document}